%% file: main.tex
\documentclass{article}

\PassOptionsToPackage{numbers,sort&compress}{natbib}
\usepackage[final,main]{neurips_2026}

\usepackage[utf8]{inputenc}
\usepackage[T1]{fontenc}
\usepackage{amsmath}
\usepackage{amssymb}
\usepackage{amsthm}
\usepackage{booktabs}
\usepackage{enumitem}
\usepackage{graphicx}
\usepackage{microtype}
\usepackage{xcolor}
\usepackage{hyperref}
\usepackage{url}
\usepackage{tikz}
\usetikzlibrary{arrows.meta,positioning}

\title{Loss is Not Behavior: A Unified Output-Space Analysis of Gradient-Based Machine Unlearning}

\author{%
  Xingjian Zhao \quad Mohammad Mohammadi Amiri \quad Malik Magdon-Ismail \\
  Department of Computer Science, Rensselaer Polytechnic Institute, Troy, NY 12180, USA \\
  \texttt{\{zhaox8,mohamm11\}@rpi.edu}, \texttt{magdon@cs.rpi.edu}
}

\newcommand{\D}{\mathcal{D}}
\newcommand{\Dr}{\mathcal{D}_{r}}
\newcommand{\Df}{\mathcal{D}_{f}}
\newcommand{\Robj}{\mathcal{R}}
\newcommand{\Fobj}{\mathcal{F}}
\newcommand{\thetafull}{\boldsymbol{\theta}^{\star}}
\newcommand{\thetaret}{\boldsymbol{\theta}_{r}^{\star}}
\newcommand{\NS}{Newton Step}
\newcommand{\blank}{\textemdash}

\newtheorem{theorem}{Theorem}
\newtheorem{proposition}{Proposition}
\newtheorem{lemma}{Lemma}
\newtheorem{corollary}{Corollary}

\begin{document}

\maketitle

\begin{abstract}
Gradient-based machine unlearning methods typically ``unlearn'' by optimizing forget and retain losses, but similar losses do not necessarily imply similar output-space behavior. 
Prior work has shown that
%loss agreement can mask output-space deviation of unlearned models from retraining.
%Thus, 
high forget loss alone does not certify that unlearning has produced retraining-like output behavior.
We therefore ask: when a gradient-based unlearner differs from the model that would have been learned on only the retain data, what structure does the resulting output-space gap have? 
We derive closed-form expressions for this gap in linear regression, with an extension to nonlinear models via a generalized Gauss-Newton and local linearity approximations. 
Our analysis puts various gradient-based methods under a unified framework parametrized by geometry, reweighting, and projection. 
Two leverage matrices emerge that capture the forget-retain entanglement under the retrained model geometry: forget self-leverage and retain cross-leverage. 
We show both theoretically and empirically that these two leverage matrices drive the impact of the forget residual on the forget and retain outputs, and existing methods account for this geometry only partially.
Our framework yields
%a potential unified comparison of gradient-based unlearners and 
precomputable leverage diagnostics for auditing forget sets to identify what type of gradient-based unlearning is likely to be successful.
\end{abstract}

\section{Introduction}
Modern machine learning increasingly uses sensitive, copyrighted, or safety-critical data. As a result,
machine unlearning, the post-hoc removal of a data's influence on the learned model, has become a practical and regulatory concern \citep{gdpr2016,cao2015towards}. 
The gold standard for machine unlearning is conceptually simple: retrain the model on the retained data after deleting the forget set. 
For large models and frequent deletion requests, however, exact retraining is usually too expensive, so practical unlearning methods seek more efficient approximations to this retrained gold standard model.

One family of approximations uses Newton-style updates to move directly toward the retrained optimum, but these methods depend on Hessian information and are often not scalable to large models \citep{koh2017understanding,guo2019certified,zhao2026win}. 
The dominant practical alternative is gradient-based unlearning: gradient ascent (GA) increases loss on the forget set, while GradDiff, NPO, SimNPO, IMU, and projected variants add retain losses, per-sample weights, or projections to preserve utility \citep{jang2023knowledge,liu2022continual,zhang2024negative,fan2024simplicity,fan2025imu,zhou2025geometric}. 
These methods are usually tuned and compared through forget and retain losses.

Recent work has established that loss-space, parameter-space agreement with retraining, and even finite output-imitation tests are insufficient evaluation criteria for unlearning \citep{lee2025distillation,yu2025impossibility}. 
We verify this empirically in Figure~\ref{fig:loss_not_behavior}: at matched forget loss or similar weights, different gradient-based unlearners can have very different output-space behavior and forget performance, and they can all differ substantially from the retrained model.
This motivates output-space deviation from retraining as the evaluation surface, which leads to the open question we study here: what governs the structure of this deviation for gradient-based unlearners?
We show that the answer is a leverage geometry induced by the retrained model, and existing gradient-based variants control only parts of this geometry.
% projections can reduce retain cross-leverage, while reweighting changes the effective residual without eliminating the underlying forget self-leverage.

% Intuitively, \(K_{ff}\) captures the uniqueness of forget samples under the retrained model geometry, while \(K_{rf}\) captures how strongly forget-directed changes transfer into retain outputs.

\begin{figure}[t]
\centering
\includegraphics[width=\linewidth]{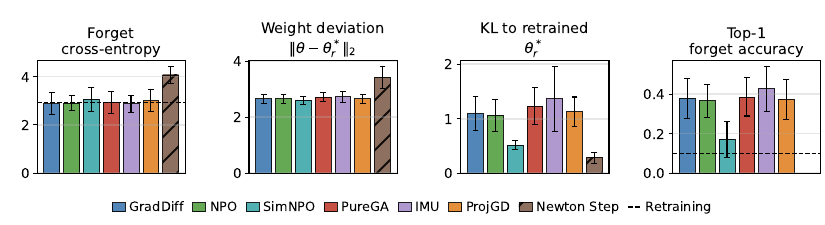}
\caption{\textbf{Loss and weights are not behavior.} MNIST\citep{lecun2010mnist} with multi-layer perceptron (MLP) model for class forgetting with $10$ seeds for various gradient-based methods and a Newton step unlearning. From left to right: forget cross-entropy (matched to the retrained reference, dashed), weight distance $\|\boldsymbol{\theta}-\thetaret\|_2$, KL$(p_{\thetaret}\|p_{\boldsymbol{\theta}})$ on $\Df$, and Top-1 forget accuracy. Setup is detailed in Appendix~\ref{app:exp_setup}.}
\label{fig:loss_not_behavior}
\end{figure}

Our key contributions are:
\begin{enumerate}[leftmargin=1.5em,itemsep=0.2em]
\item \textbf{Unified gap formula.} We derive closed-form output-space gaps for gradient-based unlearners, exact in linear regression and extended to nonlinear models under a local generalized Gauss-Newton (GGN) approximation \citep{schraudolph2002fast}. This gap shows a fundamental objective mismatch between common gradient-based methods and the gold standard.
\item \textbf{Two-leverage decomposition.} Our derivation reveals two important components of the gap: forget self-leverage, which captures the uniqueness of forget samples under the retrained model geometry, and retain cross-leverage, which captures how strongly forget-directed changes transfer into retain outputs.
\item \textbf{Forget-set diagnostics.} We validate each method's gap formula on MNIST and CIFAR-10, derive a closed-form sufficient rule for $\rho$, and empirically validate precomputable leverage diagnostics for LLM scale.
\end{enumerate}

\section{Related Work}
\subsection{Machine unlearning methods}
Early formulations of machine unlearning formalized data deletion through exact retraining, certified removal, or influence-style approximations to retraining \citep{cao2015towards,koh2017understanding,guo2019certified}. 
Newton-style methods are attractive because they target the retrained optimum directly, but they require Hessian or inverse-Hessian operations and rely on local approximations that can be costly or fragile at modern scale \citep{golatkar2020eternal,zhao2026win}. 
A related line linearizes the network around a reference point (pretrained or core-trained weights) and forgets exactly in the resulting linear, typically squared-loss, model. Examples include NTK-based scrubbing \citep{golatkar2020forgetting}, mixed-privacy forgetting \citep{golatkar2021mixed}, and parameter-efficient variants \citep{li2023fastntk}, made practical by fast empirical-NTK approximations \citep{wei2022more,mohamadi2023fast}. 
Within that linear model they are exact retraining, i.e.\ the \NS{} row of Table~\ref{tab:unified_gaps}, so their output gap to $\thetaret$ is governed by linearization error and the locality limits of Section~\ref{sec:nonlinear}. 
While Newton-style methods are limited in practice, the retrained-model Hessian they use as a preconditioner is the same geometric object that, as we show, implicitly governs the output-space gap of gradient-based unlearners.

Most practical large language model (LLM) unlearning methods instead optimize surrogate objectives. 
GA on the forget loss is the simplest example \citep{jang2023knowledge}; GradDiff adds retain descent to improve utility preservation \citep{liu2022continual}; 
NPO and SimNPO stabilize GradDiff-style unlearning through preference-style losses and reweighting \citep{zhang2024negative,fan2024simplicity}; 
IMU introduces influence-guided sample weights \citep{fan2025imu}; and GU-style projected methods reduce retain drift by projecting updates away from retain-gradient directions \citep{zhou2025geometric}. 
Another LLM baseline is Representation Misdirection Unlearning (RMU)\citep{li2024wmdp}, but it operates on intermediate representations rather than the output-loss gradient-based family analyzed in this work.

\subsection{Limits of loss-space and parameter-space evaluation}
\label{sec:rw_limits}
Several lines of existing work show from different angles that loss, output imitation on a limited evaluation set, and parameter agreement are not enough to certify unlearning. 
In general prediction problems, predictive multiplicity shows that competing models with similar loss can assign conflicting predictions \citep{marx2020predictive}. 
In unlearning, robustness studies show that superficial ``forgetting'' can be reversed by relearning or adversarial evaluation \citep{lee2025distillation,lucki2024adversarial,hu2024jogging,lynch2024eight,deeb2024unlearning}. 
Other work argues that parameter-space or algorithmic equivalence criteria are themselves delicate: verification-error analyses motivate proximity to retraining, while auditable definitions and retrain-equivalence impossibility results expose limits of parameter- or path-oblivious formulations \citep{thudi2022unrolling,thudi2022necessity,yu2025impossibility}. 
In the adjacent setting of catastrophic forgetting, \citet{shenfeld2025rlrazor} show that the forward KL to the original model on the new task predicts the loss of prior-task performance. We share the output-space viewpoint, but measure per-example gaps to the \emph{retrained} reference and decompose them by forget--retain geometry. 
% We adopt this literature as motivation rather than novelty: our goal is to characterize the structure of the output-space gap that remains.

\subsection{Retain-forget entanglement and geometry}
The closest work to ours studies why gradient-based unlearning fails when forget and retain data interact. 
\citet{mavrothalassitis2025ascent} attribute failures of ascent-based methods to statistical dependence between forget and retain sets. 
Other recent analyses study model resistance, stability, semantic entanglement, gradient conflict, and retain-gradient geometry \citep{chang2026some,cheng2026machine,zhou2025geometric,asif2025ofmu,xiao2026modeling}. 
These works diagnose retain-forget interaction through loss, gradient alignment, empirical instability, coherence, or semantic similarity. 
The main distinction of our work is the focus on output space: if an unlearned model is meant to approximate gold-standard retraining, then its outputs, not only its losses or parameters, are the relevant behavioral object.
Our unified analysis reveals the matrices through which this interaction propagates to outputs: forget self-leverage and retain cross-leverage under the retrained model geometry.

\section{Problem Setup}
\label{sec:setup}

In this section, we formally define the unlearning and the gold-standard retraining objectives.
We consider a training set $\D = \Dr \cup \Df$ with $|\D| = n$ and $|\Df| = m$, where $\Df$ is the \emph{forget set} whose influence is to be removed and $\Dr = \D \setminus \Df$ is the \emph{retain set}. Each sample $(\boldsymbol{x}_i, \boldsymbol{y}_i)$ pairs an input $\boldsymbol{x}_i$ with a label $\boldsymbol{y}_i$. A model $f(\boldsymbol{\theta}, \boldsymbol{x}) \in \mathbb{R}^c$ parameterized by $\boldsymbol{\theta} \in \mathbb{R}^d$ is trained on $\D$ under the $\ell_2$-regularized objective
\begin{equation}
    \mathcal{L}(\boldsymbol{\theta}) \;=\; \frac{1}{n}\sum_{i=1}^{n} \ell(\boldsymbol{\theta}, \boldsymbol{x}_i, \boldsymbol{y}_i) + \frac{\lambda}{2}\|\boldsymbol{\theta}\|^2,
    \qquad \lambda > 0,
    \label{eq:full_obj}
\end{equation}
where $\lambda$ is the regularization strength and $\ell(\cdot)$ is the loss function. The full-data optimum is $\thetafull = \arg\min_{\boldsymbol{\theta}} \mathcal{L}(\boldsymbol{\theta})$.

\paragraph{The gold standard.} The gold-standard unlearned model is the minimizer of an analogous objective restricted to $\Dr$. We adopt the full-data-normalized retain objective
\begin{equation}
    \Robj(\boldsymbol{\theta})
    \;:=\;
    \frac{1}{n}\sum_{i \in \Dr} \ell(\boldsymbol{\theta}, \boldsymbol{x}_i, \boldsymbol{y}_i)
    + \frac{\lambda}{2}\|\boldsymbol{\theta}\|^2,
    \qquad
    \thetaret \;:=\; \arg\min_{\boldsymbol{\theta}} \Robj(\boldsymbol{\theta}),
    \label{eq:retain_obj}
\end{equation}
which has the same minimizer as the standard $\frac{1}{n-m}$-normalized retain objective $\frac{1}{n-m}\sum_{i \in \Dr} \ell(\boldsymbol{\theta}, \boldsymbol{x}_i, \boldsymbol{y}_i) + \frac{\lambda_r}{2}\|\boldsymbol{\theta}\|^2$ with a scaled regularization strength $\lambda_r = \frac{n}{n-m}\lambda$. 
We refer to $\thetaret$ as the \emph{gold standard}: any unlearning method ideally returns a model whose induced behavior on arbitrary inputs matches that of $\thetaret$. Define also the full-data-normalized forget loss
\begin{equation}
    \Fobj(\boldsymbol{\theta}) \;:=\; \frac{1}{n}\sum_{j \in \Df} \ell(\boldsymbol{\theta}, \boldsymbol{x}_j, \boldsymbol{y}_j),
    \label{eq:forget_obj}
\end{equation}
so that $\mathcal{L} = \Robj + \Fobj$.

\paragraph{Gradient-objective unlearning.} Most existing optimization-based unlearners minimize a controlled retain--forget trade-off
\begin{equation}
    \boldsymbol{\theta}_{\rho}^{\mathrm{GDiff}}
    \;:=\; \arg\min_{\boldsymbol{\theta}}\;\Robj(\boldsymbol{\theta}) - \rho\,\Fobj(\boldsymbol{\theta}),
    \qquad \rho > 0,
    \label{eq:gdiff_obj}
\end{equation}
recovering retraining at $\rho = 0$ and adding active forget-ascent for $\rho > 0$. Variants such as NPO and SimNPO replace $\Fobj$ by a per-sample reweighted version, IMU incorporates per-sample influence weighting on the forget term, and Pure GA omits $\Robj$ entirely. 
Newton-style updates instead approximate $\thetaret$ in a single second-order step. Throughout, we treat $\thetaret$ as the reference solution and ask how each method's output behavior deviates from it. For nonconvex models the minimizers in~\eqref{eq:retain_obj} and~\eqref{eq:gdiff_obj} are not unique. We read them as local stationary points. In our MNIST and CIFAR-10 experiments $\thetaret$ is the warm-start retrain from $\thetafull$ (on TOFU, the released retain model), and in the fixed-point validation of Section~\ref{sec:exp_fixedpoint} each method's solution is the stationary point on the branch through $\thetaret$. For a parameter $\boldsymbol{\theta}$ and evaluation inputs $\boldsymbol{X}_a$, we call $f(\boldsymbol{\theta}, \boldsymbol{X}_a) - f(\thetaret, \boldsymbol{X}_a)$ the \emph{output-space gap} of $\boldsymbol{\theta}$ on $\boldsymbol{X}_a$, and summarize its size by per-coordinate RMS or output KL.

\section{Output-Space Gap Analysis}
\label{sec:analysis}
In this section, we derive closed-form expressions for the output-space gap between gradient-based unlearners and the gold-standard retrained model $\thetaret$, first under a linear squared-loss setting and then under a local nonlinear approximation.
Our following analysis reveals the key role of the retrained-model-geometry-based leverage matrices in structuring this gap, and it subsumes various gradient-based methods under a unified framework parametrized by geometry, reweighting, and projection.

\subsection{Linear squared-loss analysis}
\label{sec:linear}

We first analyze the output-space gap in a linear regression setting, which admits a closed-form solution and clean geometric interpretation that reveals two key factors influencing the gap.
To make the gap explicit, we begin with the squared-loss linear setting $f(\boldsymbol{\theta}, \boldsymbol{x}) = \boldsymbol{x}^\top \boldsymbol{\theta}$, for $\boldsymbol{x} \in \mathbb{R}^d$, $\boldsymbol{\theta} \in \mathbb{R}^d$, 
and $\ell(\boldsymbol{\theta}, \boldsymbol{x}, y) = \tfrac{1}{2}(y - \boldsymbol{x}^\top\boldsymbol{\theta})^2$. Let $\boldsymbol{X}_r \in \mathbb{R}^{(n-m)\times d}$ and $\boldsymbol{X}_f \in \mathbb{R}^{m\times d}$ be the retain and forget data matrices, respectively, with labels $\boldsymbol{y}_r \in \mathbb{R}^{n-m}, \boldsymbol{y}_f \in \mathbb{R}^{m}$.
The full-data objective is
\begin{equation}
    \mathcal{L}(\boldsymbol{\theta})
    =
    \frac{1}{2n}\|\boldsymbol{y}_r-\boldsymbol{X}_r\boldsymbol{\theta}\|_2^2
    + \frac{1}{2n}\|\boldsymbol{y}_f-\boldsymbol{X}_f\boldsymbol{\theta}\|_2^2
    + \frac{\lambda}{2}\|\boldsymbol{\theta}\|_2^2.
\end{equation}

Define the \emph{retain-Hessian} and the two natural \emph{forget residuals}, evaluated at the full-data optimum and the retrained optimum, respectively, as
\begin{equation}
    \boldsymbol{H}_r \;:=\; \tfrac{1}{n} \boldsymbol{X}_r^\top \boldsymbol{X}_r + \lambda \boldsymbol{I}_d,
    \qquad
    \boldsymbol{e}_f^+ \;:=\; \boldsymbol{X}_f \thetafull - \boldsymbol{y}_f,
    \qquad
    \boldsymbol{e}_f^- \;:=\; \boldsymbol{X}_f \thetaret - \boldsymbol{y}_f.
    \label{eq:H_r_def} 
\end{equation}
The residual $\boldsymbol{e}_f^+$ is precomputable from $\thetafull$ alone, while $\boldsymbol{e}_f^-$ is oracle-evaluable from the retrained model.
For any arbitrary evaluation data $\boldsymbol{X}_a$, we define the \emph{leverage matrix}
\begin{equation}
    \boldsymbol{K}_{af} \;:=\; \boldsymbol{X}_a \boldsymbol{H}_r^{-1} \boldsymbol{X}_f^\top,
    \label{eq:leverage}
\end{equation}
where as Theorem \ref{thm:retrain_linear_gap} shows, the leverage matrix $\boldsymbol{K}_{af}$ captures the influence of unlearning the forget samples on the output on arbitrary data $\boldsymbol{X}_a$ under the retrained model geometry.
Two special cases of the leverage matrix are (i) \emph{forget self-leverage} $\boldsymbol{K}_{ff} = \boldsymbol{X}_f \boldsymbol{H}_r^{-1} \boldsymbol{X}_f^\top$ : its diagonal entries capture the uniqueness of individual forget samples under this geometry, while its off-diagonal entries capture overlap among forget samples;
 and (ii) \emph{retain cross-leverage} $\boldsymbol{K}_{rf} = \boldsymbol{X}_r \boldsymbol{H}_r^{-1} \boldsymbol{X}_f^\top$\, which measures how forget-directed changes transfer into outputs on the retain set.

The two residuals $\boldsymbol{e}_f^+$ and $\boldsymbol{e}_f^-$ are not independent quantities; they are linked by an explicit invertible transformation through $\boldsymbol{K}_{ff}$, which we use throughout to translate between equivalent forms.

\begin{lemma}[Forget residual conversion]
\label{lem:residual_conversion}
Suppose $\boldsymbol{H}_r$ is positive definite. Then the two forget residuals defined in~\eqref{eq:H_r_def} are linked by an explicit invertible transformation:
\begin{equation}
    \boldsymbol{e}_f^- \;=\; \big(\boldsymbol{I}_m + \tfrac{1}{n}\boldsymbol{K}_{ff}\big)\,\boldsymbol{e}_f^+,
    \qquad
    \boldsymbol{e}_f^+ \;=\; \big(\boldsymbol{I}_m + \tfrac{1}{n}\boldsymbol{K}_{ff}\big)^{-1}\boldsymbol{e}_f^-.
    \label{eq:residual_conversion}
\end{equation}
In particular, $\boldsymbol{e}_f^+ = \boldsymbol{0} \iff \boldsymbol{e}_f^- = \boldsymbol{0}$.
\end{lemma}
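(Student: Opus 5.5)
We will prove the lemma by writing both optima through their normal equations and comparing them. In the linear squared-loss setting, the retrained optimum satisfies $\boldsymbol{H}_r\thetaret = \tfrac{1}{n}\boldsymbol{X}_r^\top\boldsymbol{y}_r$. The full-data optimum satisfies
\[
\big(\boldsymbol{H}_r + \tfrac{1}{n}\boldsymbol{X}_f^\top\boldsymbol{X}_f\big)\thetafull = \tfrac{1}{n}\boldsymbol{X}_r^\top\boldsymbol{y}_r + \tfrac{1}{n}\boldsymbol{X}_f^\top\boldsymbol{y}_f .
\]
Subtracting the first equation from the second eliminates the retain labels and gives
\[
\boldsymbol{H}_r(\thetafull-\thetaret) = -\tfrac{1}{n}\boldsymbol{X}_f^\top(\boldsymbol{X}_f\thetafull-\boldsymbol{y}_f) = -\tfrac{1}{n}\boldsymbol{X}_f^\top\boldsymbol{e}_f^+ .
\]
Positive definiteness of $\boldsymbol{H}_r$ then lets us write $\thetaret = \thetafull + \tfrac{1}{n}\boldsymbol{H}_r^{-1}\boldsymbol{X}_f^\top\boldsymbol{e}_f^+$.

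Next we multiply this identity by $\boldsymbol{X}_f$ and subtract $\boldsymbol{y}_f$ from both sides. This yields $\boldsymbol{e}_f^- = \boldsymbol{e}_f^+ + \tfrac{1}{n}\boldsymbol{X}_f\boldsymbol{H}_r^{-1}\boldsymbol{X}_f^\top\boldsymbol{e}_f^+$. Recognizing $\boldsymbol{K}_{ff}$ from \eqref{eq:leverage}, this is exactly the first identity in \eqref{eq:residual_conversion}.

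The only step needing care is the invertibility of $\boldsymbol{I}_m + \tfrac{1}{n}\boldsymbol{K}_{ff}$. Because $\boldsymbol{H}_r \succ 0$, its inverse is also positive definite. Hence $\boldsymbol{K}_{ff} = \boldsymbol{X}_f\boldsymbol{H}_r^{-1}\boldsymbol{X}_f^\top$ is symmetric positive semidefinite, even when $\boldsymbol{X}_f$ is rank-deficient or $m>d$. The eigenvalues of $\boldsymbol{I}_m + \tfrac{1}{n}\boldsymbol{K}_{ff}$ are therefore all at least $1$, so the matrix is invertible, and the second identity follows by applying its inverse.

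The equivalence $\boldsymbol{e}_f^+=\boldsymbol{0}\iff\boldsymbol{e}_f^-=\boldsymbol{0}$ is then immediate, since an invertible linear map sends a vector to zero exactly when that vector is zero. We also note that $\thetafull$ exists uniquely: its Hessian $\boldsymbol{H}_r + \tfrac{1}{n}\boldsymbol{X}_f^\top\boldsymbol{X}_f$ is positive definite, so the full-data normal equation has a unique solution. We do not expect any real obstacle; the proof is short algebra.
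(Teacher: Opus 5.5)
Your proposal is correct and follows essentially the same route as the paper: you subtract the retrain normal equation from the full-data one to get the parameter shift $\thetaret - \thetafull = \tfrac{1}{n}\boldsymbol{H}_r^{-1}\boldsymbol{X}_f^\top\boldsymbol{e}_f^+$, evaluate it at $\boldsymbol{X}_f$, and obtain invertibility from $\boldsymbol{K}_{ff}\succeq 0$. Your remark on the existence and uniqueness of $\thetafull$ is a small addition that the paper leaves implicit.
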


The proof is in Appendix~\ref{app:proof_residual_conversion}; it is a one-line consequence of the normal-equation difference between $\thetafull$ and $\thetaret$, evaluated at $\boldsymbol{X}_f$.

\begin{theorem}[Retrained output deviation]
\label{thm:retrain_linear_gap}
Suppose $\boldsymbol{H}_r$ is positive definite. Then for any evaluation data $\boldsymbol{X}_a$, the output gap of the retrained model from the full-data optimum admits two equivalent closed forms:
\begin{align}
    \boldsymbol{X}_a\thetaret - \boldsymbol{X}_a\thetafull
    &\;=\; \tfrac{1}{n}\,\boldsymbol{K}_{af}\,\boldsymbol{e}_f^+
    \label{eq:retrain_linear_gap}\\
    &\;=\; \tfrac{1}{n}\,\boldsymbol{K}_{af}\big(\boldsymbol{I}_m + \tfrac{1}{n}\boldsymbol{K}_{ff}\big)^{-1}\boldsymbol{e}_f^-,
    \label{eq:retrain_linear_gap_ret}
\end{align}
where the second form follows from the first by Lemma~\ref{lem:residual_conversion}.
\end{theorem}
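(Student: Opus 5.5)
The plan is to compare the first-order optimality conditions of the two strongly convex quadratics and then map the resulting parameter difference into output space through $\boldsymbol{X}_a$. Because $\boldsymbol{H}_r \succ \boldsymbol{0}$ and $\lambda>0$, both $\mathcal{L}$ and $\Robj$ have unique minimizers characterized by their normal equations. For the full data these read
\begin{equation*}
\boldsymbol{H}_r\thetafull + \tfrac{1}{n}\boldsymbol{X}_f^\top\boldsymbol{X}_f\thetafull = \tfrac{1}{n}\boldsymbol{X}_r^\top\boldsymbol{y}_r + \tfrac{1}{n}\boldsymbol{X}_f^\top\boldsymbol{y}_f,
\end{equation*}
and for the retained data they read $\boldsymbol{H}_r\thetaret = \tfrac{1}{n}\boldsymbol{X}_r^\top\boldsymbol{y}_r$.

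Subtracting the first equation from the second cancels the retain terms and leaves
\begin{equation*}
\boldsymbol{H}_r(\thetaret-\thetafull) = \tfrac{1}{n}\boldsymbol{X}_f^\top(\boldsymbol{X}_f\thetafull-\boldsymbol{y}_f) = \tfrac{1}{n}\boldsymbol{X}_f^\top\boldsymbol{e}_f^+.
\end{equation*}
The key bookkeeping choice is to keep $\boldsymbol{H}_r$ (not the full Hessian) on the left and move the forget curvature term to the right. That term then combines with $\boldsymbol{X}_f^\top\boldsymbol{y}_f$ into exactly the residual $\boldsymbol{e}_f^+$ evaluated at $\thetafull$.

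Inverting $\boldsymbol{H}_r$ gives
\begin{equation*}
\thetaret-\thetafull = \tfrac{1}{n}\boldsymbol{H}_r^{-1}\boldsymbol{X}_f^\top\boldsymbol{e}_f^+.
\end{equation*}
Left-multiplying by an arbitrary $\boldsymbol{X}_a$ and recognizing $\boldsymbol{K}_{af}=\boldsymbol{X}_a\boldsymbol{H}_r^{-1}\boldsymbol{X}_f^\top$ yields \eqref{eq:retrain_linear_gap}.

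For \eqref{eq:retrain_linear_gap_ret}, we substitute $\boldsymbol{e}_f^+ = (\boldsymbol{I}_m+\tfrac1n\boldsymbol{K}_{ff})^{-1}\boldsymbol{e}_f^-$ from Lemma~\ref{lem:residual_conversion}. This relies on invertibility of $\boldsymbol{I}_m+\tfrac1n\boldsymbol{K}_{ff}$, which holds because $\boldsymbol{K}_{ff}=\boldsymbol{X}_f\boldsymbol{H}_r^{-1}\boldsymbol{X}_f^\top$ is positive semidefinite, so every eigenvalue of $\boldsymbol{I}_m+\tfrac1n\boldsymbol{K}_{ff}$ is at least $1$. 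As a consistency check, and as an independent route to the lemma, taking $\boldsymbol{X}_a=\boldsymbol{X}_f$ in the first form gives $\boldsymbol{e}_f^- - \boldsymbol{e}_f^+ = \tfrac1n\boldsymbol{K}_{ff}\boldsymbol{e}_f^+$, which reproduces the lemma.

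There is no real obstacle here. The only delicate point is the sign and normalization convention: we must confirm that the $\tfrac1n$ scaling in $\boldsymbol{H}_r$ matches the full-data-normalized $\Robj$, and that the residual is defined as prediction minus label, so that the sign in \eqref{eq:retrain_linear_gap} comes out positive.
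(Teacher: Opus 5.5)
Your proof is correct and follows the paper's route. The paper likewise subtracts the two normal equations with $\boldsymbol{H}_r$ kept on the left, obtaining $\thetaret-\thetafull=\tfrac1n\boldsymbol{H}_r^{-1}\boldsymbol{X}_f^\top\boldsymbol{e}_f^+$ (established in the proof of Lemma~\ref{lem:residual_conversion}), then left-multiplies by $\boldsymbol{X}_a$ and substitutes the lemma for the $\boldsymbol{e}_f^-$ form; your $\boldsymbol{X}_a=\boldsymbol{X}_f$ consistency check is exactly how the paper proves that lemma.
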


The proof is at Appendix~\ref{app:proof_retrain_linear_gap}. Equation~\eqref{eq:retrain_linear_gap} reveals that $\boldsymbol{K}_{af}$ captures how much of the forget residual $\boldsymbol{e}_f^+$ is transferred to the change in output on $\boldsymbol{X}_a$ when we retrain on $\Dr$, and the $\boldsymbol{e}_f^-$ form is its dual through $\boldsymbol{K}_{ff}$.

\begin{theorem}[Linear GradDiff output gap]
\label{thm:linear_gap}
Suppose $\boldsymbol{H}_r - \tfrac{\rho}{n}\boldsymbol{X}_f^\top\boldsymbol{X}_f$ is positive definite (equivalently $\tfrac{\rho}{n}\lambda_{\max}(\boldsymbol{K}_{ff}) < 1$, where $\lambda_{\max}(\cdot)$ captures the maximum eigenvalue, see Appendix~\ref{app:proof_stability}). Then for any evaluation data $\boldsymbol{X}_a$, the output gap of GradDiff from $\thetaret$ has the closed form
\begin{equation}
    \boldsymbol{X}_a\boldsymbol{\theta}_{\rho}^{\mathrm{GDiff}} - \boldsymbol{X}_a\thetaret
    \;=\; \frac{\rho}{n}\,\boldsymbol{K}_{af}\Big(\boldsymbol{I}_m - \tfrac{\rho}{n}\boldsymbol{K}_{ff}\Big)^{-1}\Big(\boldsymbol{I}_m + \tfrac{1}{n}\boldsymbol{K}_{ff}\Big)\boldsymbol{e}_f^+.
    \label{eq:linear_gap_full}
\end{equation}
Equivalently, in terms of the retrained-model forget residual $\boldsymbol{e}_f^- = \boldsymbol{X}_f\thetaret - \boldsymbol{y}_f$,
\begin{equation}
    \boldsymbol{X}_a\boldsymbol{\theta}_{\rho}^{\mathrm{GDiff}} - \boldsymbol{X}_a\thetaret
    \;=\; \frac{\rho}{n}\,\boldsymbol{K}_{af}\Big(\boldsymbol{I}_m - \tfrac{\rho}{n}\boldsymbol{K}_{ff}\Big)^{-1}\boldsymbol{e}_f^-.
    \label{eq:linear_gap}
\end{equation}
\end{theorem}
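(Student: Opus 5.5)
We plan to write down the first-order conditions for $\boldsymbol{\theta}_{\rho}^{\mathrm{GDiff}}$ and $\thetaret$ and subtract them, as in the proof of Theorem~\ref{thm:retrain_linear_gap}. The GradDiff objective is $\Robj(\boldsymbol{\theta})-\rho\Fobj(\boldsymbol{\theta})$. Its Hessian is $\boldsymbol{H}_r-\frac{\rho}{n}\boldsymbol{X}_f^\top\boldsymbol{X}_f$, which is positive definite by hypothesis, so the minimizer exists, is unique, and is characterized by the stationarity condition
\begin{equation*}
\boldsymbol{H}_r\boldsymbol{\theta}-\tfrac{1}{n}\boldsymbol{X}_r^\top\boldsymbol{y}_r-\tfrac{\rho}{n}\boldsymbol{X}_f^\top(\boldsymbol{X}_f\boldsymbol{\theta}-\boldsymbol{y}_f)=\boldsymbol{0}.
\end{equation*}
The retrained optimum satisfies $\boldsymbol{H}_r\thetaret=\frac{1}{n}\boldsymbol{X}_r^\top\boldsymbol{y}_r$. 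Set $\boldsymbol{\delta}:=\boldsymbol{\theta}_{\rho}^{\mathrm{GDiff}}-\thetaret$ and subtract the two conditions. Using $\boldsymbol{X}_f\boldsymbol{\theta}_{\rho}^{\mathrm{GDiff}}-\boldsymbol{y}_f=\boldsymbol{X}_f\boldsymbol{\delta}+\boldsymbol{e}_f^-$, this gives
\begin{equation*}
\boldsymbol{H}_r\boldsymbol{\delta}=\tfrac{\rho}{n}\boldsymbol{X}_f^\top\big(\boldsymbol{X}_f\boldsymbol{\delta}+\boldsymbol{e}_f^-\big).
\end{equation*}

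Next we solve this fixed-point equation for the forget-output change $\boldsymbol{u}:=\boldsymbol{X}_f\boldsymbol{\delta}$. We have $\boldsymbol{\delta}=\frac{\rho}{n}\boldsymbol{H}_r^{-1}\boldsymbol{X}_f^\top(\boldsymbol{u}+\boldsymbol{e}_f^-)$. Multiplying on the left by $\boldsymbol{X}_f$ yields $(\boldsymbol{I}_m-\frac{\rho}{n}\boldsymbol{K}_{ff})\boldsymbol{u}=\frac{\rho}{n}\boldsymbol{K}_{ff}\boldsymbol{e}_f^-$, and therefore
\begin{equation*}
\boldsymbol{u}+\boldsymbol{e}_f^-=\big(\boldsymbol{I}_m-\tfrac{\rho}{n}\boldsymbol{K}_{ff}\big)^{-1}\boldsymbol{e}_f^-.
\end{equation*}
Substituting back and multiplying by $\boldsymbol{X}_a$ gives $\boldsymbol{X}_a\boldsymbol{\delta}=\frac{\rho}{n}\boldsymbol{K}_{af}(\boldsymbol{I}_m-\frac{\rho}{n}\boldsymbol{K}_{ff})^{-1}\boldsymbol{e}_f^-$, which is \eqref{eq:linear_gap}. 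Applying Lemma~\ref{lem:residual_conversion} to replace $\boldsymbol{e}_f^-$ by $(\boldsymbol{I}_m+\frac{1}{n}\boldsymbol{K}_{ff})\boldsymbol{e}_f^+$ then gives \eqref{eq:linear_gap_full}.

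The main obstacle is to show that $\boldsymbol{I}_m-\frac{\rho}{n}\boldsymbol{K}_{ff}$ is invertible and that this is equivalent to the stated positive-definiteness condition. We write $\boldsymbol{K}_{ff}=\boldsymbol{B}\boldsymbol{B}^\top$ with $\boldsymbol{B}=\boldsymbol{X}_f\boldsymbol{H}_r^{-1/2}$. Then
\begin{equation*}
\boldsymbol{H}_r-\tfrac{\rho}{n}\boldsymbol{X}_f^\top\boldsymbol{X}_f=\boldsymbol{H}_r^{1/2}\big(\boldsymbol{I}_d-\tfrac{\rho}{n}\boldsymbol{B}^\top\boldsymbol{B}\big)\boldsymbol{H}_r^{1/2}.
\end{equation*}
The nonzero eigenvalues of $\boldsymbol{B}^\top\boldsymbol{B}$ and $\boldsymbol{B}\boldsymbol{B}^\top$ coincide, so both sides of the equivalence reduce to $\frac{\rho}{n}\lambda_{\max}(\boldsymbol{K}_{ff})<1$. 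Under that condition $\boldsymbol{I}_m-\frac{\rho}{n}\boldsymbol{K}_{ff}\succ 0$, so it is invertible.

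As an alternative route, one can write $\boldsymbol{\delta}=\frac{\rho}{n}(\boldsymbol{H}_r-\frac{\rho}{n}\boldsymbol{X}_f^\top\boldsymbol{X}_f)^{-1}\boldsymbol{X}_f^\top\boldsymbol{e}_f^-$. A push-through (Woodbury) identity, $(\boldsymbol{H}_r-\frac{\rho}{n}\boldsymbol{X}_f^\top\boldsymbol{X}_f)^{-1}\boldsymbol{X}_f^\top=\boldsymbol{H}_r^{-1}\boldsymbol{X}_f^\top(\boldsymbol{I}_m-\frac{\rho}{n}\boldsymbol{K}_{ff})^{-1}$, gives the same result and serves as a cross-check.
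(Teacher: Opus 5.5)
Your proof is correct and follows the paper's argument. You subtract the two stationarity conditions to get $\boldsymbol{H}_r\boldsymbol{\delta}=\tfrac{\rho}{n}\boldsymbol{X}_f^\top(\boldsymbol{X}_f\boldsymbol{\delta}+\boldsymbol{e}_f^-)$, reduce to the $m\times m$ inverse by push-through, multiply by $\boldsymbol{X}_a$, and convert with Lemma~\ref{lem:residual_conversion}; your elimination of $\boldsymbol{u}=\boldsymbol{X}_f\boldsymbol{\delta}$ is that identity derived inline, and your ``alternative route'' is the paper's proof verbatim. Your $\boldsymbol{B}=\boldsymbol{X}_f\boldsymbol{H}_r^{-1/2}$ argument for the stability equivalence matches the paper's conjugation argument in Appendix~\ref{app:proof_stability}.
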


The proof is at Appendix~\ref{app:proof_linear_gap}.
The two forms ~\eqref{eq:linear_gap_full} and ~\eqref{eq:linear_gap} are equivalent by the residual conversion identity in Lemma~\ref{lem:residual_conversion}.
Eq. ~\eqref{eq:linear_gap} makes the structural decomposition cleanest: the retrained-model forget residual $\boldsymbol{e}_f^-$ is amplified by the inverse stability factor $(\boldsymbol{I}_m - \tfrac{\rho}{n}\boldsymbol{K}_{ff})^{-1}$, which captures how forget-directed perturbations are amplified under the retrained model geometry, and then transferred to the output gap on $\boldsymbol{X}_a$ by the leverage matrix $\boldsymbol{K}_{af}$.
Eq. ~\eqref{eq:linear_gap_full}, on the other hand, reveals that the gap is computable entirely from $\thetafull$ without any retraining since we can compute the retain Hessian $\boldsymbol{H}_r=\boldsymbol{H}-\boldsymbol{H}_f$, where $\boldsymbol{H}=\tfrac{1}{n}\boldsymbol{X}^\top\boldsymbol{X}+\lambda\boldsymbol{I}_d$ is the full Hessian and $\boldsymbol{H}_f=\tfrac{1}{n}\boldsymbol{X}_f^\top\boldsymbol{X}_f$ is the forget Hessian, making~\eqref{eq:linear_gap_full} an \emph{a priori} diagnostic that bounds the output-space gap on any candidate forget set $\Df$ before any unlearning is run.
We adopt the Eq. ~\eqref{eq:linear_gap} form in the following analysis for readability; all bounds and takeaways translate immediately to $\boldsymbol{e}_f^+$ via the same identity.

The two corollaries below summarize the practical norm bound and the separation between forget and retain output deviations.

\begin{corollary}[Stability of the effective residual]
\label{lem:stability}
Under the assumption of Theorem~\ref{thm:linear_gap},
\begin{equation}
    \big\| \boldsymbol{X}_a\boldsymbol{\theta}_{\rho}^{\mathrm{GDiff}} - \boldsymbol{X}_a\thetaret \big\|_2
    \;\le\;
    \frac{\rho}{n}\,\frac{\|\boldsymbol{K}_{af}\|_2\,\|\boldsymbol{e}_f^-\|_2}{1 - \tfrac{\rho}{n}\lambda_{\max}(\boldsymbol{K}_{ff})},
    \label{eq:linear_norm_bound}
\end{equation}
or equivalently, in terms of the precomputable full-data residual,
\begin{equation}
    \big\| \boldsymbol{X}_a\boldsymbol{\theta}_{\rho}^{\mathrm{GDiff}} - \boldsymbol{X}_a\thetaret \big\|_2
    \;\le\;
    \frac{\rho}{n}\,\frac{\|\boldsymbol{K}_{af}\|_2\,\big(1 + \tfrac{1}{n}\lambda_{\max}(\boldsymbol{K}_{ff})\big)\,\|\boldsymbol{e}_f^+\|_2}{1 - \tfrac{\rho}{n}\lambda_{\max}(\boldsymbol{K}_{ff})}.
    \label{eq:linear_norm_bound_full}
\end{equation}
\end{corollary}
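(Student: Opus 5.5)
The plan is to take the exact gap formula~\eqref{eq:linear_gap} from Theorem~\ref{thm:linear_gap} and apply submultiplicativity of the spectral norm:
\[
\big\|\boldsymbol{X}_a\boldsymbol{\theta}_{\rho}^{\mathrm{GDiff}}-\boldsymbol{X}_a\thetaret\big\|_2
\le \tfrac{\rho}{n}\,\|\boldsymbol{K}_{af}\|_2\,\big\|(\boldsymbol{I}_m-\tfrac{\rho}{n}\boldsymbol{K}_{ff})^{-1}\big\|_2\,\|\boldsymbol{e}_f^-\|_2 .
\]
The only nontrivial factor is the operator norm of the inverse stability factor. To control it, I first observe that $\boldsymbol{K}_{ff}=\boldsymbol{X}_f\boldsymbol{H}_r^{-1}\boldsymbol{X}_f^\top$ is symmetric positive semidefinite, because $\boldsymbol{H}_r\succ 0$ (it is $\tfrac1n\boldsymbol{X}_r^\top\boldsymbol{X}_r+\lambda\boldsymbol{I}_d$ with $\lambda>0$). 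Hence its eigenvalues lie in $[0,\lambda_{\max}(\boldsymbol{K}_{ff})]$.

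Next I will diagonalize $\boldsymbol{K}_{ff}=\boldsymbol{U}\boldsymbol{\Lambda}\boldsymbol{U}^\top$. Then $\boldsymbol{I}_m-\tfrac{\rho}{n}\boldsymbol{K}_{ff}$ has eigenvalues $1-\tfrac{\rho}{n}\mu_i$. Under the hypothesis $\tfrac{\rho}{n}\lambda_{\max}(\boldsymbol{K}_{ff})<1$, which Theorem~\ref{thm:linear_gap} states is equivalent to its positive-definiteness assumption, every such eigenvalue is at least $1-\tfrac{\rho}{n}\lambda_{\max}(\boldsymbol{K}_{ff})>0$. The inverse is therefore symmetric with eigenvalues in $(0,(1-\tfrac{\rho}{n}\lambda_{\max})^{-1}]$, so its spectral norm is exactly $1/(1-\tfrac{\rho}{n}\lambda_{\max}(\boldsymbol{K}_{ff}))$. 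Substituting this gives~\eqref{eq:linear_norm_bound}.

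For~\eqref{eq:linear_norm_bound_full} I will start instead from~\eqref{eq:linear_gap_full}, or equivalently substitute $\boldsymbol{e}_f^-=(\boldsymbol{I}_m+\tfrac1n\boldsymbol{K}_{ff})\boldsymbol{e}_f^+$ from Lemma~\ref{lem:residual_conversion}. The extra factor $\boldsymbol{I}_m+\tfrac1n\boldsymbol{K}_{ff}$ is symmetric with eigenvalues $1+\tfrac1n\mu_i\in[1,1+\tfrac1n\lambda_{\max}]$, so its spectral norm is $1+\tfrac1n\lambda_{\max}(\boldsymbol{K}_{ff})$. Submultiplicativity then gives the second bound. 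Since $\boldsymbol{I}_m-\tfrac{\rho}{n}\boldsymbol{K}_{ff}$ and $\boldsymbol{I}_m+\tfrac1n\boldsymbol{K}_{ff}$ are both polynomials in $\boldsymbol{K}_{ff}$, they commute, so the ordering of factors in~\eqref{eq:linear_gap_full} causes no issue.

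The main step to get right is showing that the norm of the inverse is controlled by $\lambda_{\max}$ rather than by some singular-value quantity. This requires the symmetry and positive semidefiniteness of $\boldsymbol{K}_{ff}$, and hence the positive definiteness of $\boldsymbol{H}_r$. If one only had $\boldsymbol{K}_{ff}\succeq 0$ without symmetry, the spectral argument would fail. Here symmetry is immediate from the definition, so the argument should go through directly.
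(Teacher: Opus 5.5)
Your proposal is correct and matches the paper's proof: submultiplicativity is applied to \eqref{eq:linear_gap} and \eqref{eq:linear_gap_full}, with $\|(\boldsymbol{I}_m-\tfrac{\rho}{n}\boldsymbol{K}_{ff})^{-1}\|_2=(1-\tfrac{\rho}{n}\lambda_{\max}(\boldsymbol{K}_{ff}))^{-1}$ and $\|\boldsymbol{I}_m+\tfrac{1}{n}\boldsymbol{K}_{ff}\|_2=1+\tfrac{1}{n}\lambda_{\max}(\boldsymbol{K}_{ff})$ read off from the spectrum of the symmetric PSD matrix $\boldsymbol{K}_{ff}$. You take the equivalence of the positive-definiteness hypothesis with $\tfrac{\rho}{n}\lambda_{\max}(\boldsymbol{K}_{ff})<1$ from Theorem~\ref{thm:linear_gap}; the paper justifies it in this same proof by conjugating $\boldsymbol{H}_r-\tfrac{\rho}{n}\boldsymbol{X}_f^\top\boldsymbol{X}_f\succ 0$ by $\boldsymbol{H}_r^{-1/2}$, since $\boldsymbol{H}_r^{-1/2}\boldsymbol{X}_f^\top\boldsymbol{X}_f\boldsymbol{H}_r^{-1/2}$ has the same nonzero eigenvalues as $\boldsymbol{K}_{ff}$.
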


\begin{corollary}[Forget and retain output deviations]
\label{lem:two_channels}
Under the assumption of Theorem~\ref{thm:linear_gap}, the GradDiff output gap on the forget and retain sets is
\begin{align}
    \boldsymbol{X}_f\boldsymbol{\theta}_{\rho}^{\mathrm{GDiff}} - \boldsymbol{X}_f\thetaret
    &= \tfrac{\rho}{n}\,\boldsymbol{K}_{ff}\big(\boldsymbol{I}_m - \tfrac{\rho}{n}\boldsymbol{K}_{ff}\big)^{-1}\boldsymbol{e}_f^-,
    \label{eq:gap_forget}\\
    \boldsymbol{X}_r\boldsymbol{\theta}_{\rho}^{\mathrm{GDiff}} - \boldsymbol{X}_r\thetaret
    &= \tfrac{\rho}{n}\,\boldsymbol{K}_{rf}\big(\boldsymbol{I}_m - \tfrac{\rho}{n}\boldsymbol{K}_{ff}\big)^{-1}\boldsymbol{e}_f^-.
    \label{eq:gap_retain}
\end{align}
The corresponding $\boldsymbol{e}_f^+$ forms follow from Lemma~\ref{lem:residual_conversion}.
\end{corollary}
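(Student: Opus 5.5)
The statement is Corollary~\ref{lem:two_channels}, and it follows by specializing Theorem~\ref{thm:linear_gap} to two choices of evaluation data. Equation~\eqref{eq:linear_gap} holds for arbitrary $\boldsymbol{X}_a$, so I plan to substitute $\boldsymbol{X}_a=\boldsymbol{X}_f$ and then $\boldsymbol{X}_a=\boldsymbol{X}_r$.

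\textbf{Forget and retain gaps.} The leverage matrix $\boldsymbol{K}_{af}=\boldsymbol{X}_a\boldsymbol{H}_r^{-1}\boldsymbol{X}_f^\top$ in~\eqref{eq:leverage} reduces to $\boldsymbol{K}_{ff}$ when $\boldsymbol{X}_a=\boldsymbol{X}_f$. Substituting this into~\eqref{eq:linear_gap} gives~\eqref{eq:gap_forget} directly. Likewise, $\boldsymbol{K}_{af}$ reduces to $\boldsymbol{K}_{rf}$ when $\boldsymbol{X}_a=\boldsymbol{X}_r$, which gives~\eqref{eq:gap_retain}.

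\textbf{Hypotheses.} The hypothesis of Theorem~\ref{thm:linear_gap} is positive definiteness of $\boldsymbol{H}_r-\tfrac{\rho}{n}\boldsymbol{X}_f^\top\boldsymbol{X}_f$. It does not depend on $\boldsymbol{X}_a$, so it is exactly the assumption of the corollary and nothing new needs checking. It also guarantees that $\boldsymbol{I}_m-\tfrac{\rho}{n}\boldsymbol{K}_{ff}$ is invertible, because $\tfrac{\rho}{n}\lambda_{\max}(\boldsymbol{K}_{ff})<1$ and $\boldsymbol{K}_{ff}$ is positive semidefinite.

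\textbf{$\boldsymbol{e}_f^+$ forms.} For the forms in terms of $\boldsymbol{e}_f^+$, I would either substitute $\boldsymbol{e}_f^-=(\boldsymbol{I}_m+\tfrac1n\boldsymbol{K}_{ff})\boldsymbol{e}_f^+$ from Lemma~\ref{lem:residual_conversion}, or specialize~\eqref{eq:linear_gap_full} in the same way.

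\textbf{Optional simplification.} On the forget channel, $\boldsymbol{K}_{ff}$ commutes with $(\boldsymbol{I}_m-\tfrac{\rho}{n}\boldsymbol{K}_{ff})^{-1}$, since both are functions of the same symmetric matrix. Working in the eigenbasis of $\boldsymbol{K}_{ff}$ then diagonalizes the forget gap, but this is not required for the statement.

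There is no real obstacle here. The only point needing care is to confirm that the stability assumption, and hence the invertibility used in Theorem~\ref{thm:linear_gap}, is independent of the choice of $\boldsymbol{X}_a$, which the observation above settles.
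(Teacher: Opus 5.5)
Your proposal is correct and matches the paper's route: the corollary is just the closed form \eqref{eq:linear_gap} of Theorem~\ref{thm:linear_gap} specialized to $\boldsymbol{X}_a=\boldsymbol{X}_f$ and to $\boldsymbol{X}_a=\boldsymbol{X}_r$, so that $\boldsymbol{K}_{af}$ becomes $\boldsymbol{K}_{ff}$ and $\boldsymbol{K}_{rf}$ respectively, and the $\boldsymbol{e}_f^+$ forms then follow from Lemma~\ref{lem:residual_conversion}. Your remark that the stability hypothesis does not depend on $\boldsymbol{X}_a$, and that it gives $\boldsymbol{I}_m-\tfrac{\rho}{n}\boldsymbol{K}_{ff}\succ 0$, is the only check needed.
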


Corollary~\ref{lem:two_channels} delivers the central design principle of this paper: \textbf{$\boldsymbol{K}_{ff}$ and $\boldsymbol{K}_{rf}$ capture qualitatively different geometric information about the forget set and act asymmetrically on the output gap.}
At a high level, $\boldsymbol{K}_{ff} = \boldsymbol{X}_f \boldsymbol{H}_r^{-1} \boldsymbol{X}_f^\top$ is an \emph{intrinsic} property of the forget set under the retrained-model geometry: its diagonal entries measure how distinctive each forget sample is once the retain data is accounted for, and its eigenvalues control how strongly forget-directed perturbations are amplified by $(\boldsymbol{I} - \tfrac{\rho}{n}\boldsymbol{K}_{ff})^{-1}$. By contrast, $\boldsymbol{K}_{rf} = \boldsymbol{X}_r \boldsymbol{H}_r^{-1} \boldsymbol{X}_f^\top$ is a \emph{cross-set} alignment quantity, capturing how forget-directed updates project onto retain directions; it carries no information about the forget set in isolation.
In the next section, we show that most existing gradient-based methods only partially account for the leverage geometry: they control the effective forget residual through reweighting or projection, but they do not fully eliminate the underlying forget self-leverage or retain cross-leverage.

\subsection{Unified method view}
\label{sec:unified}

So far our analysis has focused on the GradDiff method. In this section, we extend the analysis to a broader class of gradient-based unlearning methods. 
Each method differs only in (i) its \emph{effective leverage matrix} (retrained-model Hessian $\boldsymbol{H}_r^{-1}$ vs.\ Euclidean $\lambda^{-1}\boldsymbol{I}$) and (ii) a per-sample \emph{reweighting} matrix or fixed projector. Define the diagonal weighting matrices
\begin{equation}
    \boldsymbol{W}_{\mathrm{NPO}}^- \;=\; \mathrm{diag}\!\big(\sigma(z_j^-)\big)_{j=1}^m,
    \quad
    \boldsymbol{W}_{\mathrm{SimNPO}}^- \;=\; \mathrm{diag}\!\big(\sigma(\tilde{z}_j^-)\big)_{j=1}^m,
    \quad
    \boldsymbol{A}_{\mathrm{IMU}} \;=\; \mathrm{diag}(\alpha_j)_{j=1}^m,
\end{equation}
where $\sigma(t) = (1+e^{-t})^{-1}$ is the logistic sigmoid, $z_j^-$, $\tilde z_j^-$ are the NPO/SimNPO logit arguments evaluated at $\thetaret$, and $\alpha_j = \|\boldsymbol{H}_{\thetafull}^{-1}\nabla_{\boldsymbol{\theta}}\ell(\thetafull, \boldsymbol{x}_j, y_j)\|_2$ is IMU's per-sample influence norm, where $\boldsymbol{H}_{\thetafull}$ is the full-data Hessian at $\thetafull$.
Explicitly, for the squared-loss surrogate, $z_j^- = -\beta_{\mathrm{NPO}}\big(\ell_j(\thetaret)-\ell_j(\boldsymbol{\theta}_{\mathrm{ref}})\big)$ and $\tilde z_j^- = -\beta_{\mathrm{SimNPO}}\tilde\ell_j(\thetaret)$.
We adopt the convention that the (Sim)NPO objectives use the prefactor $\rho/\beta$ in front of the inner $\log\sigma(\beta\,\cdot)$ term (Appendix~\ref{app:proof_unified}), so that the inverse-temperature $\beta>0$ that arises from differentiating the inner sigmoid cancels with the prefactor and the effective coefficient appearing in Table~\ref{tab:unified_gaps} is exactly $\rho$ with the $\beta$-free weights above; under any other convention $\beta$ would simply be absorbed into $\rho$. Let also $\boldsymbol{G}_{ff} = \boldsymbol{X}_f\boldsymbol{X}_f^\top$ and $\boldsymbol{G}_{af} = \boldsymbol{X}_a\boldsymbol{X}_f^\top$ be Euclidean Gram matrices. 
For ProjGD, let $\boldsymbol{P}_r$ be a fixed projector that removes retain-gradient-aligned directions from the forget gradient, and define the projected leverage matrices as
\[
    \boldsymbol{K}^{\mathrm{P}}_{af}
    :=
    \boldsymbol{X}_a\boldsymbol{H}_r^{-1}\boldsymbol{P}_r\boldsymbol{X}_f^\top,
    \qquad
    \boldsymbol{K}^{\mathrm{P}}_{ff}
    :=
    \boldsymbol{X}_f\boldsymbol{H}_r^{-1}\boldsymbol{P}_r\boldsymbol{X}_f^\top.
\]

\begin{proposition}[Unified linear gaps]
\label{prop:unified_linear}
For the fixed-weight or frozen-local linear surrogate specified above, whenever the displayed inverse exists, each method's stationary output gap from $\thetaret$ admits the closed form in Table~\ref{tab:unified_gaps}. Invertibility ensures a unique stationary solution; positive-definite Hessians additionally ensure unique minima for the quadratic objectives.
\end{proposition}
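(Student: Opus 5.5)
The plan is to write every method in one template: a stationarity equation for a frozen-weight quadratic surrogate. Then subtract the retrained normal equation $\boldsymbol{H}_r\thetaret = \tfrac1n\boldsymbol{X}_r^\top\boldsymbol{y}_r$ and solve the resulting low-rank linear system with a push-through (Woodbury-type) identity, exactly as in the proof of Theorem~\ref{thm:linear_gap}.

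Concretely, each method's stationary point $\boldsymbol{\theta}$ satisfies
\[
\boldsymbol{M}\boldsymbol{\theta} - \tfrac1n\boldsymbol{X}_r^\top\boldsymbol{y}_r - \tfrac{\rho}{n}\boldsymbol{P}\boldsymbol{X}_f^\top\boldsymbol{W}(\boldsymbol{X}_f\boldsymbol{\theta}-\boldsymbol{y}_f)=\boldsymbol{0}.
\]
Here $\boldsymbol{W}$ is the frozen diagonal weight matrix: $\boldsymbol{I}$ for GradDiff, $\boldsymbol{W}^-_{\mathrm{NPO}}$ or $\boldsymbol{W}^-_{\mathrm{SimNPO}}$ for (Sim)NPO, and $\boldsymbol{A}_{\mathrm{IMU}}$ for IMU. $\boldsymbol{P}$ is $\boldsymbol{I}$ or $\boldsymbol{P}_r$ for ProjGD. $\boldsymbol{M}$ is $\boldsymbol{H}_r$ when a retain term is present and $\lambda\boldsymbol{I}$ for Pure GA.

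For (Sim)NPO, I would first differentiate $-\tfrac{\rho}{\beta}\log\sigma(\beta\,\cdot)$ with the squared-loss surrogate. The chain rule gives a factor $\beta\,\sigma(z_j)\nabla\ell_j$, and the $\beta$ cancels against the prefactor. Freezing the weight at $\thetaret$ then yields $\boldsymbol{W}^-$. For IMU, the $\alpha_j$ are constants by construction.

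Writing $\boldsymbol{\delta}=\boldsymbol{\theta}-\thetaret$ and $\boldsymbol{X}_f\boldsymbol{\theta}-\boldsymbol{y}_f=\boldsymbol{e}_f^-+\boldsymbol{X}_f\boldsymbol{\delta}$, the retain-geometry cases reduce to
\[
(\boldsymbol{H}_r-\tfrac{\rho}{n}\boldsymbol{P}\boldsymbol{X}_f^\top\boldsymbol{W}\boldsymbol{X}_f)\boldsymbol{\delta}=\tfrac{\rho}{n}\boldsymbol{P}\boldsymbol{X}_f^\top\boldsymbol{W}\boldsymbol{e}_f^-.
\]
Multiplying by $\boldsymbol{H}_r^{-1}$ and applying the identity $(\boldsymbol{I}-\boldsymbol{U}\boldsymbol{V})^{-1}\boldsymbol{U}=\boldsymbol{U}(\boldsymbol{I}-\boldsymbol{V}\boldsymbol{U})^{-1}$ with $\boldsymbol{U}=\tfrac{\rho}{n}\boldsymbol{H}_r^{-1}\boldsymbol{P}\boldsymbol{X}_f^\top$ and $\boldsymbol{V}=\boldsymbol{W}\boldsymbol{X}_f$ gives
\[
\boldsymbol{X}_a\boldsymbol{\delta}=\tfrac{\rho}{n}\boldsymbol{K}^{(\mathrm P)}_{af}\big(\boldsymbol{I}_m-\tfrac{\rho}{n}\boldsymbol{W}\boldsymbol{K}^{(\mathrm P)}_{ff}\big)^{-1}\boldsymbol{W}\boldsymbol{e}_f^-.
\]
Taking $\boldsymbol{W}=\boldsymbol{I}$ and $\boldsymbol{P}=\boldsymbol{I}$ recovers Theorem~\ref{thm:linear_gap}. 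The invertibility of the $m\times m$ factor is equivalent to invertibility of the $d\times d$ operator, since $\det(\boldsymbol{I}-\boldsymbol{U}\boldsymbol{V})=\det(\boldsymbol{I}-\boldsymbol{V}\boldsymbol{U})$. This gives the uniqueness claim. When the Hessian is positive definite, the quadratic objective is strictly convex, so the unique stationary point is the minimum.

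Pure GA is the main obstacle, because it drops $\Robj$. Its stationary point satisfies $\lambda\boldsymbol{\theta}=\tfrac{\rho}{n}\boldsymbol{X}_f^\top\boldsymbol{W}(\boldsymbol{X}_f\boldsymbol{\theta}-\boldsymbol{y}_f)$. Subtracting $\thetaret$ now leaves the residual $\lambda\thetaret$, which I would rewrite as $\tfrac1n\boldsymbol{X}_r^\top(\boldsymbol{y}_r-\boldsymbol{X}_r\thetaret)$ using the retrain normal equation. The gap then splits into two parts:
\begin{enumerate}
\item a Euclidean-geometry forget term, expressed via $\boldsymbol{G}_{af}(\boldsymbol{I}-\tfrac{\rho}{n\lambda}\boldsymbol{G}_{ff})^{-1}$ after the same push-through;
\item a retain-loss term, equal to $-\boldsymbol{X}_a\thetaret$ propagated through the same inverse.
\end{enumerate}
I would check that the table entry matches this two-term form and verify sign conventions carefully. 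Applied to $\boldsymbol{e}_f^-$, these sign checks are where errors are most likely.

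Finally, I would check entry-by-entry that each table row is the corresponding specialization of the general formula.
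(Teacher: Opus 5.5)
For GradDiff, NPO, SimNPO and fixed-projector ProjGD your argument is the paper's. You use frozen-weight stationarity, subtract $\boldsymbol{H}_r\thetaret=\tfrac1n\boldsymbol{X}_r^\top\boldsymbol{y}_r$, and apply push-through, with the same $\beta$-cancellation for (Sim)NPO. Your single $(\boldsymbol{P},\boldsymbol{W})$ formula compactly packages the paper's row-by-row derivations, and the determinant identity gives the uniqueness claim.

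The Pure GA/IMU part is where you depart from the paper, and the departure leaves the key step unexecuted.

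\emph{How the paper does it.} The paper never subtracts $\thetaret$ inside the linear system. Without $\Robj$ the stationarity equation is $(\lambda\boldsymbol{I}-\tfrac{\rho}{n}\boldsymbol{X}_f^\top\boldsymbol{W}\boldsymbol{X}_f)\boldsymbol{\theta}=-\tfrac{\rho}{n}\boldsymbol{X}_f^\top\boldsymbol{W}\boldsymbol{y}_f$, whose right-hand side lies in the range of $\boldsymbol{X}_f^\top$. Push-through with $\boldsymbol{A}=\lambda\boldsymbol{I}$ then gives $\boldsymbol{\theta}$ explicitly, and subtracting $\thetaret$ afterwards produces the bare $-\boldsymbol{X}_a\thetaret$. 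That is why the table row is written in $\boldsymbol{y}_f$, not $\boldsymbol{e}_f^-$.

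\emph{Why your version does not match as written.} Subtracting first leaves a source term $-\lambda\thetaret$ that is not in the range of $\boldsymbol{X}_f^\top$, so push-through does not apply to it. Your second term is really $-\boldsymbol{X}_a(\boldsymbol{I}-\tfrac{\rho}{n\lambda}\boldsymbol{X}_f^\top\boldsymbol{W}\boldsymbol{X}_f)^{-1}\thetaret$, not $-\boldsymbol{X}_a\thetaret$. Your first term carries $\boldsymbol{e}_f^-$ rather than $-\boldsymbol{y}_f$. So a term-by-term comparison with the table fails, even though the sums agree.

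\emph{The missing step.} Expand that $d\times d$ inverse by Woodbury, which gives $-\boldsymbol{X}_a\thetaret-\tfrac{\rho}{n\lambda}\boldsymbol{G}_{af}(\boldsymbol{I}-\tfrac{\rho}{n\lambda}\boldsymbol{W}\boldsymbol{G}_{ff})^{-1}\boldsymbol{W}\boldsymbol{X}_f\thetaret$. Adding your forget term and using $\boldsymbol{e}_f^- -\boldsymbol{X}_f\thetaret=-\boldsymbol{y}_f$ then yields exactly the table entry, for both Pure GA ($\boldsymbol{W}=\boldsymbol{I}$) and IMU ($\boldsymbol{W}=\boldsymbol{A}_{\mathrm{IMU}}$).

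Rewriting $\lambda\thetaret$ via the retain normal equation is a detour. It introduces $\boldsymbol{X}_r$-dependent terms that appear nowhere in the table and would have to be undone.

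Finally, the Newton Step row is not a specialization of your template, and your entry-by-entry check would miss it. It needs its own one-line argument. $\Robj$ is quadratic with Hessian $\boldsymbol{H}_r$, and $\nabla\Robj(\thetafull)=-\nabla\Fobj(\thetafull)$ because $\nabla\mathcal{L}(\thetafull)=\boldsymbol{0}$. Hence the single Newton step $\thetafull+\boldsymbol{H}_r^{-1}\nabla\Fobj(\thetafull)$ equals $\thetaret$ exactly.
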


\begin{table}[t]
\centering
\small
\setlength{\tabcolsep}{4pt}
\begin{tabular}{l l l}
\toprule
Method & Output gap $\;\boldsymbol{X}_a(\boldsymbol{\theta}_\rho^{\text{method}} - \thetaret)$ & Geometry \\
\midrule
GradDiff
& $\tfrac{\rho}{n}\,\boldsymbol{K}_{af}\,\big(\boldsymbol{I} - \tfrac{\rho}{n}\boldsymbol{K}_{ff}\big)^{-1}\boldsymbol{e}_f^-$
& retrained Hessian \\
NPO
& $\tfrac{\rho}{n}\,\boldsymbol{K}_{af}\,\big(\boldsymbol{I} - \tfrac{\rho}{n}\boldsymbol{W}^-_{\mathrm{NPO}}\boldsymbol{K}_{ff}\big)^{-1}\boldsymbol{W}^-_{\mathrm{NPO}}\boldsymbol{e}_f^-$
& retrained Hessian \\
SimNPO
& $\tfrac{\rho}{n}\,\boldsymbol{K}_{af}\,\big(\boldsymbol{I} - \tfrac{\rho}{n}\boldsymbol{W}^-_{\mathrm{SimNPO}}\boldsymbol{K}_{ff}\big)^{-1}\boldsymbol{W}^-_{\mathrm{SimNPO}}\boldsymbol{e}_f^-$
& retrained Hessian \\
Pure GA
& $-\,\boldsymbol{X}_a\thetaret - \tfrac{\rho}{n\lambda}\,\boldsymbol{G}_{af}\,\big(\boldsymbol{I} - \tfrac{\rho}{n\lambda}\boldsymbol{G}_{ff}\big)^{-1}\boldsymbol{y}_f$
& Euclidean Gram \\
IMU
& $-\,\boldsymbol{X}_a\thetaret - \tfrac{\rho}{n\lambda}\,\boldsymbol{G}_{af}\,\big(\boldsymbol{I} - \tfrac{\rho}{n\lambda}\boldsymbol{A}_{\mathrm{IMU}}\boldsymbol{G}_{ff}\big)^{-1}\boldsymbol{A}_{\mathrm{IMU}}\boldsymbol{y}_f$
& Euclidean Gram \\
ProjGD (fixed)
& $\tfrac{\rho}{n}\,\boldsymbol{K}^{\mathrm{P}}_{af}\,\big(\boldsymbol{I} - \tfrac{\rho}{n}\boldsymbol{K}^{\mathrm{P}}_{ff}\big)^{-1}\boldsymbol{e}_f^-$
& retrained Hessian (projected) \\
\NS{}
& $\boldsymbol{0}$
& retrained Hessian (Newton) \\
\bottomrule
\end{tabular}
\caption{Theoretical output gaps from $\thetaret$ in the linear setting. Each row assumes that its displayed inverse exists. The GradDiff, Pure GA, fixed-weight IMU, fixed-projector ProjGD, and \NS{} rows are exact for the displayed linear objectives; NPO and SimNPO are frozen-weight local forms, with weights evaluated at $\thetaret$. The geometry column distinguishes methods that route forget perturbations through $\boldsymbol{H}_r^{-1}$ (top group: GradDiff, NPO, SimNPO, fixed-projector ProjGD) from those that use raw Euclidean Gram structure (Pure GA, IMU). \NS{} recovers $\thetaret$ exactly in linear regression. Proofs in Appendix~\ref{app:proof_unified}.}
\label{tab:unified_gaps}
\end{table}

Table~\ref{tab:unified_gaps} shows that all methods share the same basic structure: an effective forget residual is formed by an inverse term on the forget set, then transferred to evaluation outputs through a method-specific geometry. 
GradDiff, NPO, SimNPO, and fixed-projector ProjGD use the retrained-Hessian geometry through $\boldsymbol{K}$-matrices, while Pure GA and IMU use Euclidean Gram matrices $\boldsymbol{G}$. 
Reweighting and projection change the inverse term and transfer map, but they do not remove the output-space gap in general. 
This distinction governs which leverage diagnostic should predict each method's behavior.

Concretely, for a fixed forget residual:
\begin{itemize}[leftmargin=1.5em,itemsep=0.2em]
\item \textbf{Forget self-leverage governs amplification on \emph{any} evaluation set.} In Eq.~\eqref{eq:linear_gap}, $\boldsymbol{K}_{ff}$ enters the inverse stability factor $(\boldsymbol{I}_m - \tfrac{\rho}{n}\boldsymbol{K}_{ff})^{-1}$. At fixed $\rho$, increasing $\|\boldsymbol{K}_{ff}\|_2$ increases its worst-case amplification within the stable region; realized gaps also depend on residual alignment and the evaluation map. On the forget set, $\boldsymbol{K}_{ff}$ additionally appears in the prefactor of~\eqref{eq:gap_forget}.
\item \textbf{Retain cross-leverage governs transfer to retain outputs.} For fixed $\boldsymbol{K}_{ff}$ and forget residual, the retain-gap bound scales linearly with $\|\boldsymbol{K}_{rf}\|_2$, while the realized gap depends on alignment with the effective residual. The forget gap does not involve $\boldsymbol{K}_{rf}$, and gaps on other evaluation sets depend on their respective $\boldsymbol{K}_{af}$.
\end{itemize}

\subsection{Local nonlinear approximation}
\label{sec:nonlinear}

In this section, we move away from the linear setting and show that the same structure persists for nonlinear models under a local approximation around $\thetaret$. 
Define for each forget sample $j$ the per-sample Jacobian, output-gradient, and output-space Hessian
\begin{equation}
    \boldsymbol{J}_j^- = \nabla_{\boldsymbol{\theta}} f(\thetaret, \boldsymbol{x}_j),
    \quad
    \boldsymbol{\delta}_j^- = \nabla_z \ell(z, y_j)\big|_{z = f(\thetaret, \boldsymbol{x}_j)},
    \quad
    \boldsymbol{B}_j^- = \nabla_z^2 \ell(z, y_j)\big|_{z = f(\thetaret, \boldsymbol{x}_j)},
\end{equation}
with stacked counterparts $\boldsymbol{J}_f^- \in \mathbb{R}^{mc \times d}$, $\boldsymbol{\delta}_f^- \in \mathbb{R}^{mc}$, $\boldsymbol{B}_f^- \in \mathbb{R}^{mc \times mc}$ (block-diagonal), where $c$ is the output dimension.
Let $\boldsymbol{H}_r^-$ be the positive-definite local retain curvature used by the approximation (the exact Hessian in an exact Taylor model, or the retain GGN curvature in the experiments), and define the nonlinear leverage matrices
\begin{equation}
    \boldsymbol{K}_{af}^{\mathrm{NL}} = \boldsymbol{J}_a^-\,(\boldsymbol{H}_r^-)^{-1}\,(\boldsymbol{J}_f^-)^\top,
    \qquad
    \boldsymbol{K}_{ff}^{\mathrm{NL}} = \boldsymbol{J}_f^-\,(\boldsymbol{H}_r^-)^{-1}\,(\boldsymbol{J}_f^-)^\top.
\end{equation}
Also define the forget GGN curvature and the local stability matrix
\begin{equation}
    \boldsymbol{H}_{f,\mathrm{G}}^-
    := \tfrac{1}{n}(\boldsymbol{J}_f^-)^\top \boldsymbol{B}_f^- \boldsymbol{J}_f^-,
    \qquad
    \boldsymbol{M}_\rho^-
    := \boldsymbol{H}_r^- - \rho\boldsymbol{H}_{f,\mathrm{G}}^- .
\end{equation}

\begin{proposition}[Local nonlinear GradDiff gap]
\label{prop:nonlinear_gap}
Suppose $\Robj$ and $\Fobj$ are twice continuously differentiable near $\thetaret$, and that $\boldsymbol{M}_\rho^-$ is positive definite. The local GGN/Taylor predictor for the GradDiff parameter gap is
\begin{equation}
    \widehat{\boldsymbol{\Delta}}_\rho
    := \rho(\boldsymbol{M}_\rho^-)^{-1}\nabla\Fobj(\thetaret)
    = \tfrac{\rho}{n}(\boldsymbol{H}_r^-)^{-1}(\boldsymbol{J}_f^-)^\top
      \Big(\boldsymbol{I}_{mc} - \tfrac{\rho}{n}\boldsymbol{B}_f^-\boldsymbol{K}_{ff}^{\mathrm{NL}}\Big)^{-1}\boldsymbol{\delta}_f^- .
    \label{eq:nonlinear_param_gap}
\end{equation}
Consequently, for any evaluation set $A$ with stacked Jacobian $\boldsymbol{J}_a^-$, the first-order predicted output gap is
\begin{equation}
    \widehat{\Delta f}_A
    := \boldsymbol{J}_a^-\widehat{\boldsymbol{\Delta}}_\rho
    = \frac{\rho}{n}\,\boldsymbol{K}_{af}^{\mathrm{NL}}\Big(\boldsymbol{I}_{mc} - \tfrac{\rho}{n}\boldsymbol{B}_f^-\boldsymbol{K}_{ff}^{\mathrm{NL}}\Big)^{-1}\boldsymbol{\delta}_f^-,
    \label{eq:nonlinear_gap}
\end{equation}
with the nonlinear evaluation remainder controlled by the local-error discussion below.
\end{proposition}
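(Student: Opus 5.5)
The plan is to define the predictor by linearizing the GradDiff stationarity condition at $\thetaret$, and then to move the matrix inverse from parameter space ($d\times d$) to output space ($mc\times mc$) with a push-through identity. Write the GradDiff stationarity condition as $\nabla\Robj(\boldsymbol{\theta})-\rho\nabla\Fobj(\boldsymbol{\theta})=\boldsymbol{0}$ and set $\boldsymbol{\theta}=\thetaret+\boldsymbol{\Delta}$. Because $\nabla\Robj(\thetaret)=\boldsymbol{0}$, a first-order Taylor expansion (justified by the $C^2$ assumption) gives
\[
\boldsymbol{0}\approx \nabla^2\Robj(\thetaret)\boldsymbol{\Delta}-\rho\nabla\Fobj(\thetaret)-\rho\nabla^2\Fobj(\thetaret)\boldsymbol{\Delta}.
\]
In this expansion I replace $\nabla^2\Robj$ by the local curvature $\boldsymbol{H}_r^-$ and $\nabla^2\Fobj$ by its GGN part $\boldsymbol{H}_{f,\mathrm{G}}^-$. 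The result is $\boldsymbol{M}_\rho^-\boldsymbol{\Delta}=\rho\nabla\Fobj(\thetaret)$, and positive definiteness of $\boldsymbol{M}_\rho^-$ makes the solution $\widehat{\boldsymbol{\Delta}}_\rho$ well defined. This is the first equality in \eqref{eq:nonlinear_param_gap}, which essentially serves as the definition of the predictor.

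Next I use the chain rule. Since $\nabla_{\boldsymbol\theta}\ell(f(\boldsymbol\theta,\boldsymbol{x}_j),y_j)=(\boldsymbol{J}_j^-)^\top\boldsymbol{\delta}_j^-$, stacking over the forget samples gives $\nabla\Fobj(\thetaret)=\tfrac1n(\boldsymbol{J}_f^-)^\top\boldsymbol{\delta}_f^-$. With the abbreviations $\boldsymbol{H}=\boldsymbol{H}_r^-$, $\boldsymbol{J}=\boldsymbol{J}_f^-$ and $\boldsymbol{B}=\boldsymbol{B}_f^-$, I then need the push-through identity
\[
\Big(\boldsymbol{H}-\tfrac{\rho}{n}\boldsymbol{J}^\top\boldsymbol{B}\boldsymbol{J}\Big)^{-1}\boldsymbol{J}^\top=\boldsymbol{H}^{-1}\boldsymbol{J}^\top\Big(\boldsymbol{I}_{mc}-\tfrac{\rho}{n}\boldsymbol{B}\boldsymbol{J}\boldsymbol{H}^{-1}\boldsymbol{J}^\top\Big)^{-1}.
\]
To verify it, multiply both sides on the left by $\boldsymbol{M}_\rho^-$. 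This reduces the claim to
\[
\boldsymbol{J}^\top\Big(\boldsymbol{I}-\tfrac{\rho}{n}\boldsymbol{B}\boldsymbol{K}_{ff}^{\mathrm{NL}}\Big)=\Big(\boldsymbol{I}_d-\tfrac{\rho}{n}\boldsymbol{J}^\top\boldsymbol{B}\boldsymbol{J}\boldsymbol{H}^{-1}\Big)\boldsymbol{J}^\top,
\]
which holds because both sides expand to $\boldsymbol{J}^\top-\tfrac{\rho}{n}\boldsymbol{J}^\top\boldsymbol{B}\boldsymbol{J}\boldsymbol{H}^{-1}\boldsymbol{J}^\top$. Applying the identity to $\tfrac{\rho}{n}\boldsymbol{J}^\top\boldsymbol{\delta}_f^-$ gives the second form in \eqref{eq:nonlinear_param_gap}. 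This is the same mechanism as in Theorem~\ref{thm:linear_gap}; in the squared-loss case $\boldsymbol{B}=\boldsymbol{I}$ and $\boldsymbol{\delta}_f^-=\boldsymbol{e}_f^-$, so the result reduces to \eqref{eq:linear_gap}, which serves as a consistency check.

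The main obstacle is the invertibility of the $mc\times mc$ factor, which the identity above silently assumes. I would handle it with Sylvester's determinant theorem:
\[
\det\Big(\boldsymbol{I}_{mc}-\tfrac{\rho}{n}\boldsymbol{B}\boldsymbol{J}\boldsymbol{H}^{-1}\boldsymbol{J}^\top\Big)=\det\Big(\boldsymbol{I}_d-\tfrac{\rho}{n}\boldsymbol{H}^{-1}\boldsymbol{J}^\top\boldsymbol{B}\boldsymbol{J}\Big)=\det(\boldsymbol{H}^{-1})\det(\boldsymbol{M}_\rho^-).
\]
The right-hand side is nonzero because both $\boldsymbol{H}_r^-$ and $\boldsymbol{M}_\rho^-$ are positive definite, so the factor is invertible whenever $\boldsymbol{M}_\rho^-$ is positive definite. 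This argument does not need $\boldsymbol{B}$ to be symmetric positive semidefinite, which matters because the stated assumptions do not include it.

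Finally, multiplying $\widehat{\boldsymbol{\Delta}}_\rho$ on the left by $\boldsymbol{J}_a^-$ and using the definition of $\boldsymbol{K}_{af}^{\mathrm{NL}}$ gives \eqref{eq:nonlinear_gap} directly. The remainder between $\widehat{\Delta f}_A$ and the true output gap, which comes from the Taylor and GGN truncations and the linearization of $f$, is left to the local-error discussion, as the statement says.
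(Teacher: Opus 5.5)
Your proposal is correct and follows the paper's proof essentially step for step. You linearize the GradDiff stationarity condition at $\thetaret$ with the local GGN curvature to get $\boldsymbol{M}_\rho^-\widehat{\boldsymbol{\Delta}}_\rho=\rho\nabla\Fobj(\thetaret)$, where $\nabla\Fobj(\thetaret)=\tfrac{1}{n}(\boldsymbol{J}_f^-)^\top\boldsymbol{\delta}_f^-$. You then apply the push-through identity with $\boldsymbol{C}=\tfrac{\rho}{n}\boldsymbol{B}_f^-$ and left-multiply by $\boldsymbol{J}_a^-$, exactly as the paper does. Your Sylvester-determinant argument, showing that $\boldsymbol{M}_\rho^-\succ 0$ forces the $mc\times mc$ factor to be invertible, is a small but genuine tightening: the paper's Lemma~\ref{lem:push_through} takes that invertibility as a hypothesis and never derives it from the proposition's assumption.
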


The proof (Appendix~\ref{app:proof_nonlinear}) is a Taylor expansion of the stationarity condition combined with the same push-through identity used in Theorem~\ref{thm:linear_gap}. 
Equation~\eqref{eq:nonlinear_gap} preserves every interpretive feature of the linear case: the residual term $\boldsymbol{\delta}_f^-$ measures forget mispredictions at $\thetaret$, the leverage $\boldsymbol{K}_{af}^{\mathrm{NL}}$ transfers them to evaluation outputs, and the effective-residual factor becomes ill-conditioned as the smallest singular value of $\boldsymbol{I}_{mc} - (\rho/n)\boldsymbol{B}_f^- \boldsymbol{K}_{ff}^{\mathrm{NL}}$ approaches zero. 
Analogous local approximations for the other methods in Table~\ref{tab:unified_gaps} are obtained by the same expansion and are stated in Appendix~\ref{app:nonlinear_unified}. 

\paragraph{Approximation error.}
Equation~\eqref{eq:nonlinear_gap} is a linear-response approximation to how the GradDiff stationary point moves under a forget-set reweighting. This is the same perturbative object studied by influence functions and the infinitesimal jackknife; finite-sample error bounds for such linearized reweighting approximations require local Hessian invertibility and Lipschitz/Taylor control of the gradients \citep{koh2017understanding,giordano2019swiss}. In our notation, the error has the same form: it is amplified by the local conditioning of $\boldsymbol{M}_\rho^-$, is second order in the displacement when exact Hessians are used, and gains an additional first-order term when the exact Hessian is replaced by a GGN curvature. A representative bound and derivation are given in Appendix~\ref{app:proof_nonlinear_error}; we use it only to justify the locality of the diagnostic, not as a new tight guarantee.

Our formulas give exact stationary equations in linear regression, but introduce local approximations for nonlinear models. 
Moreover, practical implementations use stochastic finite-step updates, retain subsets, approximate Hessians, and sometimes dynamically updated weights or projectors. 
The approximation-error discussion above shows how these local errors enter the output gap; Section~\ref{sec:experiments} shows that the leverage matrices remain informative in practice despite such implementation noise.

\section{Experiments}
\label{sec:experiments}
In this section, we demonstrate how one may apply the leverage geometry analysis to understand and predict the behavior of unlearning methods in practice.
We experiment on MNIST with a small MLP to (i) measure the leverage geometry $(\boldsymbol{K}_{ff}, \boldsymbol{K}_{rf}, \boldsymbol{e}_f^-)$ that Section~\ref{sec:linear} predicts to govern the output-space gap, in two qualitatively different forgetting regimes, and (ii) compare the five unlearners' output-space distance to the gold-standard retrained model in both regimes. 
The motivating evidence that loss and weight-space distance are not faithful evaluation surfaces is summarized in Figure~\ref{fig:loss_not_behavior}; this section quantifies the geometry that underlies that dissociation. A scalable $\thetafull$-only leverage score at LLM scale on TOFU is reported in Appendix~\ref{app:exp_tofu}, and additional validation of the exact linear formula, the pointwise nonlinear predictor, and the stability bound appears in Appendix~\ref{app:exp_extra}. 
Full reproducibility details are in Appendix~\ref{app:exp_setup}. 
Output deviations are reported as per-coordinate logit root-mean-square (RMS), $\sqrt{\mathbb{E}\|\Delta z\|_2^2/c}$, where $\Delta z$ is the output logits difference measured against the reference $\thetaret$.

\subsection{Class vs.\ random forgetting: leverage geometry and method behavior}
\label{sec:exp_regime}

We construct two regimes of forget set on MNIST. In the \emph{class} regime, $\Df$ contains all instances of one class in the $4{,}000$-sample training subset (so $|\Df|\approx 350$--$490$, reflecting natural class-size variation across the seed's subsample rather than multiple set sizes per class); 
in the \emph{random} regime, $\Df$ is a uniformly random training subsample of size $|\Df|=200$. For each forget set we form $\thetaret$ by fixed-step retraining and estimate the leverage geometry using a retain-normalized GGN surrogate (Appendix~\ref{app:exp_setup}). The four iterative unlearners take $500$ full-batch Adam steps; Newton Step takes one update. Aggregates are over $10$ classes $\times$ $3$ seeds (class regime) and $100$ random sets $\times$ $3$ seeds (random regime).

\input{tables/table_regime_compare.tex}

The bottom block of Table~\ref{tab:regime_compare} reports the central observable, RMS$_f$ and RMS$_r$ from $\thetaret$. Three takeaways follow from this comparison.

\textbf{(i) The forget channel is consistently larger than the retain channel.} 
For every method and every regime, RMS$_f \ge$ RMS$_r$, with ratios $1.0$--$1.55$. 
Equations~\eqref{eq:gap_forget}--\eqref{eq:gap_retain} distinguish these channels: $\boldsymbol{K}_{ff}$ enters the forget gap both as a prefactor and inside the inverse stability factor, while $\boldsymbol{K}_{rf}$ transfers the effective residual to retain outputs. The observed ordering also depends on residual alignment.

\textbf{(ii) Class forgetting concentrates the residual; the leverage matrices grow much less.} 
Class-regime forget CE is similar for the four iterative methods ($4.368$--$4.404$), all with zero forget accuracy, yet RMS$_f$ differs by about $58\times$ between SimNPO ($0.009$) and ProjGD ($0.509$). Newton has lower CE ($2.972$), so its larger gap illustrates accuracy mismatch only. We summarize these metrics here and in Appendix~\ref{app:exp_setup} rather than add columns to Table~\ref{tab:regime_compare}. Going from random to class forgetting, $\|\boldsymbol{e}_f^-\|$ grows by $4.59\times$, whereas the surrogate estimates of $\lambda_{\max}(\boldsymbol{K}_{ff})$ and $\|\boldsymbol{K}_{rf}\|_F$ grow by $1.57\times$ and $1.32\times$: the residual changes much more than the leverage norms.
The iterative methods (GradDiff, NPO, SimNPO) scale RMS$_f$ by $1.8$--$3.8\times$, consistent with the residual ratio modulated by geometry and method weights.
SimNPO separates from NPO mainly in the class regime. There the forget CE near $\thetaret$ is large ($\approx4.4$ vs.\ $0.33$ for random sets), so the default $\beta=2.5$ saturates its weights ($\bar w\approx10^{-4}$), switching the forget term off and leaving the iterate near $\thetaret$, whereas NPO at $\beta=0.1$ keeps nearly uniform $O(1)$ weights and so behaves like GradDiff with a rescaled $\rho$. Swapping the two $\beta$ values swaps the behavior (Table~\ref{tab:beta_swap}). 
ProjGD and the Newton step have larger output gaps; Appendix~\ref{app:regime_method_discussion} discusses their behavior.
Because Eq.~\eqref{eq:linear_norm_bound_full} bounds the gap from $\thetafull$ quantities alone, the leverage geometry can still serve as an a-priori predictive diagnostic when a retrained reference is unavailable.

\textbf{(iii) Geometry motivates a per-set $\rho$ rule.}
The stability condition of Theorem~\ref{thm:linear_gap} requires $s := (\rho/n)\lambda_{\max}(\boldsymbol{K}_{ff}) < 1$, motivating $\rho = s\,n/\lambda_{\max}(\boldsymbol{K}_{ff})$.
In Table~\ref{tab:regime_compare}, we use $s=0.3$ with the surrogate kernels described above. This is an empirical heuristic: the per-set $\rho$ varies by an order of magnitude across random forget sets, yet optimization is stable across all $330$ forget sets without per-set grid search.
The positive-definiteness guarantee applies to the consistently normalized, unprojected frozen GGN models in Appendix~\ref{app:stability_rule}; locality up to $s=2$ remains empirical. Saturated SimNPO weights motivate the weighted kernel in Section~\ref{sec:exp_fixedpoint}.

\paragraph{LLM-scale evidence.} 
On TOFU with Llama-3.2-1B-Instruct, $K$-channel scores computed once at $\thetafull$ in a LoRA chart rank, before any unlearning run, the retain examples that GradDiff, NPO, and SimNPO will disturb most (Spearman $0.26$--$0.49$ over three seeds, far above the permutation null) and, less sharply, the forget examples they will treat least like retraining ($0.14$--$0.29$, null for NPO on forget01; Appendix~\ref{app:exp_tofu}). They are a cheap per-example signal of unlearning difficulty and spillover, on par with per-example loss and gradient-norm baselines. With curvature restricted to a LoRA chart and early-stopped endpoints they do not yet predict the size of the gap. Whether richer curvature estimates or endpoints nearer stationarity improve these diagnostics remains open.

\subsection{Local gap validation and controls}
\label{sec:exp_fixedpoint}

\input{tables/table_headline_grid.tex}

We validate the local GradDiff, NPO, and SimNPO predictors on MNIST and a head over frozen CIFAR-10 features (Appendix~\ref{app:exp_setup}). Their stationary endpoints are reached by Newton continuation from $\thetaret$ with $s=0.1$, and the corresponding predictors are compared with actual logit gaps on $\Df$ and $256$ retain anchors (Table~\ref{tab:headline_main}). These endpoints lie within $5\times10^{-3}$ of $\thetaret$ in parameter norm. Predictions track both channels and datasets ($P^K_f=0.89$--$0.98$, $P^K_r=0.72$--$0.97$, relative $\ell_2$ error $0.25$--$0.84$) and beat the Euclidean control throughout. As a control, ProjGD uses projected-gradient dynamics with a recomputed projector and is compared with the unprojected predictor~\eqref{eq:nonlinear_gap}: their directions correlate ($P^K_f=0.79$--$0.97$), but the actual gap is about $7$--$15\times$ smaller. Pure GA and IMU provide controls outside the local regime: their endpoints have output KL $0.7$--$3$ nats from $\thetaret$, versus at most $10^{-5}$ for the retain-anchored methods, and their predictors are uninformative (Section~\ref{sec:limitations}).

The local validation concerns the stationary branch through the specified retrained reference; initialization at the original model can reach a different branch where the predictor fails, as demonstrated in Appendix~\ref{app:exp_setup}.

\section{Limitations and Discussion}
\label{sec:limitations}

Our analysis is exact for linear regression, but nonlinear approximations and practical implementations introduce additional error. 
Although Appendix~\ref{app:proof_nonlinear_error} analyzes this error and Section~\ref{sec:exp_fixedpoint} and Appendix~\ref{app:exp_extra} validate the approximation empirically, the leverage matrices should be read as output-space risk diagnostics rather than exact predictors, and they are local to the specified retrained reference and to the stationary branch through it (Section~\ref{sec:setup}, Appendix~\ref{app:exp_setup}). 
Relatedly, Pure GA and IMU have no term anchoring their iterates near $\thetafull$. Their rows in Table~\ref{tab:unified_gaps} remain exact but contain $-\boldsymbol{X}_a\thetaret$, so their gap is of the order of the reference outputs and the local predictor is uninformative at their converged endpoints under either geometry (Section~\ref{sec:exp_fixedpoint}), while their early-stopped operating points (Figure~\ref{fig:loss_not_behavior}) are not stationary points and lie outside our analysis. Computing the full gap is also infeasible for large models: Appendix~\ref{app:exp_tofu_kff_krf} shows that LoRA-space diagonal-GGN approximations of $\boldsymbol{K}_{ff}$ and $\boldsymbol{K}_{rf}$ are informative on TOFU, but their modest correlations leave better estimators as an open problem.

These limitations suggest future methods that account for both forget self-leverage and retain cross-leverage when choosing $\rho$, reweighting forget samples, or projecting updates, rather than optimizing losses alone. The precomputable $\boldsymbol{e}_f^+$ form of Theorem~\ref{thm:retrain_linear_gap} also suggests LLM-scale audits that combine released-model forget residuals with sketched or low-rank $\boldsymbol{K}_{ff}$ estimates.

The broader impact is primarily diagnostic: output-space auditing can improve privacy, data-removal, copyright, and safety evaluations, but approximate diagnostics should complement behavioral evaluation, privacy auditing, and transparent assumptions rather than serve as compliance certificates.

\section{Conclusion}
We studied gradient-based machine unlearning through the output-space gap to gold-standard retraining. 
In linear regression, this gap admits a closed form; under a local GGN approximation, the same structure extends to nonlinear models. 
The resulting framework unifies many common gradient-based unlearners and identifies two leverage matrices that govern how forget-set residuals propagate: forget self-leverage, which controls behavior on the forget set, and retain cross-leverage, which controls transfer into retain outputs. 
Together, these matrices provide a sharper language for comparing existing methods, auditing forget sets, and designing better unlearning algorithms.

\begin{ack}
This work received no third-party funding, and we declare no competing interests.
\end{ack}

\bibliographystyle{plainnat}
\bibliography{references}

\appendix

\section{Proofs and Detailed Derivations}
\label{app:proofs}

\subsection{Auxiliary push-through identity}
\label{app:push_through}

\begin{lemma}[Push-through identity]
\label{lem:push_through}
Let $\boldsymbol{A}$, $\boldsymbol{U}$, $\boldsymbol{C}$, and $\boldsymbol{V}$ be conformable matrices. If $\boldsymbol{A}$ and $\boldsymbol{I}-\boldsymbol{C}\boldsymbol{V}\boldsymbol{A}^{-1}\boldsymbol{U}$ are invertible, then $\boldsymbol{A}-\boldsymbol{U}\boldsymbol{C}\boldsymbol{V}$ is invertible and
\[
    (\boldsymbol{A}-\boldsymbol{U}\boldsymbol{C}\boldsymbol{V})^{-1}\boldsymbol{U}
    =
    \boldsymbol{A}^{-1}\boldsymbol{U}
    \big(\boldsymbol{I}-\boldsymbol{C}\boldsymbol{V}\boldsymbol{A}^{-1}\boldsymbol{U}\big)^{-1}.
\]
\end{lemma}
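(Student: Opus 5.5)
The plan is to exhibit an explicit inverse for $\boldsymbol{A}-\boldsymbol{U}\boldsymbol{C}\boldsymbol{V}$ by a Woodbury-type construction, and then read off the push-through formula from it. Write $\boldsymbol{S}:=\boldsymbol{I}-\boldsymbol{C}\boldsymbol{V}\boldsymbol{A}^{-1}\boldsymbol{U}$, which is invertible by assumption, and set
\[
    \boldsymbol{Z}:=\boldsymbol{A}^{-1}+\boldsymbol{A}^{-1}\boldsymbol{U}\boldsymbol{S}^{-1}\boldsymbol{C}\boldsymbol{V}\boldsymbol{A}^{-1}.
\]
Since all matrices are square or conformable, it suffices to check $(\boldsymbol{A}-\boldsymbol{U}\boldsymbol{C}\boldsymbol{V})\boldsymbol{Z}=\boldsymbol{I}$; a one-sided inverse of a square matrix is two-sided. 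We could also verify $\boldsymbol{Z}(\boldsymbol{A}-\boldsymbol{U}\boldsymbol{C}\boldsymbol{V})=\boldsymbol{I}$ symmetrically.

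For the key computation, expand the product:
\[
    (\boldsymbol{A}-\boldsymbol{U}\boldsymbol{C}\boldsymbol{V})\boldsymbol{Z}
    = \boldsymbol{I} - \boldsymbol{U}\boldsymbol{C}\boldsymbol{V}\boldsymbol{A}^{-1}
    + \boldsymbol{U}\big(\boldsymbol{I}-\boldsymbol{C}\boldsymbol{V}\boldsymbol{A}^{-1}\boldsymbol{U}\big)\boldsymbol{S}^{-1}\boldsymbol{C}\boldsymbol{V}\boldsymbol{A}^{-1}.
\]
The middle factor in the last term is exactly $\boldsymbol{S}$, so that term collapses to $\boldsymbol{U}\boldsymbol{C}\boldsymbol{V}\boldsymbol{A}^{-1}$ and cancels the second term, leaving $\boldsymbol{I}$. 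This establishes invertibility together with the formula $(\boldsymbol{A}-\boldsymbol{U}\boldsymbol{C}\boldsymbol{V})^{-1}=\boldsymbol{Z}$.

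Next, right-multiply by $\boldsymbol{U}$:
\[
    \boldsymbol{Z}\boldsymbol{U}=\boldsymbol{A}^{-1}\boldsymbol{U}\big[\boldsymbol{I}+\boldsymbol{S}^{-1}\boldsymbol{C}\boldsymbol{V}\boldsymbol{A}^{-1}\boldsymbol{U}\big]
    =\boldsymbol{A}^{-1}\boldsymbol{U}\boldsymbol{S}^{-1}\big[\boldsymbol{S}+\boldsymbol{C}\boldsymbol{V}\boldsymbol{A}^{-1}\boldsymbol{U}\big]\ldots
\]
This is where I expect the only care point: the bracket should simplify to $\boldsymbol{S}^{-1}$, using $\boldsymbol{I}+\boldsymbol{S}^{-1}(\boldsymbol{I}-\boldsymbol{S})=\boldsymbol{S}^{-1}$ because $\boldsymbol{C}\boldsymbol{V}\boldsymbol{A}^{-1}\boldsymbol{U}=\boldsymbol{I}-\boldsymbol{S}$. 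That yields the claimed identity.

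An alternative, cleaner route that avoids the explicit inverse is also available. Once invertibility is known, check the push-through relation directly:
\[
    (\boldsymbol{A}-\boldsymbol{U}\boldsymbol{C}\boldsymbol{V})\boldsymbol{A}^{-1}\boldsymbol{U}=\boldsymbol{U}-\boldsymbol{U}\boldsymbol{C}\boldsymbol{V}\boldsymbol{A}^{-1}\boldsymbol{U}=\boldsymbol{U}\boldsymbol{S}.
\]
Then left-multiply by $(\boldsymbol{A}-\boldsymbol{U}\boldsymbol{C}\boldsymbol{V})^{-1}$ and right-multiply by $\boldsymbol{S}^{-1}$. I would present this second argument for the identity and use the first only for invertibility.
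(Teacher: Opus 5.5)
Your proof is correct, and for the identity itself it matches the paper's. Both rest on $(\boldsymbol{A}-\boldsymbol{U}\boldsymbol{C}\boldsymbol{V})\boldsymbol{A}^{-1}\boldsymbol{U}=\boldsymbol{U}\boldsymbol{S}$, followed by multiplying by the two inverses.

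The difference is in how you establish invertibility of $\boldsymbol{A}-\boldsymbol{U}\boldsymbol{C}\boldsymbol{V}$. The paper cites Sylvester's determinant identity, $\det(\boldsymbol{I}-\boldsymbol{A}^{-1}\boldsymbol{U}\boldsymbol{C}\boldsymbol{V})=\det(\boldsymbol{I}-\boldsymbol{C}\boldsymbol{V}\boldsymbol{A}^{-1}\boldsymbol{U})$, together with the factorization $\boldsymbol{A}-\boldsymbol{U}\boldsymbol{C}\boldsymbol{V}=\boldsymbol{A}(\boldsymbol{I}-\boldsymbol{A}^{-1}\boldsymbol{U}\boldsymbol{C}\boldsymbol{V})$. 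You instead write down $\boldsymbol{Z}$ and verify it. Your expansion is right, and the mirror product $\boldsymbol{Z}(\boldsymbol{A}-\boldsymbol{U}\boldsymbol{C}\boldsymbol{V})=\boldsymbol{I}$ also checks out. In fact your $\boldsymbol{Z}$ is Jacobson's lemma, $(\boldsymbol{I}-ba)^{-1}=\boldsymbol{I}+b(\boldsymbol{I}-ab)^{-1}a$ with $b=\boldsymbol{A}^{-1}\boldsymbol{U}$ and $a=\boldsymbol{C}\boldsymbol{V}$, applied to that same factorization. So the two proofs are the determinantal and the constructive faces of the same fact.

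The paper's route is a one-line citation but leans on determinants. Yours is self-contained, constructive, and gives the full inverse as a by-product. Like the paper's, it never needs $\boldsymbol{C}^{-1}$, unlike the textbook Woodbury formula. That matters downstream, since the nonlinear application uses $\boldsymbol{C}=\tfrac{\rho}{n}\boldsymbol{B}_f^-$, which is singular for softmax cross-entropy.

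Finally, once you have $\boldsymbol{Z}$, your computation $\boldsymbol{Z}\boldsymbol{U}=\boldsymbol{A}^{-1}\boldsymbol{U}\boldsymbol{S}^{-1}$ already proves the identity, because the trailing bracket $\boldsymbol{S}+\boldsymbol{C}\boldsymbol{V}\boldsymbol{A}^{-1}\boldsymbol{U}$ is just $\boldsymbol{I}$. Presenting the second argument is therefore a matter of taste rather than necessity.
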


\begin{proof}
By Sylvester's determinant identity, $\boldsymbol{I}-\boldsymbol{A}^{-1}\boldsymbol{U}\boldsymbol{C}\boldsymbol{V}$ is invertible whenever $\boldsymbol{I}-\boldsymbol{C}\boldsymbol{V}\boldsymbol{A}^{-1}\boldsymbol{U}$ is invertible; hence $\boldsymbol{A}-\boldsymbol{U}\boldsymbol{C}\boldsymbol{V}=\boldsymbol{A}(\boldsymbol{I}-\boldsymbol{A}^{-1}\boldsymbol{U}\boldsymbol{C}\boldsymbol{V})$ is invertible.
Start from
\[
    (\boldsymbol{A}-\boldsymbol{U}\boldsymbol{C}\boldsymbol{V})\boldsymbol{A}^{-1}\boldsymbol{U}
    =
    \boldsymbol{U}-\boldsymbol{U}\boldsymbol{C}\boldsymbol{V}\boldsymbol{A}^{-1}\boldsymbol{U}
    =
    \boldsymbol{U}\big(\boldsymbol{I}-\boldsymbol{C}\boldsymbol{V}\boldsymbol{A}^{-1}\boldsymbol{U}\big).
\]
Multiplying on the left by $(\boldsymbol{A}-\boldsymbol{U}\boldsymbol{C}\boldsymbol{V})^{-1}$ and on the right by $\big(\boldsymbol{I}-\boldsymbol{C}\boldsymbol{V}\boldsymbol{A}^{-1}\boldsymbol{U}\big)^{-1}$ gives the result.
\end{proof}

\subsection{Proof of Lemma~\ref{lem:residual_conversion}}
\label{app:proof_residual_conversion}

In the linear squared-loss setting the full-data objective is
\[
    \mathcal{L}(\boldsymbol{\theta})
    =
    \frac{1}{2n}\|\boldsymbol{y}_r-\boldsymbol{X}_r\boldsymbol{\theta}\|_2^2
    + \frac{1}{2n}\|\boldsymbol{y}_f-\boldsymbol{X}_f\boldsymbol{\theta}\|_2^2
    + \frac{\lambda}{2}\|\boldsymbol{\theta}\|_2^2,
\]
whose stationarity condition $\nabla\mathcal{L}(\thetafull)=\boldsymbol{0}$ reads
\begin{equation}
    \Big(\boldsymbol{H}_r + \tfrac{1}{n}\boldsymbol{X}_f^\top\boldsymbol{X}_f\Big)\thetafull
    \;=\; \boldsymbol{b}_r + \tfrac{1}{n}\boldsymbol{X}_f^\top\boldsymbol{y}_f,
    \qquad
    \boldsymbol{b}_r := \tfrac{1}{n}\boldsymbol{X}_r^\top\boldsymbol{y}_r.
    \label{eq:full_normal_eq}
\end{equation}
The retrained optimum satisfies $\nabla\Robj(\thetaret)=\boldsymbol{0}$, i.e.\ $\boldsymbol{H}_r\thetaret=\boldsymbol{b}_r$. Rearranging~\eqref{eq:full_normal_eq},
\[
    \boldsymbol{H}_r\thetafull
    \;=\; \boldsymbol{b}_r - \tfrac{1}{n}\boldsymbol{X}_f^\top\big(\boldsymbol{X}_f\thetafull - \boldsymbol{y}_f\big)
    \;=\; \boldsymbol{b}_r - \tfrac{1}{n}\boldsymbol{X}_f^\top\boldsymbol{e}_f^+.
\]
Subtracting from $\boldsymbol{H}_r\thetaret = \boldsymbol{b}_r$ and using positive-definiteness of $\boldsymbol{H}_r$ gives the \emph{parameter shift}
\begin{equation}
    \boldsymbol{H}_r(\thetaret - \thetafull) \;=\; \tfrac{1}{n}\boldsymbol{X}_f^\top\boldsymbol{e}_f^+
    \quad\Longrightarrow\quad
    \thetaret - \thetafull \;=\; \tfrac{1}{n}\,\boldsymbol{H}_r^{-1}\boldsymbol{X}_f^\top\boldsymbol{e}_f^+.
    \label{eq:param_shift}
\end{equation}
Evaluating~\eqref{eq:param_shift} at $\boldsymbol{X}_f$,
\[
    \boldsymbol{e}_f^- - \boldsymbol{e}_f^+
    \;=\; \boldsymbol{X}_f(\thetaret - \thetafull)
    \;=\; \tfrac{1}{n}\boldsymbol{X}_f\boldsymbol{H}_r^{-1}\boldsymbol{X}_f^\top\boldsymbol{e}_f^+
    \;=\; \tfrac{1}{n}\boldsymbol{K}_{ff}\,\boldsymbol{e}_f^+.
\]
Rearranging gives $\boldsymbol{e}_f^- = \big(\boldsymbol{I}_m + \tfrac{1}{n}\boldsymbol{K}_{ff}\big)\boldsymbol{e}_f^+$. Since $\boldsymbol{K}_{ff}$ is positive semidefinite, $\boldsymbol{I}_m + \tfrac{1}{n}\boldsymbol{K}_{ff} \succeq \boldsymbol{I}_m$ is invertible, yielding the inverse direction $\boldsymbol{e}_f^+ = \big(\boldsymbol{I}_m + \tfrac{1}{n}\boldsymbol{K}_{ff}\big)^{-1}\boldsymbol{e}_f^-$ and the equivalence $\boldsymbol{e}_f^+ = \boldsymbol{0} \iff \boldsymbol{e}_f^- = \boldsymbol{0}$. \hfill$\square$

\subsection{Proof of Theorem~\ref{thm:retrain_linear_gap}}
\label{app:proof_retrain_linear_gap}

Left-multiplying the parameter shift~\eqref{eq:param_shift} (established in the proof of Lemma~\ref{lem:residual_conversion}) by $\boldsymbol{X}_a$ and using the definition $\boldsymbol{K}_{af}=\boldsymbol{X}_a\boldsymbol{H}_r^{-1}\boldsymbol{X}_f^\top$ yields~\eqref{eq:retrain_linear_gap}:
\[
    \boldsymbol{X}_a\thetaret - \boldsymbol{X}_a\thetafull
    \;=\; \tfrac{1}{n}\,\boldsymbol{K}_{af}\,\boldsymbol{e}_f^+.
\]
The equivalent retrained-residual form~\eqref{eq:retrain_linear_gap_ret} follows by substituting Lemma~\ref{lem:residual_conversion} into~\eqref{eq:retrain_linear_gap}. \hfill$\square$

\subsection{Proof of Theorem~\ref{thm:linear_gap}}
\label{app:proof_linear_gap}

In the linear squared-loss setting,
\[
    \Robj(\boldsymbol{\theta})
    =
    \frac{1}{2n}\|\boldsymbol{y}_r-\boldsymbol{X}_r\boldsymbol{\theta}\|_2^2
    +
    \frac{\lambda}{2}\|\boldsymbol{\theta}\|_2^2,
    \qquad
    \Fobj(\boldsymbol{\theta})
    =
    \frac{1}{2n}\|\boldsymbol{y}_f-\boldsymbol{X}_f\boldsymbol{\theta}\|_2^2.
\]
Hence
\[
    \nabla\Robj(\boldsymbol{\theta})
    =
    \Big(\tfrac{1}{n}\boldsymbol{X}_r^\top\boldsymbol{X}_r+\lambda\boldsymbol{I}\Big)\boldsymbol{\theta}
    -
    \tfrac{1}{n}\boldsymbol{X}_r^\top\boldsymbol{y}_r
    =
    \boldsymbol{H}_r\boldsymbol{\theta}-\boldsymbol{b}_r,
    \qquad
    \nabla\Fobj(\boldsymbol{\theta})
    =
    \tfrac{1}{n}\boldsymbol{X}_f^\top(\boldsymbol{X}_f\boldsymbol{\theta}-\boldsymbol{y}_f),
\]
where $\boldsymbol{b}_r := \tfrac{1}{n}\boldsymbol{X}_r^\top\boldsymbol{y}_r$. The first-order condition for $\boldsymbol{\theta}_\rho^{\mathrm{GDiff}}$ is
\[
    \nabla\Robj(\boldsymbol{\theta}_\rho^{\mathrm{GDiff}})
    -
    \rho\nabla\Fobj(\boldsymbol{\theta}_\rho^{\mathrm{GDiff}})
    =
    \boldsymbol{0}.
\]
Substituting the gradients and collecting the terms involving $\boldsymbol{\theta}_\rho^{\mathrm{GDiff}}$ gives
\begin{equation*}
    \big(\boldsymbol{H}_r - \tfrac{\rho}{n}\boldsymbol{X}_f^\top\boldsymbol{X}_f\big)\boldsymbol{\theta}_\rho^{\mathrm{GDiff}}
    \;=\; \boldsymbol{b}_r - \tfrac{\rho}{n}\boldsymbol{X}_f^\top\boldsymbol{y}_f,
\end{equation*}
or, writing
\[
    \boldsymbol{M}_\rho
    :=
    \boldsymbol{H}_r - \tfrac{\rho}{n}\boldsymbol{X}_f^\top\boldsymbol{X}_f,
\]
\[
    \boldsymbol{M}_\rho\boldsymbol{\theta}_\rho^{\mathrm{GDiff}}
    =
    \boldsymbol{b}_r - \tfrac{\rho}{n}\boldsymbol{X}_f^\top\boldsymbol{y}_f.
\]
The retrained optimum satisfies $\nabla\Robj(\thetaret)=0$, so $\boldsymbol{H}_r\thetaret=\boldsymbol{b}_r$. Therefore
\[
    \boldsymbol{M}_\rho\thetaret
    =
    \Big(\boldsymbol{H}_r-\tfrac{\rho}{n}\boldsymbol{X}_f^\top\boldsymbol{X}_f\Big)\thetaret
    =
    \boldsymbol{b}_r-\tfrac{\rho}{n}\boldsymbol{X}_f^\top\boldsymbol{X}_f\thetaret.
\]
Subtracting this identity from the equation for $\boldsymbol{\theta}_\rho^{\mathrm{GDiff}}$ yields
\begin{equation*}
    \boldsymbol{M}_\rho(\boldsymbol{\theta}_\rho^{\mathrm{GDiff}}-\thetaret)
    =
    \Big(\boldsymbol{b}_r-\tfrac{\rho}{n}\boldsymbol{X}_f^\top\boldsymbol{y}_f\Big)
    -
    \Big(\boldsymbol{b}_r-\tfrac{\rho}{n}\boldsymbol{X}_f^\top\boldsymbol{X}_f\thetaret\Big)
    =
    \tfrac{\rho}{n}\boldsymbol{X}_f^\top(\boldsymbol{X}_f\thetaret-\boldsymbol{y}_f).
\end{equation*}
Using $\boldsymbol{e}_f^-=\boldsymbol{X}_f\thetaret-\boldsymbol{y}_f$, this becomes
\[
    \boldsymbol{M}_\rho(\boldsymbol{\theta}_\rho^{\mathrm{GDiff}}-\thetaret)
    =
    \tfrac{\rho}{n}\boldsymbol{X}_f^\top\boldsymbol{e}_f^-.
\]
The assumption $\boldsymbol{M}_\rho\succ0$ ensures that we can invert this system, so
\[
    \boldsymbol{\theta}_\rho^{\mathrm{GDiff}}-\thetaret
    =
    \tfrac{\rho}{n}\boldsymbol{M}_\rho^{-1}\boldsymbol{X}_f^\top\boldsymbol{e}_f^-.
\]
Applying Lemma~\ref{lem:push_through} with $\boldsymbol{A} = \boldsymbol{H}_r$, $\boldsymbol{U} = \boldsymbol{X}_f^\top$, $\boldsymbol{C} = \tfrac{\rho}{n}\boldsymbol{I}_m$, and $\boldsymbol{V} = \boldsymbol{X}_f$,
\[
    \boldsymbol{M}_\rho^{-1}\boldsymbol{X}_f^\top
    =
    \boldsymbol{H}_r^{-1}\boldsymbol{X}_f^\top
    \Big(\boldsymbol{I}_m-\tfrac{\rho}{n}\boldsymbol{X}_f\boldsymbol{H}_r^{-1}\boldsymbol{X}_f^\top\Big)^{-1}
    =
    \boldsymbol{H}_r^{-1}\boldsymbol{X}_f^\top
    \Big(\boldsymbol{I}_m-\tfrac{\rho}{n}\boldsymbol{K}_{ff}\Big)^{-1}.
\]
Substituting this expression into the parameter gap gives
\begin{equation*}
    \boldsymbol{\theta}_\rho^{\mathrm{GDiff}} - \thetaret
    \;=\; \tfrac{\rho}{n}\,\boldsymbol{H}_r^{-1}\boldsymbol{X}_f^\top\big(\boldsymbol{I}_m - \tfrac{\rho}{n}\boldsymbol{K}_{ff}\big)^{-1}\boldsymbol{e}_f^-.
\end{equation*}
Finally, left-multiplying by $\boldsymbol{X}_a$ and using $\boldsymbol{K}_{af}=\boldsymbol{X}_a\boldsymbol{H}_r^{-1}\boldsymbol{X}_f^\top$ gives
\[
    \boldsymbol{X}_a(\boldsymbol{\theta}_\rho^{\mathrm{GDiff}}-\thetaret)
    =
    \tfrac{\rho}{n}\boldsymbol{K}_{af}
    \Big(\boldsymbol{I}_m-\tfrac{\rho}{n}\boldsymbol{K}_{ff}\Big)^{-1}
    \boldsymbol{e}_f^-,
\]
which is \eqref{eq:linear_gap}. The equivalent full-data-residual form~\eqref{eq:linear_gap_full} follows by substituting Lemma~\ref{lem:residual_conversion}:
\[
    \boldsymbol{X}_a(\boldsymbol{\theta}_\rho^{\mathrm{GDiff}}-\thetaret)
    =
    \tfrac{\rho}{n}\boldsymbol{K}_{af}
    \Big(\boldsymbol{I}_m-\tfrac{\rho}{n}\boldsymbol{K}_{ff}\Big)^{-1}
    \Big(\boldsymbol{I}_m+\tfrac{1}{n}\boldsymbol{K}_{ff}\Big)\boldsymbol{e}_f^+.
\]
\qed

\subsection{Proof of Corollary~\ref{lem:stability}}
\label{app:proof_stability}

The norm bound~\eqref{eq:linear_norm_bound} follows from \eqref{eq:linear_gap} and submultiplicativity of $\|\cdot\|_2$, using $\|(\boldsymbol{I} - \tfrac{\rho}{n}\boldsymbol{K}_{ff})^{-1}\|_2 = (1 - \tfrac{\rho}{n}\lambda_{\max}(\boldsymbol{K}_{ff}))^{-1}$ when $\boldsymbol{K}_{ff} \succeq 0$ and $\tfrac{\rho}{n}\lambda_{\max}(\boldsymbol{K}_{ff}) < 1$. The equivalent bound~\eqref{eq:linear_norm_bound_full} follows analogously from~\eqref{eq:linear_gap_full} together with $\|\boldsymbol{I}_m + \tfrac{1}{n}\boldsymbol{K}_{ff}\|_2 = 1 + \tfrac{1}{n}\lambda_{\max}(\boldsymbol{K}_{ff})$ since $\boldsymbol{K}_{ff} \succeq 0$. The stability equivalence is obtained by conjugating $\boldsymbol{H}_r - \tfrac{\rho}{n}\boldsymbol{X}_f^\top\boldsymbol{X}_f \succ 0$ by $\boldsymbol{H}_r^{-1/2}$ and noting that the nonzero eigenvalues of $\boldsymbol{H}_r^{-1/2}\boldsymbol{X}_f^\top\boldsymbol{X}_f\boldsymbol{H}_r^{-1/2}$ coincide with those of $\boldsymbol{K}_{ff}$. \qed

\subsection{Proof of Proposition~\ref{prop:unified_linear} (per-method)}
\label{app:proof_unified}

All five non-trivial rows are obtained by the same recipe used for GradDiff in Appendix~\ref{app:proof_linear_gap}: (i) write the method's regularized linear objective and its first-order stationarity condition, (ii) subtract the corresponding stationarity equation for $\thetaret$ to obtain a linear system in $\boldsymbol{\theta} - \thetaret$, and (iii) apply the push-through identity (Lemma~\ref{lem:push_through}) to express the result in the leverage-matrix form of Table~\ref{tab:unified_gaps}. We give each row in turn.

\paragraph{GradDiff (recap).} Objective $\Robj(\boldsymbol{\theta}) - \rho\Fobj(\boldsymbol{\theta})$, stationarity $(\boldsymbol{H}_r - \tfrac{\rho}{n}\boldsymbol{X}_f^\top\boldsymbol{X}_f)\boldsymbol{\theta} = \boldsymbol{b}_r - \tfrac{\rho}{n}\boldsymbol{X}_f^\top\boldsymbol{y}_f$ where $\boldsymbol{b}_r := \tfrac{1}{n}\boldsymbol{X}_r^\top\boldsymbol{y}_r$ and $\boldsymbol{H}_r\thetaret = \boldsymbol{b}_r$. Subtracting and applying Lemma~\ref{lem:push_through} with $\boldsymbol{A}=\boldsymbol{H}_r$, $\boldsymbol{U}=\boldsymbol{X}_f^\top$, $\boldsymbol{C}=\tfrac{\rho}{n}\boldsymbol{I}_m$, $\boldsymbol{V}=\boldsymbol{X}_f$ gives the GradDiff row (full derivation in Appendix~\ref{app:proof_linear_gap}).

\paragraph{NPO and SimNPO.} The (Sim)NPO objectives replace the unweighted forget term $-\rho\Fobj$ by a per-sample preference-style loss
\begin{align*}
    \mathrm{NPO}: &\;\;
    \Robj(\boldsymbol{\theta}) - \tfrac{\rho}{\beta}\cdot\tfrac{1}{n}\sum_{j\in\Df}\log\sigma\!\big(\beta\,(\ell_j(\boldsymbol{\theta}) - \ell_j(\boldsymbol{\theta}_{\mathrm{ref}}))\big),\\
    \mathrm{SimNPO}: &\;\;
    \Robj(\boldsymbol{\theta}) - \tfrac{\rho}{\beta}\cdot\tfrac{1}{n}\sum_{j\in\Df}\log\sigma\!\big(\beta\,\tilde\ell_j(\boldsymbol{\theta})\big),
\end{align*}
where $\ell_j(\boldsymbol{\theta}) = \tfrac{1}{2}(y_j-\boldsymbol{x}_j^\top\boldsymbol{\theta})^2$ is the per-sample squared loss and $\tilde\ell_j$ is its length-normalized variant; $\boldsymbol{\theta}_{\mathrm{ref}}$ is the NPO reference parameter. Differentiating,
\[
    \nabla_{\boldsymbol{\theta}}\big[-\tfrac{\rho}{\beta}\log\sigma(\beta\, t_j(\boldsymbol{\theta}))\big]
    = -\rho\,\sigma(-\beta t_j(\boldsymbol{\theta}))\,\nabla_{\boldsymbol{\theta}} t_j(\boldsymbol{\theta}),
\]
which produces a per-sample weight $w_j(\boldsymbol{\theta}) = \sigma(z_j(\boldsymbol{\theta}))$ on the standard squared-loss gradient (the inverse-temperature factor $\beta$ from differentiating the inner $\log\sigma(\beta\,\cdot)$ exactly cancels with the $\rho/\beta$ prefactor of the objective, so the effective forget coefficient in the linear stationarity equation below is $\rho$ and the diagonal weights are $\beta$-free, matching Section~\ref{sec:unified}; under any alternative prefactor convention $\beta$ would simply be absorbed into $\rho$). Freezing the weights at $\thetaret$ yields the linear stationarity condition
\[
    (\boldsymbol{H}_r - \tfrac{\rho}{n}\boldsymbol{X}_f^\top\boldsymbol{W}^-\boldsymbol{X}_f)\boldsymbol{\theta} = \boldsymbol{b}_r - \tfrac{\rho}{n}\boldsymbol{X}_f^\top\boldsymbol{W}^-\boldsymbol{y}_f,
\]
with $\boldsymbol{W}^- = \boldsymbol{W}^-_{\mathrm{NPO}}$ or $\boldsymbol{W}^-_{\mathrm{SimNPO}}$. Subtracting the $\thetaret$ equation $\boldsymbol{H}_r\thetaret = \boldsymbol{b}_r$ gives $(\boldsymbol{H}_r - \tfrac{\rho}{n}\boldsymbol{X}_f^\top\boldsymbol{W}^-\boldsymbol{X}_f)(\boldsymbol{\theta}-\thetaret) = \tfrac{\rho}{n}\boldsymbol{X}_f^\top\boldsymbol{W}^-\boldsymbol{e}_f^-$. Applying Lemma~\ref{lem:push_through} with $\boldsymbol{C} = \tfrac{\rho}{n}\boldsymbol{W}^-$ and left-multiplying by $\boldsymbol{X}_a$ yields the (Sim)NPO row.

\paragraph{Pure GA.} The objective drops $\Robj$ and keeps only ridge regularization and forget ascent: $-\rho\Fobj(\boldsymbol{\theta}) + \tfrac{\lambda}{2}\|\boldsymbol{\theta}\|^2$. The first-order condition is $(\lambda\boldsymbol{I}_d - \tfrac{\rho}{n}\boldsymbol{X}_f^\top\boldsymbol{X}_f)\boldsymbol{\theta} = -\tfrac{\rho}{n}\boldsymbol{X}_f^\top\boldsymbol{y}_f$. Lemma~\ref{lem:push_through} with $\boldsymbol{A}=\lambda\boldsymbol{I}_d$, $\boldsymbol{U}=\boldsymbol{X}_f^\top$, $\boldsymbol{C}=\tfrac{\rho}{n}\boldsymbol{I}_m$, $\boldsymbol{V}=\boldsymbol{X}_f$ gives
$\boldsymbol{\theta}_\rho^{\mathrm{GA}} = -\tfrac{\rho}{n\lambda}\boldsymbol{X}_f^\top(\boldsymbol{I}_m - \tfrac{\rho}{n\lambda}\boldsymbol{G}_{ff})^{-1}\boldsymbol{y}_f$. Subtracting $\thetaret$ and left-multiplying by $\boldsymbol{X}_a$ yields the Pure-GA row.

\paragraph{IMU.} IMU rescales each forget term by a per-sample influence weight $\alpha_j$, giving the objective $-\rho\cdot\tfrac{1}{n}\sum_{j\in\Df}\alpha_j\,\ell_j(\boldsymbol{\theta}) + \tfrac{\lambda}{2}\|\boldsymbol{\theta}\|^2$ (without $\Robj$, matching the Pure-GA setting). The stationarity condition becomes the Pure-GA system with the influence diagonal $\boldsymbol{A}_{\mathrm{IMU}}$ inserted: $(\lambda\boldsymbol{I}_d - \tfrac{\rho}{n}\boldsymbol{X}_f^\top\boldsymbol{A}_{\mathrm{IMU}}\boldsymbol{X}_f)\boldsymbol{\theta} = -\tfrac{\rho}{n}\boldsymbol{X}_f^\top\boldsymbol{A}_{\mathrm{IMU}}\boldsymbol{y}_f$. The same push-through application with $\boldsymbol{C} = \tfrac{\rho}{n}\boldsymbol{A}_{\mathrm{IMU}}$ yields the IMU row. This is a fixed-weight local approximation; in the experiments the diagonal is instantiated once from $\thetafull$ using the final-layer proxy in Appendix~\ref{app:exp_setup}.

\paragraph{ProjGD.} The fixed-projector method applies $\boldsymbol{P}_r$ to the forget gradient at every step, so the stationarity condition is $\nabla\Robj(\boldsymbol{\theta}) - \rho\boldsymbol{P}_r\nabla\Fobj(\boldsymbol{\theta}) = \boldsymbol{0}$. In linear regression this gives $(\boldsymbol{H}_r - \tfrac{\rho}{n}\boldsymbol{P}_r\boldsymbol{X}_f^\top\boldsymbol{X}_f)\boldsymbol{\theta} = \boldsymbol{b}_r - \tfrac{\rho}{n}\boldsymbol{P}_r\boldsymbol{X}_f^\top\boldsymbol{y}_f$. Subtracting the $\thetaret$ equation and applying Lemma~\ref{lem:push_through} with $\boldsymbol{A}=\boldsymbol{H}_r$, $\boldsymbol{U}=\boldsymbol{P}_r\boldsymbol{X}_f^\top$, $\boldsymbol{C}=\tfrac{\rho}{n}\boldsymbol{I}_m$, $\boldsymbol{V}=\boldsymbol{X}_f$ yields the ProjGD row, with $\boldsymbol{K}^{\mathrm{P}}_{af}$ and $\boldsymbol{K}^{\mathrm{P}}_{ff}$ in place of $\boldsymbol{K}_{af}$ and $\boldsymbol{K}_{ff}$.

\paragraph{\NS{}.} Newton step solves $\thetafull + \boldsymbol{H}_r^{-1}\nabla\Fobj(\thetafull) = \thetaret$ exactly in linear regression with squared loss, using the same push-through identity as Theorem~\ref{thm:retrain_linear_gap} \citep{zhao2026win}. Hence the gap is zero. \qed

\subsection{Proof of Proposition~\ref{prop:nonlinear_gap}}
\label{app:proof_nonlinear}

Let $\boldsymbol{\Delta}_\rho = \boldsymbol{\theta}_\rho^{\mathrm{GDiff}} - \thetaret$. The first-order condition for $\boldsymbol{\theta}_\rho^{\mathrm{GDiff}}$ is $\nabla\Robj(\boldsymbol{\theta}_\rho^{\mathrm{GDiff}}) - \rho\nabla\Fobj(\boldsymbol{\theta}_\rho^{\mathrm{GDiff}}) = \boldsymbol{0}$. Using $\nabla\Robj(\thetaret) = \boldsymbol{0}$ and replacing the local gradients by their first-order GGN/Taylor model gives
\begin{equation*}
    \boldsymbol{H}_r^-\,\boldsymbol{\Delta}_\rho - \rho\,\boldsymbol{g}_f^- - \rho\,\boldsymbol{H}_{f,\mathrm{G}}^-\,\boldsymbol{\Delta}_\rho \;\approx\; \boldsymbol{0},
\end{equation*}
where $\boldsymbol{g}_f^- = \nabla\Fobj(\thetaret) = \tfrac{1}{n}(\boldsymbol{J}_f^-)^\top\boldsymbol{\delta}_f^-$ and $\boldsymbol{H}_{f,\mathrm{G}}^- = \tfrac{1}{n}(\boldsymbol{J}_f^-)^\top\boldsymbol{B}_f^-\boldsymbol{J}_f^-$. Solving this local linear system gives
\[
    \widehat{\boldsymbol{\Delta}}_\rho
    = \rho(\boldsymbol{H}_r^- - \rho\boldsymbol{H}_{f,\mathrm{G}}^-)^{-1}\boldsymbol{g}_f^-.
\]
Applying the same push-through identity with $\boldsymbol{A} = \boldsymbol{H}_r^-$, $\boldsymbol{U} = (\boldsymbol{J}_f^-)^\top$, $\boldsymbol{C} = \tfrac{\rho}{n}\boldsymbol{B}_f^-$, and $\boldsymbol{V} = \boldsymbol{J}_f^-$ yields
\begin{equation*}
    \widehat{\boldsymbol{\Delta}}_\rho
    = \tfrac{\rho}{n}\,(\boldsymbol{H}_r^-)^{-1}(\boldsymbol{J}_f^-)^\top\big(\boldsymbol{I}_{mc} - \tfrac{\rho}{n}\boldsymbol{B}_f^-\boldsymbol{K}_{ff}^{\mathrm{NL}}\big)^{-1}\boldsymbol{\delta}_f^-.
\end{equation*}
Left-multiplying by $\boldsymbol{J}_a^-$ gives the displayed output predictor in \eqref{eq:nonlinear_gap}. \qed

\subsection{A local nonlinear approximation-error bound}
\label{app:proof_nonlinear_error}

This appendix records a representative local bound for the approximation-error discussion in Section~\ref{sec:nonlinear}. Let $\boldsymbol{\Delta}_\rho = \boldsymbol{\theta}_\rho^{\mathrm{GDiff}} - \thetaret$ be any stationary GradDiff displacement in a neighborhood of $\thetaret$, and assume $\| (\boldsymbol{M}_\rho^-)^{-1} \|_2 \le \gamma$. Suppose that for all displacements $\boldsymbol{\Delta}$ in this neighborhood,
\begin{align*}
\|\nabla\Robj(\thetaret+\boldsymbol{\Delta}) - \boldsymbol{H}_r^-\boldsymbol{\Delta}\|_2
&\le \varepsilon_r\|\boldsymbol{\Delta}\|_2 + \tfrac{L_r}{2}\|\boldsymbol{\Delta}\|_2^2, \\
\|\nabla\Fobj(\thetaret+\boldsymbol{\Delta}) - \nabla\Fobj(\thetaret) - \boldsymbol{H}_{f,\mathrm{G}}^-\boldsymbol{\Delta}\|_2
&\le \varepsilon_f\|\boldsymbol{\Delta}\|_2 + \tfrac{L_f}{2}\|\boldsymbol{\Delta}\|_2^2,
\end{align*}
and that the evaluation map satisfies
\[
\|f(\thetaret+\boldsymbol{\Delta}, A)-f(\thetaret,A)-\boldsymbol{J}_a^-\boldsymbol{\Delta}\|_2
\le \tfrac{L_a}{2}\|\boldsymbol{\Delta}\|_2^2.
\]
Then the output-space predictor $\widehat{\Delta f}_A := \boldsymbol{J}_a^-\widehat{\boldsymbol{\Delta}}_\rho$ obeys
\begin{equation}
\big\| f(\boldsymbol{\theta}_\rho^{\mathrm{GDiff}},A)-f(\thetaret,A)-\widehat{\Delta f}_A \big\|_2
\le
\|\boldsymbol{J}_a^-\|_2\gamma
\Big((\varepsilon_r+\rho\varepsilon_f)\|\boldsymbol{\Delta}_\rho\|_2
+ \tfrac{L_r+\rho L_f}{2}\|\boldsymbol{\Delta}_\rho\|_2^2\Big)
+ \tfrac{L_a}{2}\|\boldsymbol{\Delta}_\rho\|_2^2 .
\label{eq:nonlinear_error_bound}
\end{equation}
For exact Hessians, $\varepsilon_r=\varepsilon_f=0$ under ordinary Lipschitz-Hessian smoothness, by which we mean that $\nabla^2\Robj$ and $\nabla^2\Fobj$ are $L_r$- and $L_f$-Lipschitz on the neighborhood, so a standard Taylor argument removes the linear remainder and leaves only the $L_r,L_f$ second-order terms. With GGN curvatures, $\varepsilon_r$ and $\varepsilon_f$ record the omitted model-Hessian terms.

Let $\boldsymbol{g}_f^- = \nabla\Fobj(\thetaret)$ and define the local-model remainders
\begin{align*}
\boldsymbol{r}_r(\boldsymbol{\Delta})
&:= \nabla\Robj(\thetaret+\boldsymbol{\Delta}) - \boldsymbol{H}_r^-\boldsymbol{\Delta}, \\
\boldsymbol{r}_f(\boldsymbol{\Delta})
&:= \nabla\Fobj(\thetaret+\boldsymbol{\Delta}) - \boldsymbol{g}_f^- - \boldsymbol{H}_{f,\mathrm{G}}^-\boldsymbol{\Delta}.
\end{align*}
Since $\boldsymbol{\Delta}_\rho$ is stationary for GradDiff,
\begin{align*}
\boldsymbol{0}
&= \nabla\Robj(\thetaret+\boldsymbol{\Delta}_\rho)
   - \rho\nabla\Fobj(\thetaret+\boldsymbol{\Delta}_\rho) \\
&= \boldsymbol{M}_\rho^-\boldsymbol{\Delta}_\rho
   - \rho\boldsymbol{g}_f^- + \boldsymbol{r}_r(\boldsymbol{\Delta}_\rho)
   - \rho\boldsymbol{r}_f(\boldsymbol{\Delta}_\rho).
\end{align*}
The predictor satisfies $\boldsymbol{M}_\rho^-\widehat{\boldsymbol{\Delta}}_\rho = \rho\boldsymbol{g}_f^-$. Subtracting the two equations gives
\[
\boldsymbol{\Delta}_\rho - \widehat{\boldsymbol{\Delta}}_\rho
= (\boldsymbol{M}_\rho^-)^{-1}\big(-\boldsymbol{r}_r(\boldsymbol{\Delta}_\rho)+\rho\boldsymbol{r}_f(\boldsymbol{\Delta}_\rho)\big).
\]
Using the assumed remainder bounds and $\|(\boldsymbol{M}_\rho^-)^{-1}\|_2\le\gamma$,
\[
\|\boldsymbol{\Delta}_\rho - \widehat{\boldsymbol{\Delta}}_\rho\|_2
\le
\gamma\Big((\varepsilon_r+\rho\varepsilon_f)\|\boldsymbol{\Delta}_\rho\|_2
+ \tfrac{L_r+\rho L_f}{2}\|\boldsymbol{\Delta}_\rho\|_2^2\Big).
\]
Finally,
\begin{align*}
&\big\| f(\boldsymbol{\theta}_\rho^{\mathrm{GDiff}},A)-f(\thetaret,A)-\boldsymbol{J}_a^-\widehat{\boldsymbol{\Delta}}_\rho \big\|_2 \\
&\qquad\le
\|\boldsymbol{J}_a^-\|_2\|\boldsymbol{\Delta}_\rho-\widehat{\boldsymbol{\Delta}}_\rho\|_2
+ \big\|f(\thetaret+\boldsymbol{\Delta}_\rho,A)-f(\thetaret,A)-\boldsymbol{J}_a^-\boldsymbol{\Delta}_\rho\big\|_2,
\end{align*}
and the evaluation-map remainder bound gives \eqref{eq:nonlinear_error_bound}. \qed

\subsection{Local nonlinear forms for the other methods}
\label{app:nonlinear_unified}

The local nonlinear rows are obtained from Table~\ref{tab:unified_gaps} by replacing linear features with Jacobians and inserting the output-Hessian/GGN factor $\boldsymbol{B}_f^-$. For NPO and SimNPO, weights are frozen at $\thetaret$ and their derivatives are omitted. Scalar per-sample weights are lifted to output blocks as $\widetilde{\boldsymbol{W}}=\boldsymbol{W}\otimes\boldsymbol{I}_c$ and $\widetilde{\boldsymbol{A}}=\boldsymbol{A}\otimes\boldsymbol{I}_c$. For fixed-projector ProjGD and the Euclidean methods, define
\[
    \boldsymbol{K}_{af}^{\mathrm{NL,P}}
    :=
    \boldsymbol{J}_a^-(\boldsymbol{H}_r^-)^{-1}\boldsymbol{P}_r(\boldsymbol{J}_f^-)^\top,
    \qquad
    \boldsymbol{K}_{ff}^{\mathrm{NL,P}}
    :=
    \boldsymbol{J}_f^-(\boldsymbol{H}_r^-)^{-1}\boldsymbol{P}_r(\boldsymbol{J}_f^-)^\top,
\]
\[
    \boldsymbol{G}_{af}^{\mathrm{NL}} := \boldsymbol{J}_a^-(\boldsymbol{J}_f^-)^\top,
    \qquad
    \boldsymbol{G}_{ff}^{\mathrm{NL}} := \boldsymbol{J}_f^-(\boldsymbol{J}_f^-)^\top.
\]
For the Euclidean rows, let
\[
    \boldsymbol{\eta}_f^- := \boldsymbol{\delta}_f^- - \boldsymbol{B}_f^-\boldsymbol{J}_f^-\thetaret.
\]
The shifted residual $\boldsymbol{\eta}_f^-$ appears in the Pure GA and IMU rows because their local linearization replaces the regression target $\boldsymbol{y}_f$ of Table~\ref{tab:unified_gaps} by its linearized analog $\boldsymbol{\delta}_f^- - \boldsymbol{B}_f^-\boldsymbol{J}_f^-\thetaret$; this is the per-sample object that plays the role of $-\boldsymbol{y}_f$ when the loss is taken to second order in the model output rather than in the parameter directly.
Whenever the displayed inverse exists, the leading-order local output gaps are summarized in Table~\ref{tab:nonlinear_unified}. All quantities are evaluated at $\thetaret$. The Pure GA and IMU rows are written in Euclidean-Gram form; because they do not use the retrained-Hessian $K$ geometry, the $K$-channel pointwise diagnostic in Appendix~\ref{app:exp_extra} is not the predictor for those rows.

\begin{table}[ht]
\centering
\scriptsize
\setlength{\tabcolsep}{3pt}
\resizebox{\textwidth}{!}{%
\begin{tabular}{l l l}
\toprule
Method & Local output gap $f(\boldsymbol{\theta}_\rho^{\text{method}}, A)-f(\thetaret,A)$ & Geometry \\
\midrule
GradDiff
& $\tfrac{\rho}{n}\boldsymbol{K}_{af}^{\mathrm{NL}}\big(\boldsymbol{I}-\tfrac{\rho}{n}\boldsymbol{B}_f^-\boldsymbol{K}_{ff}^{\mathrm{NL}}\big)^{-1}\boldsymbol{\delta}_f^-$
& retrained Hessian \\
NPO
& $\tfrac{\rho}{n}\boldsymbol{K}_{af}^{\mathrm{NL}}\big(\boldsymbol{I}-\tfrac{\rho}{n}\widetilde{\boldsymbol{W}}_{\mathrm{NPO}}^-\boldsymbol{B}_f^-\boldsymbol{K}_{ff}^{\mathrm{NL}}\big)^{-1}\widetilde{\boldsymbol{W}}_{\mathrm{NPO}}^-\boldsymbol{\delta}_f^-$
& retrained Hessian \\
SimNPO
& $\tfrac{\rho}{n}\boldsymbol{K}_{af}^{\mathrm{NL}}\big(\boldsymbol{I}-\tfrac{\rho}{n}\widetilde{\boldsymbol{W}}_{\mathrm{SimNPO}}^-\boldsymbol{B}_f^-\boldsymbol{K}_{ff}^{\mathrm{NL}}\big)^{-1}\widetilde{\boldsymbol{W}}_{\mathrm{SimNPO}}^-\boldsymbol{\delta}_f^-$
& retrained Hessian \\
Pure GA
& $-\boldsymbol{J}_a^-\thetaret + \tfrac{\rho}{n\lambda}\boldsymbol{G}_{af}^{\mathrm{NL}}\big(\boldsymbol{I}-\tfrac{\rho}{n\lambda}\boldsymbol{B}_f^-\boldsymbol{G}_{ff}^{\mathrm{NL}}\big)^{-1}\boldsymbol{\eta}_f^-$
& Euclidean Gram \\
IMU
& $-\boldsymbol{J}_a^-\thetaret + \tfrac{\rho}{n\lambda}\boldsymbol{G}_{af}^{\mathrm{NL}}\big(\boldsymbol{I}-\tfrac{\rho}{n\lambda}\widetilde{\boldsymbol{A}}_{\mathrm{IMU}}\boldsymbol{B}_f^-\boldsymbol{G}_{ff}^{\mathrm{NL}}\big)^{-1}\widetilde{\boldsymbol{A}}_{\mathrm{IMU}}\boldsymbol{\eta}_f^-$
& Euclidean Gram \\
ProjGD (fixed)
& $\tfrac{\rho}{n}\boldsymbol{K}_{af}^{\mathrm{NL,P}}\big(\boldsymbol{I}-\tfrac{\rho}{n}\boldsymbol{B}_f^-\boldsymbol{K}_{ff}^{\mathrm{NL,P}}\big)^{-1}\boldsymbol{\delta}_f^-$
& retrained Hessian (projected) \\
\NS{}
& $\boldsymbol{0}$ in the local quadratic retraining model
& retrained Hessian (Newton) \\
\bottomrule
\end{tabular}%
}
\caption{Local nonlinear counterparts of Table~\ref{tab:unified_gaps}; notation as defined in Section~\ref{sec:nonlinear}. NPO/SimNPO use weights frozen at $\thetaret$, omitting their derivatives. The \NS{} row omits higher-order and GGN-mismatch residuals.}
\label{tab:nonlinear_unified}
\end{table}
\section{Experimental Setup Details}
\label{app:exp_setup}

This appendix gives the full reproducibility detail for the experiments of Section~\ref{sec:experiments}: data, model, hardware, software, hyperparameters, and per-experiment protocols.

\paragraph{Hardware and software.} 
All MNIST/MLP experiments run in float64 on a single NVIDIA H100 NVL 96GB GPU within a Linux + CUDA $\geq$ 11.8 environment (PyTorch~2). The figures below were measured by re-running one seed of every experiment in its final configuration on an otherwise idle GPU with our implementation and scaling by the number of seeds.
A single iterative MNIST unlearning run (full-batch Adam, $500$ steps) takes a few seconds, and the Newton step is a single closed-form update. One (forget set, seed) cell of Table~\ref{tab:regime_compare}, comprising the $\thetaret$ solve, the leverage geometry, and all five unlearners, takes about one minute, so the regime comparison ($330$ cells, $100\times3$ random and $10\times3$ class) totals $\sim$$5$ GPU-hours. The matched-CE experiments of Figure~\ref{fig:loss_not_behavior} and Table~\ref{tab:loss_not_behavior} ($10$ seeds, $\rho$ bisection for six methods) take $\sim$$0.2$ GPU-hours. The per-example validations of Appendix~\ref{app:exp_extra} take about $4$ (MNIST) and $2$ (CIFAR-10) minutes per seed. The fixed-point validation of Table~\ref{tab:headline_main} takes $\sim$$2.9$ GPU-hours on MNIST and $\sim$$1.9$ on CIFAR-10 for $10$ seeds, dominated by the polishing of $\thetaret$ and the dense fp64 curvature solves.
The TOFU experiments train no base model. The oracle-proxy analysis of Appendix~\ref{app:exp_tofu_proxy} consumes the saved per-checkpoint per-example evaluations of one trajectory of the public \texttt{open-unlearning} recipe. The seed study of Appendix~\ref{app:exp_tofu_kff_krf} uses fifteen ten-epoch unlearning trajectories of that recipe with per-epoch evaluation ($6$--$49$ minutes each, $6.9$ GPU-hours in total), a one-time pass on Llama-3.2-1B-Instruct that caches the LoRA-space diagonal GGN over the $4{,}000$ TOFU training sequences ($\sim$$7$ minutes), and $K_{ff}$/$K_{rf}$ scoring that takes under two minutes per split. We do not separately enumerate compute used by preliminary or discarded experiments. We plan to open-source our code at \url{https://github.com/xingjianz/loss-is-not-behavior}; the code is not yet publicly available. \emph{Asset versions and licenses.} MNIST \citep{lecun2010mnist} (CC BY-SA 3.0), CIFAR-10 \citep{krizhevsky2009learning} together with the pretrained ResNet-18 released with the NeurIPS 2023 Machine Unlearning Challenge starting kit (\url{https://github.com/unlearning-challenge/starting-kit}), the TOFU dataset \citep{maini2024tofu} (MIT), the \texttt{open-unlearning} library \citep{openunlearning2025} (MIT), Llama-3.2-1B-Instruct \citep{grattafiori2024llama} (Llama 3.2 Community License), PyTorch (BSD-3-Clause), and Hugging Face Transformers/PEFT (Apache-2.0); all are used within their licensed terms.
Random seeds are explicit per run; all stochastic data subsampling, weight initialization, and SGD/Adam mini-batching is reproducible from the seed.

\paragraph{Data: MNIST.} 
We use the standard MNIST dataset of $28 \times 28$ grayscale digits ($60\,000$ training, $10\,000$ test), accessed locally as a NumPy archive. Per seed, we draw a random training subsample of $n_{\mathrm{tr}} = 4000$ images and a random test subsample of $n_{\mathrm{te}} = 2000$ images, normalize pixels to $[0,1]$, and standardize features per-coordinate using the training-subset mean and standard deviation (an additive $10^{-6}$ floor on the std). Labels are one-hot encoded for squared/CE losses.

\paragraph{Model and base training.} The MLP has architecture $784 \to 24 \to 10$ with $\tanh$ hidden activation, a softmax output head, and biases in both layers ($d=19{,}090$). We initialize weights from a zero-mean Gaussian with variance equal to inverse fan-in and biases to zero, using the run seed. Base models are trained for $40$ epochs with SGD (learning rate $0.1$, batch size $256$, momentum $0.9$), without L-BFGS polishing. The matched-CE motivation experiments use $\lambda=10^{-3}$; the regime comparison and fixed-point validation use $\lambda=10^{-2}$. The initial warm retrain runs for $20$ epochs from $\thetafull$ on the retain set, with learning rate $0.05$, batch size $64$, momentum $0.9$, and mini-batch seed $\mathrm{seed}+1$. The regime comparison replaces this reference and the fixed-point validation polishes it as specified below.

\paragraph{Unlearners and optimization.} The motivation and regime experiments start unlearning from $\thetafull$; the fixed-point experiments start from the polished $\thetaret$. The iterative methods optimize full-batch objectives. GradDiff minimizes retain CE minus $\rho$ times forget CE plus $\ell_2$ regularization. NPO uses reference log-probabilities from $\thetafull$ with $\beta=0.1$; SimNPO uses the reference-free loss with $\beta=2.5$ and margin $\delta=0$. The $\beta$ values follow the open-unlearning library defaults. Pure GA drops the retain term and ascends forget CE with only the $\ell_2$ anchor. IMU uses the same Pure-GA objective but multiplies each forget CE by a fixed weight equal to the norm of the final-layer CE gradient at $\thetafull$, normalized to mean one; this is the experimental proxy for the ideal influence norm in Table~\ref{tab:unified_gaps}. ProjGD applies the GradDiff direction after projecting out the span of $64$ retain-anchor CE gradients in the Euclidean metric, with the surrogate subspace recomputed along the trajectory. These choices differ from the closed-form theory rows, which solve stationary fixed-weight or frozen-local surrogate objectives.

The matched-CE protocols use deterministic full-batch SGD-style runners during $\rho$ bisection: $120$ steps per evaluation, lr $0.05$ for GradDiff/NPO/SimNPO/ProjGD and lr $0.005$ for Pure GA/IMU. The four iterative methods in Section~\ref{sec:exp_regime} instead use full-batch Adam ($\beta_1=0.9$, $\beta_2=0.999$, $\epsilon=10^{-8}$, lr $5\times10^{-3}$) for $500$ steps with $\lambda=10^{-2}$; no convergence-based stopping is used. Per-method $\rho$-bracket centers for matched-CE tuning were calibrated once at the start of the project as $\rho_0 \in \{$GradDiff: $5$, NPO: $5$, SimNPO: $30$, ProjGD: $5$, Pure GA: $2$, IMU: $0.5\}$. The Newton step is a single closed-form second-order update from $\thetafull$ using the local GGN curvature; it has no $\rho$ knob. The MNIST per-coordinate logit RMS is $\sqrt{(1/|A|)\sum_{i \in A}\|f(\boldsymbol{\theta},x_i) - f(\thetaret,x_i)\|_2^2 / c}$ with $c = 10$; TOFU instead uses per-example sequence-NLL deviation from the retain95 oracle.

\paragraph{Class-forget protocol (Figure~\ref{fig:loss_not_behavior}).} The forget set is the entire instance of a single class in the seed's training subsample (rotated across seeds: seed $i \to$ class $i \bmod 10$, so $|\Df|$ varies between $\sim 350$ and $\sim 490$ depending on which class lands in that seed's $4000$-sample subset). For each (seed, method) pair we tune $\rho$ by geometric bisection on the bracket $[\rho_0/16,\, 16\rho_0]$ until the achieved forget CE on $\Df$ falls within $\pm 10\%$ of $\mathrm{CE}(\thetaret,\Df)$, capped at $12$ bisection steps. Because Newton step has no $\rho$ knob, we report it without the matching constraint and flag it with hatched bars in Figure~\ref{fig:loss_not_behavior}. Figure~\ref{fig:loss_not_behavior} displays four observables: forget CE, weight distance $\|\theta-\thetaret\|_2$, KL$(p_{\thetaret}\|p_\theta)$ on $\Df$, and top-1 forget accuracy. The full observable set --- additionally including top-2 forget accuracy, confidence margin, held-out test accuracy, and a balanced-accuracy MIA based on a logistic attack on per-example loss (anchor batch from the retain set, equal forget/retain balance) --- appears in the random-forget aggregate Table~\ref{tab:loss_not_behavior}. Means and standard deviations are over $10$ seeds. The random-forget protocol uses the same recipe but with $\Df$ a uniformly random subsample of size $200$ from the seed's training subsample.

\input{tables/table4_matched_xe.tex}

\paragraph{Stability-fraction protocol (Section~\ref{sec:exp_regime}).} For each forget set, we form the reference $\thetaret$ by $500$ full-batch Adam steps on the full-data-normalized $\Robj$ from $\thetafull$, at learning rate $5\times10^{-3}$ and $\lambda=10^{-2}$. This fixed-step reference is not polished to a prescribed gradient tolerance. Curvature is estimated at this reference as
\[
\widetilde{\boldsymbol{H}}_r = \frac{1}{n-m}\sum_{i\in\Dr}\boldsymbol{J}_i^\top\boldsymbol{B}_i\boldsymbol{J}_i+\lambda\boldsymbol{I}.
\]
This retain-normalized GGN is a surrogate for the full-$n$ curvature in the theory. The $\boldsymbol{K}$ quantities in Table~\ref{tab:regime_compare} denote estimates $\boldsymbol{J}_a\widetilde{\boldsymbol{H}}_r^{-1}\boldsymbol{J}_f^\top$: the self-leverage block uses all $m$ forget examples, and the cross-leverage block uses $200$ retain anchors and all $m$ forget examples. We set $\rho=0.3\,n/\lambda_{\max}(\boldsymbol{K}_{ff})$ for GradDiff, NPO, SimNPO, and ProjGD as a heuristic motivated by the linear stability rule. Each takes $500$ full-batch Adam steps from $\thetafull$; Newton Step takes one update. The class regime uses $m=350$--$490$ (mean ${\approx}400$); the random regime uses $m=200$. Aggregates are over $10\times3=30$ class runs and $100\times3=300$ random runs. Output deviations are per-coordinate logit RMS on $\Df$ and $\Dr$. Mean $\rho$ is $1.14\times10^{-3}$ (random) and $6.69\times10^{-4}$ (class), varying by an order of magnitude across random forget sets. Across all five methods, mean forget CE ranges from $0.329$ to $0.359$ in the random regime. Class-regime CE is $4.368$--$4.404$ for the four iterative methods and $2.972$ for Newton Step.

\paragraph{GGN stability rule.}\label{app:stability_rule}
For unprojected GradDiff and frozen-weight (Sim)NPO GGN models formed with the same full-$n$ normalization as the retain objective, the softmax-curvature bound $\|\boldsymbol{B}_j^-\|_2\le\tfrac12$ bounds the effective kernel eigenvalues by $(w_{\max}/2)\lambda_{\max}(\boldsymbol{K}^{\mathrm{NL}}_{ff})$, where $w_{\max}$ is the largest sample weight. It equals $1$ for GradDiff and is at most $1$ under the table's $\sigma$ convention or $2$ under the deployed $2\sigma$ convention, equivalent after rescaling $\rho$. Thus $s<1$ suffices for positive definiteness of these frozen GGN models under the unweighted rule; locality up to $s=2$ remains an empirical observation. Table~\ref{tab:regime_compare} uses the differently normalized surrogate above, so its $s$ budget is not a direct certificate for the implemented objectives. Saturated SimNPO weights barely move the iterate under this rule, motivating the weighted kernel in Section~\ref{sec:exp_fixedpoint}.

\paragraph{CIFAR-10 over frozen features (Table~\ref{tab:headline_main} and Table~\ref{tab:headline_cifar_full}).} For CIFAR-10 \citep{krizhevsky2009learning} we use the ResNet-18 pretrained on CIFAR-10 that was released with the NeurIPS 2023 Machine Unlearning Challenge starting kit (test accuracy $88.6\%$) and extract its frozen $512$-dimensional penultimate features. The backbone is held fixed and is not the target of unlearning. The unlearning analysis runs on a 2-layer MLP head of architecture $512\to24\to10$ ($\tanh$ hidden activation, softmax output, $d=12{,}562$ parameters), so the feature vector plays the role of the input $\boldsymbol{x}$ and the MNIST pipeline applies unchanged. Per seed we draw $4000$ training and $2000$ test examples, standardize the features per coordinate, train the head to obtain $\thetafull$, and form $\thetaret$ exactly as for MNIST, with $|\Df|=200$ in the random regime and one full class ($|\Df|\approx400$) in the class regime.

\paragraph{Fixed-point validation protocol (Section~\ref{sec:exp_fixedpoint}).} Per dataset, regime, and seed, the reference $\thetaret$ is the $20$-epoch warm-start retrain above, polished to a stationary point of $\Robj$ under the normalization of~\eqref{eq:retain_obj} with $\lambda=10^{-2}$. The SGD retrain alone is not stationary, so we polish it with multi-restart L-BFGS, which reduces $\|\nabla\Robj\|_2$ below $10^{-7}$ and brings it to the stationary point in the basin of $\thetafull$. The retain curvature is the exact dense fp64 GGN $\boldsymbol{H}_r=\tfrac1n\sum_{i\in\Dr}\boldsymbol{J}_i^\top\boldsymbol{B}_i\boldsymbol{J}_i+\lambda\boldsymbol{I}$ at $\thetaret$. The stability rule sets $\rho=s\,n/\lambda_{\max}(\boldsymbol{K}_{ff})$ with $s=0.1$. For SimNPO the method-weighted kernel $\boldsymbol{W}^{1/2}\boldsymbol{K}_{ff}\boldsymbol{W}^{1/2}$ replaces $\boldsymbol{K}_{ff}$, because under the uniform rule its saturated class-regime weights ($\bar w\approx10^{-4}$) leave the endpoint at $\thetaret$. For GradDiff, NPO, SimNPO, Pure GA, and IMU, stationary endpoints are located by damped Newton continuation from $\thetaret$; NPO and SimNPO optimize their deployed objectives with parameter-dependent weights. ProjGD is run as projected-gradient dynamics from $\thetaret$ ($1000$ steps, learning rate $0.005$, $64$ retain anchors). The branch through $\thetaret$ is not the only stationary point near $\thetafull$. For MNIST seed $0$, a GradDiff run started at $\thetafull$ and converged with L-BFGS at the same $\rho$ ends $1.8$ (random) and $2.5$ (class) from the continuation endpoint in parameter norm, at a stationary point whose forget-logit RMS from $\thetaret$ ($0.26$ and $0.39$) the local predictor does not describe ($P^K_f\approx0$), while the same run started at $\thetaret$ returns to the continuation endpoint within $5\times10^{-5}$. The formulas describe the branch through the specified reference, and a deployed run inherits them only insofar as it converges to that branch. GradDiff, NPO, and SimNPO predictions use their rows of Table~\ref{tab:nonlinear_unified}, with NPO/SimNPO weights frozen at $\thetaret$ and their derivatives omitted. ProjGD is compared with the unprojected predictor~\eqref{eq:nonlinear_gap}; Pure GA/IMU are evaluated with simplified $K$- and Euclidean-Gram controls that omit the shifted-residual and reference-offset terms of their theoretical rows. Predictions are compared with the actual logit gap on $\Df$ and on $256$ random retain anchors by per-logit Pearson correlation and relative $\ell_2$ error. The Euclidean control replaces $\boldsymbol{H}_r^{-1}$ by $\lambda^{-1}\boldsymbol{I}$ in the prediction; for NPO/SimNPO it uses uniform forget weights, so these comparisons change both geometry and weighting. The shuffled-anchor control permutes the anchor rows of the prediction (it gives $|P|\le0.20$ on every row). The output KL quoted in Section~\ref{sec:exp_fixedpoint} is the mean per-example $\mathrm{KL}(p_{\thetaret}\Vert p_{\boldsymbol{\theta}})$ at the endpoint. Aggregates are over $10$ seeds per dataset and regime.

\paragraph{TOFU/Llama setup (Appendix~\ref{app:exp_tofu}).} 
All TOFU runs use the \texttt{open-unlearning} full-finetune recipe \citep{openunlearning2025} on TOFU \citep{maini2024tofu} with Llama-3.2-1B-Instruct \citep{grattafiori2024llama} as the base model (learning rate $10^{-5}$, 10 epochs, per-epoch checkpoints, library-default $\beta$ for NPO and SimNPO). The released retain95 (retain99) model is the oracle $\thetaret$ for forget05 (forget01). Two analyses are reported.

The \emph{oracle-NLL forget-channel proxy} (Appendix~\ref{app:exp_tofu_proxy}) uses one GradDiff forget05 trajectory and the score $s_{\mathrm{NLL}}(i) = \mathrm{avg\_loss}_{\thetaret}(x_i)$, the per-token NLL of the gold answer under the retain95 oracle. This score is not an explicit Hessian leverage estimate; it is a scalable oracle-evaluable surrogate for the forget residual term that appears together with $K_{ff}$ in Eq.~\eqref{eq:nonlinear_gap}. We sort all $200$ forget05 examples by $s_{\mathrm{NLL}}$, split them into quartiles, and compute the per-bin mean $|\Delta_i|$ at every saved checkpoint, where $\Delta_i$ is the per-example sequence-NLL deviation of the unlearned model from the oracle.

The \emph{seed study with $\thetafull$-only scores} (Appendix~\ref{app:exp_tofu_kff_krf}) covers $\{$forget01, forget05$\}\times\{$GradDiff, NPO$\}$ and forget05 SimNPO with three optimizer seeds each. The three seeds are the trainer seeds $0$, $1$, $2$ of the recipe with no other change, and the seed-$0$ GradDiff forget05 trajectory is the one used in Appendix~\ref{app:exp_tofu_proxy}. The scores are computed once at $\thetafull$ and never see $\thetaret$ or the trajectories. We restrict the parameter space to a LoRA adapter ($r{=}8$, target modules \texttt{all-linear}, $5.6{\times}10^6$ trainable parameters) and approximate the curvature by $\widehat{\boldsymbol{H}} = \mathrm{diag}\bigl(\tfrac{1}{N}\,\widehat{\mathrm{GGN}}\bigr) + \lambda \boldsymbol{I}$ with $\lambda=0.01$, where the diagonal uses one Monte-Carlo label sample per training sequence on $\Df \cup \Dr$, followed by a Woodbury low-rank correction built from $S{=}4$ MC pseudo-gradients on $\Df$. With $\boldsymbol{g}_i$ the LoRA-chart gradient of the answer NLL of example $i$ at $\thetafull$, the forget score is $\boldsymbol{g}_j^\top\widehat{\boldsymbol{H}}^{-1}\boldsymbol{g}_j$ on the $200$ (forget05) or $40$ (forget01) forget examples, and the retain score is $\sum_{j\in\Df}|\boldsymbol{g}_i^\top\widehat{\boldsymbol{H}}^{-1}\boldsymbol{g}_j|$ on the $400$ probes of the standard TOFU \texttt{retain\_perturbed} split. Each score is compared with the per-example $|\mathrm{NLL}_u-\mathrm{NLL}_{\mathrm{ret}}|$ at every per-epoch checkpoint except checkpoint~0, by Spearman correlation, AUC, and the Q4/Q1 ratio of mean gaps. Checkpoints are aggregated by the median within a trajectory and then averaged across seeds, with hierarchical-bootstrap confidence intervals ($B{=}1{,}000$, outer level trajectories, inner level examples) and within-checkpoint permutation nulls ($1{,}000$ permutations). Cheap baselines are the NLL at $\thetafull$ (forget) and the gradient norm $\|\boldsymbol{g}_i\|_2$ or the NLL at $\thetafull$ (retain). Table~\ref{tab:tofu_direction} re-analyzes the same trajectories against the NLL movement from $\thetafull$ with cosine, Pearson, and a fitted scale, and repeats the scoring on a half-frozen LoRA chart ($\boldsymbol{A}$ frozen, exact GGN over the $\boldsymbol{B}$ coordinates).

\section{LLM-scale diagnostics on TOFU/Llama}
\label{app:exp_tofu}

We ask whether the $K$-channel quantities of Section~\ref{sec:experiments} remain informative in the LLM regime, where explicit GGN leverage on the full parameter space is intractable. All runs use the \texttt{open-unlearning} recipe on TOFU with Llama-3.2-1B-Instruct \citep{openunlearning2025,maini2024tofu,grattafiori2024llama} and the released retain models as $\thetaret$. Appendix~\ref{app:exp_tofu_proxy} reports an oracle-NLL proxy for the forget-side residual channel along one GradDiff trajectory. Appendix~\ref{app:exp_tofu_kff_krf} reports $\thetafull$-only LoRA-space scores for the forget and retain channels against five method/split combinations with three optimizer seeds each. Both protocols are described in Appendix~\ref{app:exp_setup}.

\subsection{Oracle-NLL forget-channel proxy}
\label{app:exp_tofu_proxy}

Equation~\eqref{eq:nonlinear_gap} couples the forget self-leverage with the residual term $\boldsymbol{\delta}_f^-$; for next-token CE, the per-token NLL of the gold answer under $\thetaret$ is not a direct estimate of $K_{ff}$, but it is an oracle-evaluable proxy for this residual-weighted forget channel. We compute $s_{\mathrm{NLL}}(i) = \mathrm{avg\_loss}_{\thetaret}(x_i)$ for each forget example, sort examples by it, and split them into four equal-sized quartile groups. Because this score uses only forget-example oracle NLL, it probes the forget-side channel rather than retain cross-leverage.

\input{tables/table_tofu_corners.tex}

\textbf{Observation.} The qualitative prediction holds for this proxy: the high-score quartile (Q4) separates from the low-score one (Q1) throughout the trajectory, with a Q4/Q1 ratio of $1.3\text{--}3.1\times$ at intermediate checkpoints and a per-example Spearman$(s_{\mathrm{NLL}},|\Delta|)$ that starts at $0.99$ at step $0$ and decays to ${\sim} 0.1$ as the forget channel saturates (Table~\ref{tab:tofu_corners}). The diagnostic remains informative at LLM scale despite the absence of an explicit Hessian, optimization noise, and the move from squared loss to next-token CE. We note that this proxy still requires the retrained reference $\thetaret$; a fully deployable LLM-scale variant would build on the precomputable $\boldsymbol{e}_f^+$ form of Theorem~\ref{thm:retrain_linear_gap} (computed only at $\thetafull$), which we discuss as future work in Section~\ref{sec:limitations}.

\subsection{$\thetafull$-only $K_{ff}$ and $K_{rf}$ scores over independent seeds}
\label{app:exp_tofu_kff_krf}

The Section~\ref{sec:exp_regime} regime comparison hinges on $K_{ff}$ and $K_{rf}$ being computable. To check that this is not specific to small networks, and to remove the dependence on a single trajectory and on the oracle, we compute both channels once at $\thetafull$ in a LoRA chart ($r{=}8$, $5.6{\times}10^6$ coordinates) with a diagonal-plus-Woodbury GGN, and compare them with the realized per-example gaps $|\mathrm{NLL}_u-\mathrm{NLL}_{\mathrm{ret}}|$ of GradDiff, NPO, and SimNPO over three optimizer seeds on forget01 and forget05 (full protocol in Appendix~\ref{app:exp_setup}). The forget score is the residual-weighted self-leverage $\boldsymbol{g}_j^\top\widehat{\boldsymbol{H}}^{-1}\boldsymbol{g}_j$ and the retain score is the cross-leverage row sum $\sum_{j\in\Df}|\boldsymbol{g}_i^\top\widehat{\boldsymbol{H}}^{-1}\boldsymbol{g}_j|$, both from gradients at $\thetafull$ alone, so this is the precomputable form anticipated in Section~\ref{sec:limitations}.

\input{tables/table_tofu_seeds.tex}

\textbf{Observation.} The $\thetafull$-only scores carry usable per-example signal at LLM scale. The retain score ranks the retain probes that the unlearners disturb most, with Spearman $0.26$--$0.49$ and Q4/Q1 gap ratios of $1.3$--$3.7$, far outside the permutation null ($\pm0.02$), on both splits and for all three methods (SimNPO on forget05 only), across independent seeds. The forget score ranks the forget examples whose treatment departs most from retraining, with Spearman $0.14$--$0.29$ and Q4/Q1 ratios of $1.1$--$1.8$, and is at the null only for NPO on forget01 ($p=0.99$). Both scores are available before any unlearning run, so they can flag which examples will be hard to unlearn faithfully and where an evaluation of the unlearned model should look. Two limits are visible in the same table. The scores are comparable to, not clearly better than, the per-example loss or gradient norm at $\thetafull$ (margins between $-0.17$ and $+0.06$), and they are not calibrated predictions of retraining-gap magnitude. A separate diagnostic in Table~\ref{tab:tofu_direction} evaluates the profile and scale of NLL movement from $\thetafull$ on the same trajectories. The per-example profile of the NLL movement away from $\thetafull$ follows the scores (cosine above its permutation null on every row, centered Pearson $0.16$--$0.58$ forget and $0.19$--$0.39$ retain), but the relative error after fitting one scale stays at $0.6$--$0.8$ and the fitted scale varies by $27$--$65\%$ across checkpoints and seeds. Several differences from the local validation may contribute. The curvature is a diagonal-plus-rank-$4$ GGN on a LoRA chart of $5.6\times10^6$ of the $1.2\times10^9$ parameters, the endpoints are early-stopped ten-epoch runs, and these scores target sequence NLL rather than the logit gaps of Section~\ref{sec:exp_fixedpoint}. These endpoints sit $0.07$--$1.2$ nats of retain KL from a cold-start $\thetaret$, far beyond the $\le10^{-5}$-nat regime of the positive local validation in Section~\ref{sec:exp_fixedpoint}. Computing the scores on a half-frozen LoRA chart with an exact GGN on the active half changes little. These comparisons do not isolate the effects of chart restriction, curvature approximation, endpoint nonstationarity, or target choice; whether richer curvature or more local endpoints improve prediction remains open. The output-space KL of the final checkpoints ($0.07$--$1.2$ nats on the retain probes) confirms that these recipes do not leave the retain channel untouched.

\input{tables/table_tofu_direction.tex}

\section{Additional Experimental Validations}
\label{app:exp_extra}

These additional experiments test the framework along axes complementary to Section~\ref{sec:experiments} and are referenced from there but not relied on for the main claims.

\paragraph{ProjGD and Newton-step behavior in the regime comparison.}\label{app:regime_method_discussion}
ProjGD has nearly unchanged retain RMS across regimes ($0.438\to0.440$) but larger output gaps than GradDiff, NPO, and SimNPO. Its projection removes components along sampled retain-loss gradients; this does not guarantee unchanged retain logits, and the regime comparison does not isolate the cause of its larger gaps.
The Newton step, which recovers $\thetaret$ exactly in the linear case (Table~\ref{tab:unified_gaps}), shows the largest nonlinear output gaps ($0.46\to0.73$ on the forget set). This illustrates the limits of the local quadratic approximation. The comparison does not separately identify GGN mismatch, higher-order error, or imperfect reference stationarity.

\paragraph{Linear closed-form sanity check (Theorem~\ref{thm:linear_gap}).} On a synthetic ridge regression problem we compare the analytical gap of Eq.~\eqref{eq:linear_gap} to the numerical solution of the GradDiff stationarity equations. The relative error is $<10^{-10}$ across all evaluation sets and all values of $\rho$ inside the stability region, confirming that the closed form is exact in the linear setting.

\paragraph{Full local validation and controls (Eq.~\eqref{eq:nonlinear_gap}).} Tables~\ref{tab:headline_mnist_full} and~\ref{tab:headline_cifar_full} expand the local GradDiff/NPO/SimNPO validation and the ProjGD and Pure GA/IMU comparisons of Table~\ref{tab:headline_main}, under the protocol of Section~\ref{sec:exp_fixedpoint}. Panel~A adds the held-out test channel $\boldsymbol{K}_{af}$ ($P^K_a=0.80$--$0.96$ on MNIST and $0.71$--$0.95$ on CIFAR-10 for GradDiff, NPO, and ProjGD), the relative $\ell_2$ error of the predicted gap vector, the output-space KL to $\thetaret$, and the IMU rows, which behave like Pure GA. For Pure GA and IMU the correlations are small or negative (between $-0.51$ and $0.32$) under both the $K$-geometry and the Euclidean-Gram predictor, and their endpoints sit $0.7$--$3$ nats of KL away from $\thetaret$, so these rows are outside the local regime as discussed in Section~\ref{sec:limitations}. Panel~B sweeps the removal scale $s\in\{0.05,0.2,0.5,1,2\}$. $P^K_f$ stays at or above $0.96$ (MNIST) and $0.88$ (CIFAR-10) up to $s=2$, while the endpoint displacement from $\thetaret$ grows by more than an order of magnitude and the class-regime forget cross-entropy rises. Table~\ref{tab:beta_swap} is the $\beta$-swap control referenced in Section~\ref{sec:exp_regime}.

\input{tables/table_headline_mnist_full.tex}
\input{tables/table_headline_cifar_full.tex}
\input{tables/table_beta_swap.tex}

\paragraph{Stability divergence as $s \to 1$ (Corollary~\ref{lem:stability}).} Sweeping the stability fraction $s = \tfrac{\rho}{n}\lambda_{\max}(\boldsymbol{K}_{ff}) \in \{0.05, 0.1, 0.2, 0.4, 0.6, 0.8, 0.9, 0.95\}$, the predicted and actual logit deviations increase as the inverse factor approaches the stability boundary. The norm bound of Eq.~\eqref{eq:linear_norm_bound} has the correct denominator but is extremely loose in absolute magnitude, as expected from a worst-case operator-norm argument.

\clearpage
\input{checklist.tex}

\end{document}

%% file: tables/table_regime_compare.tex
\begin{table}[t]
\centering
\footnotesize
\setlength{\tabcolsep}{6pt}
\begin{tabular}{l l c c c}
\toprule
\multicolumn{2}{l}{\emph{Forget-set geometry under $\thetaret$}} & $\lambda_{\max}(\boldsymbol{K}_{ff})$ & $\|\boldsymbol{K}_{rf}\|_F$ & $\|\boldsymbol{e}_f^-\|$ \\
\midrule
\multicolumn{2}{l}{Random ($300$ runs)} & $1.45\!\times\!10^{6}$ & $4.74\!\times\!10^{5}$ & $5.28$ \\
\multicolumn{2}{l}{Class ($30$ runs)} & $2.27\!\times\!10^{6}$ & $6.26\!\times\!10^{5}$ & $24.25$ \\
\midrule
\multicolumn{5}{l}{\emph{Per-method output-space deviation from $\thetaret$ (mean $\pm$ std)}} \\
Method & Regime & RMS$_f$ & RMS$_r$ & RMS$_f$/RMS$_r$ \\
\midrule
GradDiff & random & $0.010{\scriptstyle\,\pm\,0.016}$ & $0.009{\scriptstyle\,\pm\,0.016}$ & $1.12$ \\
GradDiff & class & $0.038{\scriptstyle\,\pm\,0.037}$ & $0.031{\scriptstyle\,\pm\,0.029}$ & $1.23$ \\
\addlinespace[2pt]
NPO & random & $0.010{\scriptstyle\,\pm\,0.015}$ & $0.009{\scriptstyle\,\pm\,0.015}$ & $1.12$ \\
NPO & class & $0.032{\scriptstyle\,\pm\,0.032}$ & $0.025{\scriptstyle\,\pm\,0.022}$ & $1.25$ \\
\addlinespace[2pt]
SimNPO & random & $0.005{\scriptstyle\,\pm\,0.009}$ & $0.004{\scriptstyle\,\pm\,0.008}$ & $1.05$ \\
SimNPO & class & $0.009{\scriptstyle\,\pm\,0.011}$ & $0.009{\scriptstyle\,\pm\,0.011}$ & $1.01$ \\
\addlinespace[2pt]
ProjGD & random & $0.449{\scriptstyle\,\pm\,0.036}$ & $0.438{\scriptstyle\,\pm\,0.031}$ & $1.03$ \\
ProjGD & class & $0.509{\scriptstyle\,\pm\,0.065}$ & $0.440{\scriptstyle\,\pm\,0.034}$ & $1.16$ \\
\addlinespace[2pt]
Newton Step & random & $0.455{\scriptstyle\,\pm\,0.037}$ & $0.416{\scriptstyle\,\pm\,0.030}$ & $1.10$ \\
Newton Step & class & $0.728{\scriptstyle\,\pm\,0.074}$ & $0.470{\scriptstyle\,\pm\,0.040}$ & $1.55$ \\
\addlinespace[2pt]
\bottomrule
\end{tabular}
\caption{\textbf{Class vs.\ random forgetting on MNIST.} Top: mean leverage estimates using retain-normalized GGN curvature at $\thetaret$. Bottom: per-coordinate logit RMS gaps from $\thetaret$ on $\Df$ and $\Dr$ and their ratio. Class-regime forget accuracy is zero for GradDiff, NPO, SimNPO, and ProjGD, and $0.005\pm0.020$ for Newton Step. Loss ranges are summarized in the text; setup details in Appendix~\ref{app:exp_setup}.}
\label{tab:regime_compare}
\end{table}

%% file: tables/table_headline_grid.tex
\begin{table}[t]
\centering
\scriptsize
\setlength{\tabcolsep}{2.2pt}
\renewcommand{\arraystretch}{0.8}
\setlength{\abovecaptionskip}{4pt}
\begin{tabular}{l l cccc cccc}
\toprule
 & & \multicolumn{4}{c}{Random forget set} & \multicolumn{4}{c}{Class forget set} \\
\cmidrule(lr){3-6} \cmidrule(lr){7-10}
Data & Method & $P^K_f$ & $P^G_f$ & $P^K_r$ & $P^G_r$ & $P^K_f$ & $P^G_f$ & $P^K_r$ & $P^G_r$ \\
\midrule
MNIST & GradDiff & 0.98$\pm$0.02 & 0.83$\pm$0.06 & 0.72$\pm$0.10 & 0.40$\pm$0.06 & 0.98$\pm$0.00 & 0.81$\pm$0.04 & 0.95$\pm$0.02 & 0.59$\pm$0.09 \\
 & NPO & 0.98$\pm$0.02 & 0.83$\pm$0.06 & 0.72$\pm$0.10 & 0.40$\pm$0.05 & 0.98$\pm$0.00 & 0.81$\pm$0.04 & 0.95$\pm$0.02 & 0.58$\pm$0.09 \\
 & SimNPO & 0.94$\pm$0.12 & 0.52$\pm$0.22 & 0.74$\pm$0.09 & 0.24$\pm$0.05 & 0.98$\pm$0.01 & -0.03$\pm$0.13 & 0.97$\pm$0.01 & 0.00$\pm$0.18 \\
 & ProjGD & 0.96$\pm$0.02 & 0.94$\pm$0.02 & 0.75$\pm$0.01 & 0.83$\pm$0.03 & 0.95$\pm$0.01 & 0.97$\pm$0.01 & 0.87$\pm$0.03 & 0.87$\pm$0.05 \\
 & Pure GA & 0.01$\pm$0.01 & 0.02$\pm$0.02 & 0.05$\pm$0.03 & 0.04$\pm$0.04 & -0.36$\pm$0.06 & -0.39$\pm$0.05 & -0.11$\pm$0.03 & -0.23$\pm$0.07 \\
\midrule
CIFAR-10 & GradDiff & 0.89$\pm$0.03 & 0.74$\pm$0.04 & 0.82$\pm$0.05 & 0.63$\pm$0.06 & 0.97$\pm$0.01 & 0.94$\pm$0.02 & 0.87$\pm$0.09 & 0.74$\pm$0.07 \\
 & NPO & 0.89$\pm$0.03 & 0.74$\pm$0.04 & 0.82$\pm$0.05 & 0.63$\pm$0.06 & 0.97$\pm$0.01 & 0.94$\pm$0.02 & 0.87$\pm$0.09 & 0.74$\pm$0.07 \\
 & SimNPO & 0.91$\pm$0.01 & 0.60$\pm$0.11 & 0.86$\pm$0.01 & 0.51$\pm$0.09 & 0.97$\pm$0.01 & -0.14$\pm$0.17 & 0.86$\pm$0.10 & 0.04$\pm$0.17 \\
 & ProjGD & 0.79$\pm$0.03 & 0.74$\pm$0.05 & 0.71$\pm$0.04 & 0.67$\pm$0.06 & 0.97$\pm$0.01 & 0.98$\pm$0.01 & 0.87$\pm$0.02 & 0.80$\pm$0.04 \\
 & Pure GA & 0.22$\pm$0.07 & 0.26$\pm$0.07 & 0.29$\pm$0.08 & 0.27$\pm$0.08 & -0.51$\pm$0.07 & -0.45$\pm$0.08 & -0.04$\pm$0.01 & -0.23$\pm$0.06 \\
\bottomrule
\end{tabular}
\caption{\textbf{Local gap validation and controls.} Per-logit Pearson correlation between predicted and actual gaps on $\Df$ ($f$) and $256$ retain anchors ($r$), using retain-GGN ($P^K$) or Euclidean-Gram ($P^G$) geometry; mean$\pm$sd over 10 seeds at $s=0.1$. GradDiff/NPO/SimNPO test the local predictors of Table~\ref{tab:nonlinear_unified}; NPO/SimNPO predictors freeze reference weights, and SimNPO uses the weighted $\rho$ rule. ProjGD compares its projected-gradient endpoint with the unprojected predictor; Pure GA is a nonlocal control.}
\label{tab:headline_main}
\end{table}

%% file: tables/table4_matched_xe.tex
\begin{table}[t]
\centering\small
\resizebox{\textwidth}{!}{%
\begin{tabular}{l ccccccc}
\toprule
        Method & Forget CE & Top-1 $\downarrow$ & Top-2 $\downarrow$ & KL$(p_{r^*}\|p)$ $\downarrow$ & Margin & MIA $\to 0.5$ & Test acc $\uparrow$ \\
\midrule
        GradDiff & $0.172\!\pm\!0.037$ & $0.961\!\pm\!0.009$ & $0.988\!\pm\!0.009$ & $0.057\!\pm\!0.014$ & $0.930\!\pm\!0.007$ & $0.497\!\pm\!0.027$ & $0.904\!\pm\!0.006$ \\
        NPO & $0.174\!\pm\!0.036$ & $0.959\!\pm\!0.008$ & $0.988\!\pm\!0.008$ & $0.054\!\pm\!0.012$ & $0.928\!\pm\!0.008$ & $0.499\!\pm\!0.026$ & $0.904\!\pm\!0.006$ \\
        SimNPO & $0.176\!\pm\!0.041$ & $0.904\!\pm\!0.024$ & $0.995\!\pm\!0.006$ & $0.053\!\pm\!0.014$ & $0.872\!\pm\!0.011$ & $0.490\!\pm\!0.029$ & $0.904\!\pm\!0.007$ \\
        PureGA & $0.171\!\pm\!0.036$ & $0.958\!\pm\!0.007$ & $0.988\!\pm\!0.009$ & $0.075\!\pm\!0.019$ & $0.924\!\pm\!0.008$ & $0.495\!\pm\!0.028$ & $0.902\!\pm\!0.007$ \\
        IMU & $0.178\!\pm\!0.039$ & $0.972\!\pm\!0.006$ & $0.986\!\pm\!0.005$ & $0.106\!\pm\!0.028$ & $0.948\!\pm\!0.008$ & $0.507\!\pm\!0.027$ & $0.901\!\pm\!0.007$ \\
        ProjGD & $0.171\!\pm\!0.040$ & $0.961\!\pm\!0.009$ & $0.986\!\pm\!0.010$ & $0.054\!\pm\!0.013$ & $0.933\!\pm\!0.007$ & $0.496\!\pm\!0.025$ & $0.905\!\pm\!0.007$ \\
        Newton Step & $0.409\!\pm\!0.057$ & $0.880\!\pm\!0.019$ & $0.952\!\pm\!0.013$ & $0.077\!\pm\!0.022$ & $0.869\!\pm\!0.015$ & $0.476\!\pm\!0.024$ & $0.905\!\pm\!0.007$ \\
        Retraining & $0.173\!\pm\!0.034$ & $0.940\!\pm\!0.012$ & $0.985\!\pm\!0.009$ & $0$ & $0.901\!\pm\!0.012$ & $0.489\!\pm\!0.026$ & $0.907\!\pm\!0.007$ \\
\bottomrule
\end{tabular}%
}
\caption{Random-forget aggregate results on MNIST/MLP ($|\Df|=200$, $10$ seeds). The tunable gradient methods are matched to the retrained forget cross-entropy; Newton Step has no $\rho$ knob and is reported as its single second-order step. Setup in Appendix~\ref{app:exp_setup}; class-forget version in Figure~\ref{fig:loss_not_behavior}.}
\label{tab:loss_not_behavior}
\end{table}

%% file: tables/table_tofu_corners.tex
\begin{table}[t]
\centering
\small
\begin{tabular}{l c c c c c c}
\toprule
 & \multicolumn{4}{c}{Forget $|\Delta_i|$ by oracle-NLL proxy quartile} & & \\
\cmidrule(lr){2-5}
GradDiff step & Q1 (low) & Q2 & Q3 & Q4 (high) & Q4/Q1 & Spearman$(s_{\mathrm{NLL}},|\Delta|)$ \\
\midrule
        $0$ & $1.219$ & $2.037$ & $2.740$ & $3.518$ & $2.89\times$ & $+0.993$ \\
        $12$ & $1.006$ & $1.757$ & $2.386$ & $3.129$ & $3.11\times$ & $+0.909$ \\
        $31$ & $0.903$ & $1.099$ & $1.154$ & $1.202$ & $1.33\times$ & $+0.251$ \\
        $50$ & $1.228$ & $1.589$ & $1.578$ & $1.309$ & $1.07\times$ & $+0.104$ \\
        $60$ & $1.250$ & $1.622$ & $1.613$ & $1.316$ & $1.05\times$ & $+0.091$ \\
\bottomrule
\end{tabular}
\caption{Per-example forget deviation $|\Delta_i|$ grouped by oracle-NLL proxy quartile along the GradDiff trajectory on TOFU/Llama-3.2-1B-Instruct (forget05, $|\Df|=200$, 50 examples per quartile). Q1/Q4 contain the lowest/highest proxy scores. Setup in Appendix~\ref{app:exp_setup}. The rows are successive checkpoints of one trajectory, not independent runs. The step-0 Spearman is close to a tautology, because at step~0 $\Delta_i=\mathrm{NLL}_{\thetaret}(x_i)-\mathrm{NLL}_{\thetafull}(x_i)$ while the proxy is $\mathrm{NLL}_{\thetaret}(x_i)$ and $\mathrm{NLL}_{\thetafull}(x_i)$ is small (mean $0.13$ vs.\ $2.5$). Independent seeds and $\thetafull$-only scores are in Tables~\ref{tab:tofu_seeds} and~\ref{tab:tofu_direction}.}
\label{tab:tofu_corners}
\end{table}

%% file: tables/table_tofu_seeds.tex
\begin{table}[htbp]
\centering
\scriptsize
\setlength{\tabcolsep}{4pt}
\resizebox{\textwidth}{!}{%
\begin{tabular}{llccccc}
\toprule
\multicolumn{7}{l}{\textit{Forget channel}}\\
\midrule
Split & Method & Spearman$_f$ [95\% CI] / AUC$_f$ & Q4/Q1$_f$ [CI] & $\Delta_f$ vs.\ loss & null$_f$ ($p$) & token KL$_f$ \\
\midrule
forget01 & GradDiff & 0.14 [4.7e-3, 0.24] / 0.61 & 1.48 [1.05, 1.75] & 0.04 & [-0.07, 0.07] (0.001) & 3.49$\pm$0.09 \\
forget05 & GradDiff & 0.29 [0.25, 0.33] / 0.68 & 1.83 [1.61, 1.97] & -9.0e-3 & [-0.03, 0.03] (0.001) & 5.32$\pm$0.12 \\
forget01 & NPO & 2.4e-3 [-0.07, 0.13] / 0.52 & 1.12 [0.95, 1.31] & 0.03 & [-0.07, 0.07] (0.988) & 1.69$\pm$0.08 \\
forget05 & NPO & 0.21 [0.16, 0.23] / 0.58 & 1.39 [1.29, 1.48] & 0.06 & [-0.03, 0.03] (0.001) & 1.37$\pm$0.10 \\
forget05 & SimNPO & 0.23 [0.20, 0.26] / 0.61 & 1.28 [1.25, 1.34] & 0.04 & [-0.03, 0.03] (0.001) & 2.29$\pm$0.03 \\
\midrule
\multicolumn{7}{l}{\textit{Retain channel}}\\
\midrule
Split & Method & Spearman$_r$ [95\% CI] / AUC$_r$ & Q4/Q1$_r$ [CI] & $\Delta_r$ vs.\ baseline & null$_r$ ($p$) & token KL$_r$ \\
\midrule
forget01 & GradDiff & 0.41 [0.40, 0.44] / 0.72 & 2.92 [2.23, 4.28] & -0.08 & [-0.02, 0.02] (0.001) & 0.694$\pm$0.042 \\
forget05 & GradDiff & 0.28 [0.26, 0.31] / 0.63 & 1.60 [1.55, 1.76] & -0.07\,(loss) & [-0.02, 0.02] (0.001) & 1.183$\pm$0.049 \\
forget01 & NPO & 0.49 [0.46, 0.53] / 0.73 & 3.69 [3.28, 4.22] & -0.08 & [-0.02, 0.02] (0.001) & 0.142$\pm$2.4e-4 \\
forget05 & NPO & 0.26 [0.26, 0.31] / 0.62 & 1.32 [1.31, 1.41] & 0.03\,(loss) & [-0.02, 0.02] (0.001) & 0.780$\pm$0.014 \\
forget05 & SimNPO & 0.32 [0.30, 0.35] / 0.69 & 2.62 [2.43, 2.78] & -0.17\,(loss) & [-0.02, 0.02] (0.001) & 0.065$\pm$0.001 \\
\bottomrule
\end{tabular}%
}
\caption{\textbf{TOFU/Llama-3.2-1B: $\thetafull$-only $K$-channel scores vs.\ realized per-example gaps over independent optimizer seeds.} Unlearning uses the open-unlearning full-finetune recipe (lr $10^{-5}$, 10 epochs, 8--10 per-epoch checkpoints per trajectory) for $\{$forget01, forget05$\}\times\{$GradDiff, NPO$\}$ and forget05 SimNPO, with 3 optimizer seeds each. The reference is the released retain99/95 model. Scores are computed once at $\thetafull$ in the LoRA chart ($r{=}8$, diagonal GGN $+$ MC-Woodbury $\widehat{\boldsymbol{H}}$, $\lambda{=}0.01$) without access to $\thetaret$ or to the trajectories. With $\boldsymbol{g}_i$ the LoRA-chart gradient of the answer NLL of example $i$ at $\thetafull$, the forget score is $\boldsymbol{g}_j^\top\widehat{\boldsymbol{H}}^{-1}\boldsymbol{g}_j$ and the retain score is $\sum_{j\in\Df}|\boldsymbol{g}_i^\top\widehat{\boldsymbol{H}}^{-1}\boldsymbol{g}_j|$. Actual gaps are the per-example $|\mathrm{NLL}_u-\mathrm{NLL}_{\mathrm{ret}}|$ on the forget set (top) and on 400 retain anchors (bottom). Per-epoch checkpoints (checkpoint~0 excluded) are aggregated by the median within a trajectory and then averaged across seeds, with hierarchical-bootstrap CIs ($B{=}1{,}000$). The $\Delta$ columns subtract the Spearman of a cheap per-example baseline, namely the NLL at $\thetafull$ on the forget channel and the gradient norm (or the NLL where marked ``loss'') on the retain channel. The null columns give the 95\% interval of the within-checkpoint permutation null ($1{,}000$ permutations) and the two-sided $p$ of the observed Spearman. Token KL is the mean answer-token $\mathrm{KL}(p_{\thetaret}\Vert p_{\boldsymbol{\theta}_u})$ at the final checkpoint. The scores identify examples with the largest NLL deviations from retraining on both channels, above the permutation null in every cell except the NPO forget01 forget channel, and are comparable to the cheap per-example baselines.}
\label{tab:tofu_seeds}
\end{table}

%% file: tables/table_tofu_direction.tex
\begin{table}[htbp]
\centering
\scriptsize
\setlength{\tabcolsep}{4pt}
\resizebox{\textwidth}{!}{%
\begin{tabular}{llccccc}
\toprule
Split & Method & $P$ $f/r$ & Cosine $f/r$ & perm.\ null $f/r$ & RelErr $f/r$ & scale CV (within/pooled) \\
\midrule
forget01 & GradDiff & 0.52$\pm$0.07 / 0.33$\pm$0.03 & 0.78$\pm$0.02 / 0.72$\pm$0.06 & 0.59$\pm$0.01 / 0.57$\pm$6.2e-3 & 0.63$\pm$0.03 / 0.69$\pm$0.06 & 0.51 / 0.65 \\
forget05 & GradDiff & 0.58$\pm$0.03 / 0.23$\pm$5.6e-3 & 0.75$\pm$0.01 / 0.72$\pm$3.5e-3 & 0.52$\pm$6.3e-3 / 0.64$\pm$4.2e-3 & 0.67$\pm$0.02 / 0.69$\pm$3.6e-3 & 0.50 / 0.51 \\
forget01 & NPO & 0.40$\pm$0.09 / 0.27$\pm$0.03 & 0.74$\pm$0.03 / 0.70$\pm$0.02 & 0.62$\pm$0.01 / 0.60$\pm$5.0e-3 & 0.67$\pm$0.03 / 0.71$\pm$0.02 & 0.39 / 0.49 \\
forget05 & NPO & 0.16$\pm$0.03 / 0.19$\pm$0.01 & 0.61$\pm$6.6e-3 / 0.75$\pm$1.3e-3 & 0.57$\pm$4.3e-3 / 0.70$\pm$2.9e-3 & 0.79$\pm$5.1e-3 / 0.67$\pm$1.5e-3 & 0.43 / 0.43 \\
forget05 & SimNPO & 0.43$\pm$0.01 / 0.25$\pm$0.02 & 0.62$\pm$0.02 / 0.58$\pm$0.03 & 0.33$\pm$0.01 / 0.44$\pm$6.0e-3 & 0.78$\pm$0.01 / 0.82$\pm$0.02 & 0.27 / 0.28 \\
\midrule
\multicolumn{7}{l}{\textit{Dropout-LoRA score arm: $A$ frozen, exact GGN on the active $B$ half, vocab-dimension MC Woodbury}} \\
\midrule
forget01 & GradDiff & 0.55$\pm$0.08 / 0.39$\pm$0.02 & 0.78$\pm$0.03 / 0.76$\pm$0.05 & 0.57$\pm$0.01 / 0.60$\pm$5.9e-3 & 0.63$\pm$0.04 / 0.64$\pm$0.06 & 0.49 / 0.64 \\
forget05 & GradDiff & 0.57$\pm$0.03 / 0.35$\pm$4.2e-3 & 0.74$\pm$0.01 / 0.83$\pm$1.3e-3 & 0.52$\pm$6.4e-3 / 0.73$\pm$3.3e-3 & 0.67$\pm$0.02 / 0.56$\pm$2.0e-3 & 0.50 / 0.51 \\
forget01 & NPO & 0.33$\pm$0.17 / 0.31$\pm$0.04 & 0.71$\pm$0.05 / 0.74$\pm$0.02 & 0.60$\pm$0.01 / 0.62$\pm$4.7e-3 & 0.70$\pm$0.05 / 0.68$\pm$0.03 & 0.37 / 0.46 \\
forget05 & NPO & 0.16$\pm$0.03 / 0.23$\pm$0.02 & 0.61$\pm$6.2e-3 / 0.84$\pm$1.4e-3 & 0.56$\pm$4.3e-3 / 0.80$\pm$2.3e-3 & 0.80$\pm$4.7e-3 / 0.54$\pm$2.2e-3 & 0.43 / 0.43 \\
forget05 & SimNPO & 0.44$\pm$0.01 / 0.39$\pm$0.01 & 0.63$\pm$0.02 / 0.67$\pm$0.02 & 0.33$\pm$9.9e-3 / 0.51$\pm$4.9e-3 & 0.78$\pm$0.01 / 0.74$\pm$0.02 & 0.27 / 0.27 \\
\bottomrule
\end{tabular}%
}
\caption{\textbf{TOFU/Llama-3.2-1B: per-example profile of the NLL movement from $\thetafull$ against the $\thetafull$-only scores.} Predictor-only re-analysis of the trajectories of Table~\ref{tab:tofu_seeds} (3 seeds, 7--10 checkpoints each) with the same forget and retain scores, which carry no method weights $\boldsymbol{W}^-$ and no $\rho/n$ factor. The actual quantity is the per-example sequence-NLL movement $\mathrm{nll}(\mathrm{ckpt})-\mathrm{nll}(\thetafull)$ (absolute value on the retain channel). This is the movement away from the original model and not the gap to retraining, so the test asks only whether the per-example profile of that movement is predicted up to one scale. $P$ is the per-example Pearson correlation (its permutation null is $0$). Cosine is the cosine between the two vectors (median over checkpoints per seed, mean$\pm$sd over seeds) and perm.\ null is its within-checkpoint permutation baseline (500 permutations), so the profile is predicted only where Cosine exceeds its null. RelErr is $\min_s\|s\,k-\Delta\|_2/\|\Delta\|_2$ after fitting the single free scale, and scale CV is the coefficient of variation of that fitted scale (within-seed / pooled). The second block computes the scores on the half-frozen LoRA chart ($\boldsymbol{A}$ frozen, exact GGN over the active $\boldsymbol{B}$ coordinates, $S{=}4$ vocabulary-dimension MC pseudo-gradients). NPO/SimNPO rows use the unweighted scores and are therefore conservative checks, not calibrated $\boldsymbol{W}^-$ predictions.}
\label{tab:tofu_direction}
\end{table}

%% file: tables/table_headline_mnist_full.tex
\begin{table}[htbp]
\centering
\scriptsize
\setlength{\tabcolsep}{3pt}
\textbf{Panel A: agreement at the operating point $s=0.1$}\\[2pt]
\resizebox{\textwidth}{!}{%
\begin{tabular}{llccccccc}
\toprule
Regime & Method &
$P^K_f$ & $P^G_f$ & $P^K_r$ & $P^G_r$ & $P^K_a$ &
RelErr $f/r$ &
KL$(\thetaret\Vert\boldsymbol{\theta}_u)$ $f/r$ \\
\midrule
Random & GradDiff & 0.98$\pm$0.02 & 0.83$\pm$0.06 & 0.72$\pm$0.10 & 0.40$\pm$0.06 & 0.87$\pm$0.05 & 0.30$\pm$0.17 / 0.77$\pm$0.11 & 5.3e-7$\pm$4.7e-7 / 9.2e-10$\pm$1.1e-9 \\
Random & NPO$^\dagger$ & 0.98$\pm$0.02 & 0.83$\pm$0.06 & 0.72$\pm$0.10 & 0.40$\pm$0.05 & 0.87$\pm$0.06$^\ddagger$ & 0.31$\pm$0.17 / 0.77$\pm$0.11 & 4.8e-7$\pm$4.4e-7 / 7.9e-10$\pm$9.6e-10 \\
Random & SimNPO$^\dagger$ & 0.94$\pm$0.12 & 0.52$\pm$0.22 & 0.74$\pm$0.09 & 0.24$\pm$0.05 & \blank & 0.31$\pm$0.21 / 0.75$\pm$0.10 & 1.5e-7$\pm$2.2e-7 / 1.8e-10$\pm$2.1e-10 \\
Random & ProjGD & 0.96$\pm$0.02 & 0.94$\pm$0.02 & 0.75$\pm$0.01 & 0.83$\pm$0.03 & 0.80$\pm$0.04 & 13.76$\pm$2.63 / 6.00$\pm$0.53 & 3.5e-9$\pm$2.6e-9 / 6.5e-11$\pm$7.9e-11 \\
Random & Pure GA & 5.5e-3$\pm$0.01 & 0.02$\pm$0.02 & 0.05$\pm$0.03 & 0.04$\pm$0.04 & 0.04$\pm$0.02 & 1.00$\pm$4.9e-6 / 1.00$\pm$1.9e-6 & 1.82$\pm$0.03 / 1.90$\pm$0.02 \\
Random & IMU & 5.5e-3$\pm$0.01 & -0.04$\pm$0.02 & 0.05$\pm$0.03 & -0.05$\pm$0.03 & 0.04$\pm$0.02 & 1.00$\pm$4.9e-6 / 1.00$\pm$1.9e-6 & 1.82$\pm$0.03 / 1.90$\pm$0.02 \\
\midrule
Class & GradDiff & 0.98$\pm$4.9e-3 & 0.81$\pm$0.04 & 0.95$\pm$0.02 & 0.59$\pm$0.09 & 0.96$\pm$9.7e-3 & 0.31$\pm$0.07 / 0.31$\pm$0.05 & 7.9e-6$\pm$1.0e-5 / 7.3e-8$\pm$1.3e-7 \\
Class & NPO$^\dagger$ & 0.98$\pm$4.9e-3 & 0.81$\pm$0.04 & 0.95$\pm$0.02 & 0.58$\pm$0.09 & 0.96$\pm$9.4e-3$^\ddagger$ & 0.31$\pm$0.07 / 0.30$\pm$0.05 & 4.7e-6$\pm$6.1e-6 / 4.4e-8$\pm$7.5e-8 \\
Class & SimNPO$^\dagger$ & 0.98$\pm$6.2e-3 & -0.03$\pm$0.13 & 0.97$\pm$8.2e-3 & 2.5e-4$\pm$0.18 & \blank & 0.27$\pm$0.08 / 0.25$\pm$0.03 & 1.4e-6$\pm$1.9e-6 / 1.4e-8$\pm$1.7e-8 \\
Class & ProjGD & 0.95$\pm$0.01 & 0.97$\pm$0.01 & 0.87$\pm$0.03 & 0.87$\pm$0.05 & 0.92$\pm$0.01 & 8.43$\pm$1.37 / 8.28$\pm$0.75 & 2.7e-7$\pm$4.0e-7 / 2.8e-9$\pm$5.2e-9 \\
Class & Pure GA & -0.36$\pm$0.06 & -0.39$\pm$0.05 & -0.11$\pm$0.03 & -0.23$\pm$0.07 & -0.15$\pm$0.01 & 1.00$\pm$7.3e-4 / 1.00$\pm$3.9e-5 & 1.24$\pm$0.16 / 1.93$\pm$0.02 \\
Class & IMU & -0.36$\pm$0.06 & -0.39$\pm$0.05 & -0.11$\pm$0.03 & -0.23$\pm$0.06 & -0.15$\pm$0.01 & 1.00$\pm$7.3e-4 / 1.00$\pm$3.9e-5 & 1.24$\pm$0.16 / 1.93$\pm$0.02 \\
\bottomrule
\end{tabular}%
}

\vspace{0.6em}
\textbf{Panel B: agreement across the normalized removal scale $s$ (GradDiff / NPO), $\rho=s\,n/\lambda_{\max}(\boldsymbol{K}_{ff})$}\\[2pt]
\resizebox{\textwidth}{!}{%
\begin{tabular}{llcccccccc}
\toprule
& & \multicolumn{5}{c}{$P^K_f$ at $s$} & \multicolumn{2}{c}{at $s=2$} & \\
\cmidrule(lr){3-7}\cmidrule(lr){8-9}
Regime & Method & $0.05$ & $0.2$ & $0.5$ & $1.0$ & $2.0$ & $P^K_r$ & $\|\boldsymbol{\theta}_u-\thetaret\|_2$
& forget CE $s{=}0.05{\to}2$ \\
\midrule
Random & GradDiff & 0.98$\pm$0.02 & 0.98$\pm$0.01 & 0.98$\pm$0.02 & 0.98$\pm$0.02 & 0.97$\pm$0.03 & 0.73$\pm$0.10 & 0.016$\pm$0.009 & 0.32$\pm$0.03 $\to$ 0.32$\pm$0.03 \\
Random & NPO & 0.98$\pm$0.02 & 0.98$\pm$0.02 & 0.98$\pm$0.02 & 0.98$\pm$0.02 & 0.97$\pm$0.03 & 0.72$\pm$0.11 & 0.015$\pm$0.008 & 0.32$\pm$0.03 $\to$ 0.32$\pm$0.03 \\
\midrule
Class & GradDiff & 0.98$\pm$4.9e-3 & 0.98$\pm$5.0e-3 & 0.97$\pm$6.2e-3 & 0.96$\pm$0.02 & 0.97$\pm$0.01 & 0.92$\pm$0.07 & 0.093$\pm$0.063 & 4.41$\pm$0.25 $\to$ 4.65$\pm$0.36 \\
Class & NPO & 0.98$\pm$5.1e-3 & 0.97$\pm$5.2e-3 & 0.97$\pm$0.02 & 0.97$\pm$8.7e-3 & 0.96$\pm$0.01 & 0.91$\pm$0.08 & 0.070$\pm$0.043 & 4.41$\pm$0.25 $\to$ 4.59$\pm$0.31 \\
\bottomrule
\end{tabular}%
}
\caption{\textbf{MNIST: full local gap validation and controls.}
The setting is a 2-layer $\tanh$ MLP ($784\to24\to10$, $d{=}19{,}090$), $n{=}4000$, $\lambda{=}10^{-2}$, exact fp64 throughout.
Protocol (Appendix~\ref{app:exp_setup}). The warm retrain is polished to stationarity under the full-$n$ normalization and $\boldsymbol{H}_r$ is the exact fp64 GGN. GradDiff, NPO, SimNPO, Pure GA, and IMU endpoints are tracked from $\thetaret$ by Newton continuation. ProjGD uses projected-gradient dynamics and the unprojected predictor. GradDiff/NPO/SimNPO test the local predictors of Table~\ref{tab:nonlinear_unified}; Pure GA and IMU are controls outside the local regime. Mean$\pm$sd over 10 seeds.
At $s=0.1$ the GradDiff endpoint lies $0.0008$ (random) / $0.0040$ (class) from $\thetaret$ in parameter norm, versus $\|\thetafull-\thetaret\|_2=3.39$ / $4.06$, illustrating the locality of the retain-anchored validation.
Columns are as in Table~\ref{tab:headline_main}. $P^K_a$ is the per-logit Pearson correlation on the held-out test subsample ($\boldsymbol{K}_{af}$ channel), RelErr is the relative $\ell_2$ error of the predicted gap vector, and KL is the output-space $\mathrm{KL}(p_{\thetaret}\Vert p_{\boldsymbol{\theta}_u})$ at the endpoint. The shuffled-anchor control gives $|P|\le0.20$.
$^\dagger$method-specific $(\boldsymbol{K},\boldsymbol{W}^-)$ scoring with frozen reference weights for NPO/SimNPO. $^\ddagger$test channel scored with uniform weights (for NPO the weights at $\thetaret$ are nearly uniform), not stored for SimNPO.
Panel~B repeats the construction at larger removal scales. $P^K_f$ stays at or above $0.96$ up to $s=2$, while the endpoint moves farther from $\thetaret$ and the class-regime forget cross-entropy rises. Here $s$ is normalized by $\lambda_{\max}(\boldsymbol{K}_{ff})$ alone, so $s=1$ is the squared-loss boundary of Corollary~\ref{lem:stability}. Under cross-entropy the softmax curvature $\boldsymbol{B}_f^-$ enters the stability factor of Eq.~\eqref{eq:nonlinear_gap}, and the tracked fixed point continues past $s=1$.}
\label{tab:headline_mnist_full}
\end{table}

%% file: tables/table_headline_cifar_full.tex
\begin{table}[htbp]
\centering
\scriptsize
\setlength{\tabcolsep}{3pt}
\textbf{Panel A: agreement at the operating point $s=0.1$}\\[2pt]
\resizebox{\textwidth}{!}{%
\begin{tabular}{llccccccc}
\toprule
Regime & Method &
$P^K_f$ & $P^G_f$ & $P^K_r$ & $P^G_r$ & $P^K_a$ &
RelErr $f/r$ &
KL$(\thetaret\Vert\boldsymbol{\theta}_u)$ $f/r$ \\
\midrule
Random & GradDiff & 0.89$\pm$0.03 & 0.74$\pm$0.04 & 0.82$\pm$0.05 & 0.63$\pm$0.06 & 0.83$\pm$0.04 & 0.78$\pm$0.03 / 0.84$\pm$0.06 & 7.1e-8$\pm$3.8e-8 / 4.9e-9$\pm$3.4e-9 \\
Random & NPO$^\dagger$ & 0.89$\pm$0.03 & 0.74$\pm$0.04 & 0.82$\pm$0.05 & 0.63$\pm$0.06 & 0.83$\pm$0.04$^\ddagger$ & 0.78$\pm$0.03 / 0.84$\pm$0.06 & 6.8e-8$\pm$3.6e-8 / 4.7e-9$\pm$3.1e-9 \\
Random & SimNPO$^\dagger$ & 0.91$\pm$8.6e-3 & 0.60$\pm$0.11 & 0.86$\pm$0.01 & 0.51$\pm$0.09 & \blank & 0.68$\pm$0.03 / 0.69$\pm$0.05 & 1.1e-8$\pm$4.0e-9 / 1.5e-9$\pm$3.0e-10 \\
Random & ProjGD & 0.79$\pm$0.03 & 0.74$\pm$0.05 & 0.71$\pm$0.04 & 0.67$\pm$0.06 & 0.71$\pm$0.03 & 8.29$\pm$1.20 / 6.82$\pm$0.90 & 1.6e-9$\pm$8.5e-10 / 2.3e-10$\pm$1.2e-10 \\
Random & Pure GA & 0.22$\pm$0.07 & 0.26$\pm$0.07 & 0.29$\pm$0.08 & 0.27$\pm$0.08 & 0.24$\pm$0.07 & 1.00$\pm$9.5e-6 / 1.00$\pm$7.8e-6 & 2.72$\pm$1.07 / 2.71$\pm$1.06 \\
Random & IMU & 0.24$\pm$0.06 & 0.07$\pm$0.04 & 0.32$\pm$0.08 & 0.07$\pm$0.05 & 0.26$\pm$0.06 & 1.00$\pm$6.0e-6 / 1.00$\pm$5.3e-6 & 2.94$\pm$0.25 / 2.91$\pm$0.26 \\
\midrule
Class & GradDiff & 0.97$\pm$0.01 & 0.94$\pm$0.02 & 0.87$\pm$0.09 & 0.74$\pm$0.07 & 0.94$\pm$0.04 & 0.78$\pm$0.14 / 0.61$\pm$0.16 & 3.0e-6$\pm$1.4e-6 / 3.2e-9$\pm$2.0e-9 \\
Class & NPO$^\dagger$ & 0.97$\pm$0.01 & 0.94$\pm$0.02 & 0.87$\pm$0.09 & 0.74$\pm$0.07 & 0.94$\pm$0.04$^\ddagger$ & 0.78$\pm$0.14 / 0.61$\pm$0.16 & 2.1e-6$\pm$9.6e-7 / 2.2e-9$\pm$1.4e-9 \\
Class & SimNPO$^\dagger$ & 0.97$\pm$0.01 & -0.14$\pm$0.17 & 0.86$\pm$0.10 & 0.04$\pm$0.17 & \blank & 0.79$\pm$0.15 / 0.63$\pm$0.17 & 1.4e-6$\pm$8.3e-7 / 1.6e-9$\pm$1.1e-9 \\
Class & ProjGD & 0.97$\pm$8.3e-3 & 0.98$\pm$8.8e-3 & 0.87$\pm$0.02 & 0.80$\pm$0.04 & 0.95$\pm$8.2e-3 & 13.82$\pm$1.19 / 11.48$\pm$0.62 & 4.5e-8$\pm$3.0e-8 / 5.2e-11$\pm$3.8e-11 \\
Class & Pure GA & -0.51$\pm$0.07 & -0.45$\pm$0.08 & -0.04$\pm$0.01 & -0.23$\pm$0.06 & -0.12$\pm$8.3e-3 & 1.00$\pm$2.6e-4 / 1.00$\pm$4.6e-6 & 0.74$\pm$0.14 / 2.11$\pm$0.01 \\
Class & IMU & -0.51$\pm$0.07 & -0.44$\pm$0.08 & -0.04$\pm$0.01 & -0.23$\pm$0.06 & -0.12$\pm$8.3e-3 & 1.00$\pm$2.6e-4 / 1.00$\pm$4.6e-6 & 0.74$\pm$0.14 / 2.11$\pm$0.01 \\
\bottomrule
\end{tabular}%
}

\vspace{0.6em}
\textbf{Panel B: agreement across the normalized removal scale $s$ (GradDiff / NPO), $\rho=s\,n/\lambda_{\max}(\boldsymbol{K}_{ff})$}\\[2pt]
\resizebox{\textwidth}{!}{%
\begin{tabular}{llcccccccc}
\toprule
& & \multicolumn{5}{c}{$P^K_f$ at $s$} & \multicolumn{2}{c}{at $s=2$} & \\
\cmidrule(lr){3-7}\cmidrule(lr){8-9}
Regime & Method & $0.05$ & $0.2$ & $0.5$ & $1.0$ & $2.0$ & $P^K_r$ & $\|\boldsymbol{\theta}_u-\thetaret\|_2$
& forget CE $s{=}0.05{\to}2$ \\
\midrule
Random & GradDiff & 0.89$\pm$0.03 & 0.89$\pm$0.03 & 0.89$\pm$0.03 & 0.89$\pm$0.03 & 0.88$\pm$0.04 & 0.80$\pm$0.05 & 0.023$\pm$0.007 & 0.05$\pm$0.01 $\to$ 0.05$\pm$0.01 \\
Random & NPO & 0.89$\pm$0.03 & 0.89$\pm$0.03 & 0.89$\pm$0.03 & 0.89$\pm$0.03 & 0.88$\pm$0.05 & 0.80$\pm$0.05 & 0.023$\pm$0.007 & 0.05$\pm$0.01 $\to$ 0.05$\pm$0.01 \\
\midrule
Class & GradDiff & 0.97$\pm$0.01 & 0.97$\pm$0.02 & 0.97$\pm$0.03 & 0.95$\pm$0.08 & 0.97$\pm$0.01 & 0.85$\pm$0.07 & 0.030$\pm$0.004 & 3.31$\pm$0.18 $\to$ 3.39$\pm$0.20 \\
Class & NPO & 0.97$\pm$0.01 & 0.97$\pm$0.02 & 0.97$\pm$0.02 & 0.96$\pm$0.05 & 0.97$\pm$0.01 & 0.86$\pm$0.07 & 0.025$\pm$0.003 & 3.31$\pm$0.18 $\to$ 3.38$\pm$0.19 \\
\bottomrule
\end{tabular}%
}
\caption{\textbf{CIFAR-10: full local gap validation and controls.}
The setting is a 2-layer $\tanh$ MLP head ($512\to24\to10$, $d{=}12{,}562$) over frozen ResNet-18 penultimate features, $n{=}4000$, $\lambda{=}10^{-2}$, exact fp64 throughout.
Protocol (Appendix~\ref{app:exp_setup}). The warm retrain is polished to stationarity under the full-$n$ normalization and $\boldsymbol{H}_r$ is the exact fp64 GGN. GradDiff, NPO, SimNPO, Pure GA, and IMU endpoints are tracked from $\thetaret$ by Newton continuation. ProjGD uses projected-gradient dynamics and the unprojected predictor. GradDiff/NPO/SimNPO test the local predictors of Table~\ref{tab:nonlinear_unified}; Pure GA and IMU are controls outside the local regime. Mean$\pm$sd over 10 seeds.
At $s=0.1$ the GradDiff endpoint lies $0.0011$ (random) / $0.0015$ (class) from $\thetaret$ in parameter norm, versus $\|\thetafull-\thetaret\|_2=0.91$ / $1.70$, illustrating the locality of the retain-anchored validation.
Columns are as in Table~\ref{tab:headline_main}. $P^K_a$ is the per-logit Pearson correlation on the held-out test subsample ($\boldsymbol{K}_{af}$ channel), RelErr is the relative $\ell_2$ error of the predicted gap vector, and KL is the output-space $\mathrm{KL}(p_{\thetaret}\Vert p_{\boldsymbol{\theta}_u})$ at the endpoint. The shuffled-anchor control gives $|P|\le0.18$.
$^\dagger$method-specific $(\boldsymbol{K},\boldsymbol{W}^-)$ scoring with frozen reference weights for NPO/SimNPO. $^\ddagger$test channel scored with uniform weights (for NPO the weights at $\thetaret$ are nearly uniform), not stored for SimNPO.
Panel~B repeats the construction at larger removal scales. $P^K_f$ stays at or above $0.88$ up to $s=2$, while the endpoint moves farther from $\thetaret$ and the class-regime forget cross-entropy rises. Here $s$ is normalized by $\lambda_{\max}(\boldsymbol{K}_{ff})$ alone, so $s=1$ is the squared-loss boundary of Corollary~\ref{lem:stability}. Under cross-entropy the softmax curvature $\boldsymbol{B}_f^-$ enters the stability factor of Eq.~\eqref{eq:nonlinear_gap}, and the tracked fixed point continues past $s=1$.}
\label{tab:headline_cifar_full}
\end{table}

%% file: tables/table_beta_swap.tex
\begin{table}[htbp]
\centering
\scriptsize
\setlength{\tabcolsep}{3pt}
\resizebox{\textwidth}{!}{%
\begin{tabular}{llccccccc}
\toprule
Regime & Method & $\beta$ & $\bar w(\thetaret)$ & forget CE & $P^K_f$ & $P^K_r$ & Cosine $f/r$ & RelErr $f/r$ \\
\midrule
Random & NPO & 0.1 & 0.99$\pm$1.2e-3 & 0.32$\pm$0.03 & 0.98$\pm$0.02 & 0.72$\pm$0.10 & 0.98$\pm$0.02 / 0.72$\pm$0.10 & 0.41$\pm$0.21 / 0.82$\pm$0.12 \\
Random & NPO & 2.5 & 0.87$\pm$0.02 & 0.32$\pm$0.03 & 0.60$\pm$0.26 & 0.43$\pm$0.10 & 0.60$\pm$0.26 / 0.43$\pm$0.10 & 4.93$\pm$2.08 / 2.98$\pm$0.62 \\
Random & SimNPO & 0.1 & 0.98$\pm$1.5e-3 & 0.32$\pm$0.03 & 0.98$\pm$0.02 & 0.72$\pm$0.10 & 0.98$\pm$0.02 / 0.72$\pm$0.10 & 0.42$\pm$0.22 / 0.84$\pm$0.12 \\
Random & SimNPO & 2.5 & 0.78$\pm$0.02 & 0.32$\pm$0.03 & 0.58$\pm$0.28 & 0.39$\pm$0.10 & 0.58$\pm$0.28 / 0.39$\pm$0.10 & 5.82$\pm$2.48 / 4.26$\pm$0.73 \\
\midrule
Class & NPO & 0.1 & 0.79$\pm$0.01 & 4.41$\pm$0.25 & 0.98$\pm$5.2e-3 & 0.95$\pm$0.02 & 0.98$\pm$5.2e-3 / 0.95$\pm$0.02 & 0.60$\pm$0.11 / 0.45$\pm$0.06 \\
Class & NPO & 2.5 & 8.0e-4$\pm$1.6e-3 & 4.40$\pm$0.25 & 0.04$\pm$0.16 & 0.01$\pm$0.07 & 0.04$\pm$0.16 / 0.01$\pm$0.07 & 1444.73$\pm$1884.98 / 412.95$\pm$628.17 \\
Class & SimNPO & 0.1 & 0.78$\pm$0.01 & 4.41$\pm$0.25 & 0.98$\pm$5.2e-3 & 0.95$\pm$0.02 & 0.98$\pm$5.2e-3 / 0.95$\pm$0.02 & 0.61$\pm$0.11 / 0.45$\pm$0.06 \\
Class & SimNPO & 2.5 & 9.7e-5$\pm$4.8e-5 & 4.40$\pm$0.25 & 0.03$\pm$0.13 & 0.01$\pm$0.02 & 0.03$\pm$0.13 / 0.01$\pm$0.02 & 1500.59$\pm$1931.20 / 417.88$\pm$630.95 \\
\bottomrule
\end{tabular}%
}
\caption{\textbf{$\beta$-swap control on MNIST.} The four corners of the (method, $\beta$) square under the protocol of Table~\ref{tab:headline_mnist_full}, scored with uniform weights (10 seeds). $\bar w(\thetaret)$ is the mean per-example forget weight at the reference in the deployed $2\sigma(\cdot)$ convention, $2\sigma(-\beta(\log p^{\thetafull}-\log p^{\thetaret}))$ for NPO and $2\sigma(\beta\log p^{\thetaret})$ for SimNPO. At $\beta=0.1$ both methods agree with the predictor equally well. At $\beta=2.5$ both collapse in the class regime. There the weights saturate ($\bar w\sim10^{-4}$ when $\mathrm{CE}_f(\thetaret)\approx4.4$), the effective forget term vanishes, and the endpoint stays at $\thetaret$ up to optimizer noise, so the forget CE equals the reference value and RelErr $\gg1$. Swapping $\beta$ swaps the behavior, so the NPO/SimNPO difference comes from $\beta$-induced weight saturation and not from the reference-based vs.\ reference-free form.}
\label{tab:beta_swap}
\end{table}

%% file: checklist.tex
\section*{NeurIPS Paper Checklist}

\begin{enumerate}

\item {\bf Claims}
    \item[] Question: Do the main claims made in the abstract and introduction accurately reflect the paper's contributions and scope?
    \item[] Answer: \answerYes{} % Replace by \answerYes{}, \answerNo{}, or \answerNA{}.
    \item[] Justification: The main claims made in the abstract and introduction are that we provide a unified view of machine unlearning methods, and that we provide a theoretical analysis of the output gaps of these methods. These claims are supported by the content of the paper.
    \item[] Guidelines:
    \begin{itemize}
        \item The answer \answerNA{} means that the abstract and introduction do not include the claims made in the paper.
        \item The abstract and/or introduction should clearly state the claims made, including the contributions made in the paper and important assumptions and limitations. A \answerNo{} or \answerNA{} answer to this question will not be perceived well by the reviewers. 
        \item The claims made should match theoretical and experimental results, and reflect how much the results can be expected to generalize to other settings. 
        \item It is fine to include aspirational goals as motivation as long as it is clear that these goals are not attained by the paper. 
    \end{itemize}

\item {\bf Limitations}
    \item[] Question: Does the paper discuss the limitations of the work performed by the authors?
    \item[] Answer: \answerYes{} % Replace by \answerYes{}, \answerNo{}, or \answerNA{}.
    \item[] Justification: The paper includes discussions of the limitations of the proposed methods, including assumptions made for theoretical analysis, practical approximation errors, and potential practicality issues.
    \item[] Guidelines:
    \begin{itemize}
        \item The answer \answerNA{} means that the paper has no limitation while the answer \answerNo{} means that the paper has limitations, but those are not discussed in the paper. 
        \item The authors are encouraged to create a separate ``Limitations'' section in their paper.
        \item The paper should point out any strong assumptions and how robust the results are to violations of these assumptions (e.g., independence assumptions, noiseless settings, model well-specification, asymptotic approximations only holding locally). The authors should reflect on how these assumptions might be violated in practice and what the implications would be.
        \item The authors should reflect on the scope of the claims made, e.g., if the approach was only tested on a few datasets or with a few runs. In general, empirical results often depend on implicit assumptions, which should be articulated.
        \item The authors should reflect on the factors that influence the performance of the approach. For example, a facial recognition algorithm may perform poorly when image resolution is low or images are taken in low lighting. Or a speech-to-text system might not be used reliably to provide closed captions for online lectures because it fails to handle technical jargon.
        \item The authors should discuss the computational efficiency of the proposed algorithms and how they scale with dataset size.
        \item If applicable, the authors should discuss possible limitations of their approach to address problems of privacy and fairness.
        \item While the authors might fear that complete honesty about limitations might be used by reviewers as grounds for rejection, a worse outcome might be that reviewers discover limitations that aren't acknowledged in the paper. The authors should use their best judgment and recognize that individual actions in favor of transparency play an important role in developing norms that preserve the integrity of the community. Reviewers will be specifically instructed to not penalize honesty concerning limitations.
    \end{itemize}

\item {\bf Theory assumptions and proofs}
    \item[] Question: For each theoretical result, does the paper provide the full set of assumptions and a complete (and correct) proof?
    \item[] Answer: \answerYes{} % Replace by \answerYes{}, \answerNo{}, or \answerNA{}.
    \item[] Justification: The paper provides a complete set of assumptions and proofs for all theoretical results, either in the main text or in the supplementary material.
    \item[] Guidelines:
    \begin{itemize}
        \item The answer \answerNA{} means that the paper does not include theoretical results. 
        \item All the theorems, formulas, and proofs in the paper should be numbered and cross-referenced.
        \item All assumptions should be clearly stated or referenced in the statement of any theorems.
        \item The proofs can either appear in the main paper or the supplemental material, but if they appear in the supplemental material, the authors are encouraged to provide a short proof sketch to provide intuition. 
        \item Inversely, any informal proof provided in the core of the paper should be complemented by formal proofs provided in appendix or supplemental material.
        \item Theorems and Lemmas that the proof relies upon should be properly referenced. 
    \end{itemize}

    \item {\bf Experimental result reproducibility}
    \item[] Question: Does the paper fully disclose all the information needed to reproduce the main experimental results of the paper to the extent that it affects the main claims and/or conclusions of the paper (regardless of whether the code and data are provided or not)?
    \item[] Answer: \answerYes{} % Replace by \answerYes{}, \answerNo{}, or \answerNA{}.
    \item[] Justification: Appendix~\ref{app:exp_setup} describes the datasets, model architectures, training procedures, and evaluation metrics used in the experiments.
    \item[] Guidelines:
    \begin{itemize}
        \item The answer \answerNA{} means that the paper does not include experiments.
        \item If the paper includes experiments, a \answerNo{} answer to this question will not be perceived well by the reviewers: Making the paper reproducible is important, regardless of whether the code and data are provided or not.
        \item If the contribution is a dataset and\slash or model, the authors should describe the steps taken to make their results reproducible or verifiable. 
        \item Depending on the contribution, reproducibility can be accomplished in various ways. For example, if the contribution is a novel architecture, describing the architecture fully might suffice, or if the contribution is a specific model and empirical evaluation, it may be necessary to either make it possible for others to replicate the model with the same dataset, or provide access to the model. In general. releasing code and data is often one good way to accomplish this, but reproducibility can also be provided via detailed instructions for how to replicate the results, access to a hosted model (e.g., in the case of a large language model), releasing of a model checkpoint, or other means that are appropriate to the research performed.
        \item While NeurIPS does not require releasing code, the conference does require all submissions to provide some reasonable avenue for reproducibility, which may depend on the nature of the contribution. For example
        \begin{enumerate}
            \item If the contribution is primarily a new algorithm, the paper should make it clear how to reproduce that algorithm.
            \item If the contribution is primarily a new model architecture, the paper should describe the architecture clearly and fully.
            \item If the contribution is a new model (e.g., a large language model), then there should either be a way to access this model for reproducing the results or a way to reproduce the model (e.g., with an open-source dataset or instructions for how to construct the dataset).
            \item We recognize that reproducibility may be tricky in some cases, in which case authors are welcome to describe the particular way they provide for reproducibility. In the case of closed-source models, it may be that access to the model is limited in some way (e.g., to registered users), but it should be possible for other researchers to have some path to reproducing or verifying the results.
        \end{enumerate}
    \end{itemize}

\item {\bf Open access to data and code}
    \item[] Question: Does the paper provide open access to the data and code, with sufficient instructions to faithfully reproduce the main experimental results, as described in supplemental material?
    \item[] Answer: \answerNo{} % Replace by \answerYes{}, \answerNo{}, or \answerNA{}.
    \item[] Justification: We plan to open-source our code; it is not yet publicly available. The datasets are public, and Appendix~\ref{app:exp_setup} describes the experimental protocols.
    \item[] Guidelines:
    \begin{itemize}
        \item The answer \answerNA{} means that paper does not include experiments requiring code.
        \item Please see the NeurIPS code and data submission guidelines (\url{https://neurips.cc/public/guides/CodeSubmissionPolicy}) for more details.
        \item While we encourage the release of code and data, we understand that this might not be possible, so \answerNo{} is an acceptable answer. Papers cannot be rejected simply for not including code, unless this is central to the contribution (e.g., for a new open-source benchmark).
        \item The instructions should contain the exact command and environment needed to run to reproduce the results. See the NeurIPS code and data submission guidelines (\url{https://neurips.cc/public/guides/CodeSubmissionPolicy}) for more details.
        \item The authors should provide instructions on data access and preparation, including how to access the raw data, preprocessed data, intermediate data, and generated data, etc.
        \item The authors should provide scripts to reproduce all experimental results for the new proposed method and baselines. If only a subset of experiments are reproducible, they should state which ones are omitted from the script and why.
        \item At submission time, to preserve anonymity, the authors should release anonymized versions (if applicable).
        \item Providing as much information as possible in supplemental material (appended to the paper) is recommended, but including URLs to data and code is permitted.
    \end{itemize}

\item {\bf Experimental setting/details}
    \item[] Question: Does the paper specify all the training and test details (e.g., data splits, hyperparameters, how they were chosen, type of optimizer) necessary to understand the results?
    \item[] Answer: \answerYes{} % Replace by \answerYes{}, \answerNo{}, or \answerNA{}.
    \item[] Justification: Appendix~\ref{app:exp_setup} describes the datasets, model architectures, training procedures, and evaluation metrics used in the experiments.
    \item[] Guidelines:
    \begin{itemize}
        \item The answer \answerNA{} means that the paper does not include experiments.
        \item The experimental setting should be presented in the core of the paper to a level of detail that is necessary to appreciate the results and make sense of them.
        \item The full details can be provided either with the code, in appendix, or as supplemental material.
    \end{itemize}

\item {\bf Experiment statistical significance}
    \item[] Question: Does the paper report error bars suitably and correctly defined or other appropriate information about the statistical significance of the experiments?
    \item[] Answer: \answerYes{} % Replace by \answerYes{}, \answerNo{}, or \answerNA{}.
    \item[] Justification: The paper reports results across multiple runs with error bars that capture variability across runs for the main experiments.
    \item[] Guidelines:
    \begin{itemize}
        \item The answer \answerNA{} means that the paper does not include experiments.
        \item The authors should answer \answerYes{} if the results are accompanied by error bars, confidence intervals, or statistical significance tests, at least for the experiments that support the main claims of the paper.
        \item The factors of variability that the error bars are capturing should be clearly stated (for example, train/test split, initialization, random drawing of some parameter, or overall run with given experimental conditions).
        \item The method for calculating the error bars should be explained (closed form formula, call to a library function, bootstrap, etc.)
        \item The assumptions made should be given (e.g., Normally distributed errors).
        \item It should be clear whether the error bar is the standard deviation or the standard error of the mean.
        \item It is OK to report 1-sigma error bars, but one should state it. The authors should preferably report a 2-sigma error bar than state that they have a 96\% CI, if the hypothesis of Normality of errors is not verified.
        \item For asymmetric distributions, the authors should be careful not to show in tables or figures symmetric error bars that would yield results that are out of range (e.g., negative error rates).
        \item If error bars are reported in tables or plots, the authors should explain in the text how they were calculated and reference the corresponding figures or tables in the text.
    \end{itemize}

\item {\bf Experiments compute resources}
    \item[] Question: For each experiment, does the paper provide sufficient information on the computer resources (type of compute workers, memory, time of execution) needed to reproduce the experiments?
    \item[] Answer: \answerYes{} % Replace by \answerYes{}, \answerNo{}, or \answerNA{}.
    \item[] Justification: Appendix~\ref{app:exp_setup} reports the detailed resource requirements for all experiments.
    \item[] Guidelines:
    \begin{itemize}
        \item The answer \answerNA{} means that the paper does not include experiments.
        \item The paper should indicate the type of compute workers CPU or GPU, internal cluster, or cloud provider, including relevant memory and storage.
        \item The paper should provide the amount of compute required for each of the individual experimental runs as well as estimate the total compute. 
        \item The paper should disclose whether the full research project required more compute than the experiments reported in the paper (e.g., preliminary or failed experiments that didn't make it into the paper). 
    \end{itemize}
    
\item {\bf Code of ethics}
    \item[] Question: Does the research conducted in the paper conform, in every respect, with the NeurIPS Code of Ethics \url{https://neurips.cc/public/EthicsGuidelines}?
    \item[] Answer: \answerYes{} % Replace by \answerYes{}, \answerNo{}, or \answerNA{}.
    \item[] Justification: The research conducted in the paper adheres to the NeurIPS Code of Ethics.
    \item[] Guidelines:
    \begin{itemize}
        \item The answer \answerNA{} means that the authors have not reviewed the NeurIPS Code of Ethics.
        \item If the authors answer \answerNo, they should explain the special circumstances that require a deviation from the Code of Ethics.
        \item The authors should make sure to preserve anonymity (e.g., if there is a special consideration due to laws or regulations in their jurisdiction).
    \end{itemize}

\item {\bf Broader impacts}
    \item[] Question: Does the paper discuss both potential positive societal impacts and negative societal impacts of the work performed?
    \item[] Answer: \answerYes{} % Replace by \answerYes{}, \answerNo{}, or \answerNA{}.
    \item[] Justification: The paper discusses potential societal impacts.
    \item[] Guidelines:
    \begin{itemize}
        \item The answer \answerNA{} means that there is no societal impact of the work performed.
        \item If the authors answer \answerNA{} or \answerNo, they should explain why their work has no societal impact or why the paper does not address societal impact.
        \item Examples of negative societal impacts include potential malicious or unintended uses (e.g., disinformation, generating fake profiles, surveillance), fairness considerations (e.g., deployment of technologies that could make decisions that unfairly impact specific groups), privacy considerations, and security considerations.
        \item The conference expects that many papers will be foundational research and not tied to particular applications, let alone deployments. However, if there is a direct path to any negative applications, the authors should point it out. For example, it is legitimate to point out that an improvement in the quality of generative models could be used to generate Deepfakes for disinformation. On the other hand, it is not needed to point out that a generic algorithm for optimizing neural networks could enable people to train models that generate Deepfakes faster.
        \item The authors should consider possible harms that could arise when the technology is being used as intended and functioning correctly, harms that could arise when the technology is being used as intended but gives incorrect results, and harms following from (intentional or unintentional) misuse of the technology.
        \item If there are negative societal impacts, the authors could also discuss possible mitigation strategies (e.g., gated release of models, providing defenses in addition to attacks, mechanisms for monitoring misuse, mechanisms to monitor how a system learns from feedback over time, improving the efficiency and accessibility of ML).
    \end{itemize}
    
\item {\bf Safeguards}
    \item[] Question: Does the paper describe safeguards that have been put in place for responsible release of data or models that have a high risk for misuse (e.g., pre-trained language models, image generators, or scraped datasets)?
    \item[] Answer: \answerNA{} % Replace by \answerYes{}, \answerNo{}, or \answerNA{}.
    \item[] Justification: The paper poses no such risks.
    \item[] Guidelines:
    \begin{itemize}
        \item The answer \answerNA{} means that the paper poses no such risks.
        \item Released models that have a high risk for misuse or dual-use should be released with necessary safeguards to allow for controlled use of the model, for example by requiring that users adhere to usage guidelines or restrictions to access the model or implementing safety filters. 
        \item Datasets that have been scraped from the Internet could pose safety risks. The authors should describe how they avoided releasing unsafe images.
        \item We recognize that providing effective safeguards is challenging, and many papers do not require this, but we encourage authors to take this into account and make a best faith effort.
    \end{itemize}

\item {\bf Licenses for existing assets}
    \item[] Question: Are the creators or original owners of assets (e.g., code, data, models), used in the paper, properly credited and are the license and terms of use explicitly mentioned and properly respected?
    \item[] Answer: \answerYes{} % Replace by \answerYes{}, \answerNo{}, or \answerNA{}.
    \item[] Justification: The paper properly credits all creators and original owners of assets used, and the licenses and terms of use are explicitly mentioned and respected.
    \item[] Guidelines:
    \begin{itemize}
        \item The answer \answerNA{} means that the paper does not use existing assets.
        \item The authors should cite the original paper that produced the code package or dataset.
        \item The authors should state which version of the asset is used and, if possible, include a URL.
        \item The name of the license (e.g., CC-BY 4.0) should be included for each asset.
        \item For scraped data from a particular source (e.g., website), the copyright and terms of service of that source should be provided.
        \item If assets are released, the license, copyright information, and terms of use in the package should be provided. For popular datasets, \url{paperswithcode.com/datasets} has curated licenses for some datasets. Their licensing guide can help determine the license of a dataset.
        \item For existing datasets that are re-packaged, both the original license and the license of the derived asset (if it has changed) should be provided.
        \item If this information is not available online, the authors are encouraged to reach out to the asset's creators.
    \end{itemize}

\item {\bf New assets}
    \item[] Question: Are new assets introduced in the paper well documented and is the documentation provided alongside the assets?
    \item[] Answer: \answerNA{} % Replace by \answerYes{}, \answerNo{}, or \answerNA{}.
    \item[] Justification: The paper does not release new assets.
    \item[] Guidelines:
    \begin{itemize}
        \item The answer \answerNA{} means that the paper does not release new assets.
        \item Researchers should communicate the details of the dataset\slash code\slash model as part of their submissions via structured templates. This includes details about training, license, limitations, etc. 
        \item The paper should discuss whether and how consent was obtained from people whose asset is used.
        \item At submission time, remember to anonymize your assets (if applicable). You can either create an anonymized URL or include an anonymized zip file.
    \end{itemize}

\item {\bf Crowdsourcing and research with human subjects}
    \item[] Question: For crowdsourcing experiments and research with human subjects, does the paper include the full text of instructions given to participants and screenshots, if applicable, as well as details about compensation (if any)? 
    \item[] Answer: \answerNA{} % Replace by \answerYes{}, \answerNo{}, or \answerNA{}.
    \item[] Justification: The paper does not involve crowdsourcing nor research with human subjects.
    \item[] Guidelines:
    \begin{itemize}
        \item The answer \answerNA{} means that the paper does not involve crowdsourcing nor research with human subjects.
        \item Including this information in the supplemental material is fine, but if the main contribution of the paper involves human subjects, then as much detail as possible should be included in the main paper. 
        \item According to the NeurIPS Code of Ethics, workers involved in data collection, curation, or other labor should be paid at least the minimum wage in the country of the data collector. 
    \end{itemize}

\item {\bf Institutional review board (IRB) approvals or equivalent for research with human subjects}
    \item[] Question: Does the paper describe potential risks incurred by study participants, whether such risks were disclosed to the subjects, and whether Institutional Review Board (IRB) approvals (or an equivalent approval/review based on the requirements of your country or institution) were obtained?
    \item[] Answer: \answerNA{} % Replace by \answerYes{}, \answerNo{}, or \answerNA{}.
    \item[] Justification: The paper does not involve crowdsourcing nor research with human subjects.
    \item[] Guidelines:
    \begin{itemize}
        \item The answer \answerNA{} means that the paper does not involve crowdsourcing nor research with human subjects.
        \item Depending on the country in which research is conducted, IRB approval (or equivalent) may be required for any human subjects research. If you obtained IRB approval, you should clearly state this in the paper. 
        \item We recognize that the procedures for this may vary significantly between institutions and locations, and we expect authors to adhere to the NeurIPS Code of Ethics and the guidelines for their institution. 
        \item For initial submissions, do not include any information that would break anonymity (if applicable), such as the institution conducting the review.
    \end{itemize}

\item {\bf Declaration of LLM usage}
    \item[] Question: Does the paper describe the usage of LLMs if it is an important, original, or non-standard component of the core methods in this research? Note that if the LLM is used only for writing, editing, or formatting purposes and does \emph{not} impact the core methodology, scientific rigor, or originality of the research, declaration is not required.
    %this research? 
    \item[] Answer: \answerNA{} % Replace by \answerYes{}, \answerNo{}, or \answerNA{}.
    \item[] Justification: The paper does not involve LLMs as any important, original, or non-standard components.
    \item[] Guidelines:
    \begin{itemize}
        \item The answer \answerNA{} means that the core method development in this research does not involve LLMs as any important, original, or non-standard components.
        \item Please refer to our LLM policy in the NeurIPS handbook for what should or should not be described.
    \end{itemize}

\end{enumerate}